\providecommand{\M}[1]{\mathbf#1}
\providecommand{\mc}[1]{\mathcal#1}
\providecommand{\mc}[1]{\mathcal#1}
\newcommand{\R}{{\mathbb R}}
\DeclareMathOperator{\E}{\mathbf{E}}
\DeclareMathOperator{\p}{\mathbf{P}}
\DeclareMathOperator{\tr}{tr}
\providecommand{\T}{\top} 
\DeclareMathOperator*{\argmin}{argmin}
\DeclareMathOperator*{\argmax}{argmax}
\providecommand{\wt}[1]{\widetilde{#1}}
\providecommand{\wh}[1]{\widehat{#1}}
\providecommand{\norm}[1]{\left \lVert#1 \right \rVert}
\providecommand{\nnorm}[1]{ \lVert#1 \rVert}
\newcommand{\scp}[2]{\left\langle#1, #2\right\rangle}
\newcommand{\nscp}[2]{\langle#1, #2\rangle}
\newcommand{\blanco}[1]{  }
\newcommand{\deriv}[3]{%
\ifthenelse{#1 = 1}{\frac{d\,#2}{d\,#3}}{\frac{d^{{#1}} #2}{d{#3}^{{#1}}}}
}
\newcommand{\partials}[3]{%
\ifthenelse{#1 = 1}{\frac{\partial\,#2}{\partial\,#3}}{\frac{\partial^{#1}
    #2}{\partial#3^{#1}}}
} 
\def\su{\sum_{i=1}^n}
\def \coloneq{\mathrel{\mathop:}=}
\def \invcoloneq{=\mathrel{\mathop:}}
\def \eps{\varepsilon}
\newcommand{\tcb}[1]{\textcolor{black}{#1}}
\newtheorem{theo}{Theorem}
\newtheorem{propo}{Theorem}
\newtheorem{lemmaA}{Theorem}[section]
\newtheorem{lemmachen}{Theorem}
\newtheorem{theorm}[theo]{Theorem}
\newtheorem{prop}[propo]{Proposition}
\newtheorem{lemma}[lemmachen]{Lemma}
\newtheorem{lemmaApp}[lemmaA]{Lemma}
\renewenvironment{proof}{\par\noindent{\bf Proof\ }}{\hfill\BlackBox\\[2mm]}
\def\R{\mathbb{R}}
\def\tr{\mathrm{tr}}
\DeclareMathOperator{\srank}{srank}
\DeclareMathOperator{\pre}{\emph{\texttt{P}}}
\def\eps{\boldsymbol{\epsilon}}
\def\epss{\varepsilon}
\begin{document}

\title{ Two-Stage Approach to Multivariate Linear Regression with\\ Sparsely Mismatched Data}

\author{\name Martin Slawski\email mslawsk3@gmu.edu\\ \addr{George Mason University and Baidu Research} \\
         \name Emanuel Ben-David\email emanuel.ben.david@census.gov\\ \addr{U.S. Census Bureau}  \\
       \name Ping Li\thanks{Corresponding author.}\email liping11@baidu.com\\ \addr{Baidu Research}
       }

\editor{\\\\\\}

\maketitle

\begin{abstract} 

A tacit assumption in linear regression is that (response, predictor)-pairs correspond  to identical observational units. A series of recent works have studied scenarios in which this assumption is violated under terms such as ``Unlabeled Sensing and ``Regression with Unknown Permutation''. In this paper, we study the setup of multiple response variables and a notion of mismatches that generalizes permutations in order to allow for missing matches as well as for one-to-many matches. A two-stage method is proposed under the assumption that most pairs are correctly matched. In the first stage, the regression parameter is estimated by handling mismatches as contaminations, and subsequently the generalized permutation is estimated by a basic variant of matching. The approach is both computationally convenient and equipped with favorable statistical guarantees. Specifically, it is shown that the conditions for permutation recovery become considerably less stringent as the number of responses $m$ per observation increase. Particularly, for $m = \Omega(\log n)$, the required signal-to-noise ratio no longer depends on the sample size $n$. Numerical results on synthetic and real data are presented to support the main findings of our analysis. 
\end{abstract}

\vspace{0.05in}
\section{Introduction}\label{sec:intro}

\vspace{0.1in}

Linear regression and its numerous extensions is an object of timeless interest in statistics and related disciplines. Continuous research efforts are being made to increase the range of situations in which it can be applied with success. A specific challenge that has attracted considerable interest recently is regression in the absence of correspondence between predictors and responses, i.e., both are given as separate samples $\mc{X} = \{\M{x}_i \}_{i = 1}^n$ and $\mc{Y} = \{ \M{y}_i \}_{i = 1}^n$, but it is not (fully) known a priori which elements from $\mc{X}$ and $\mc{Y}$ are matching pairs in the sense of belonging to the same observational unit. Motivated by a number of applications in engineering, regression in this setting has been discussed in a series of recent papers~\citep{Emiya2014, Unnikrishnan2015, Pananjady2016, Pananjady2017, Abid2017, Hsu2017, Haghighatshoar2017, Shi2018, Wang2018, Domankic2018, Tsakiris2018, TsakirisICML19}. On the other hand, the above setup has a long history in statistics under the term ``Broken Sample Problem''  dating back to the early 1970s~\citep{DeGroot1971, Goel1975, DeGroot1976, DeGroot1980, Bai05, Wu1998, Chan2001} and a related line of research involving record linkage and statistical analysis based on merged data files (e.g.,~\cite{Neter65, Lahiri05, Goel2012, Scheuren93, Scheuren97}) partially motivated by government agencies like the U.S.~Census Bureau that routinely combines data from multiple surveys and/or external data to address questions of interest. In this context, the primary interest is in the estimation of parameters (e.g., covariance matrix, regression coefficients, $\ldots$) rather than restoration of the correspondence between elements of $\mc{X}$ and $\mc{Y}$. Instead, the focus is on the adjustment of subsequent analyses for potential mismatches resulting from errors or ambiguities in record linkage based on quasi-identifiers. In fact, unique identifiers such as the social security number often need to be removed because of privacy concerns. Accordingly, in an alternative perspective on the broken sample problem, identification of matching pairs in $\mc{X}$ and $\mc{Y}$ is undesired because $\mc{Y}$ contains sensitive data, but an adversary makes the attempt to use external data along with identifying information stored in $\mc{X}$ to retrieve matching pieces in $\mc{Y}$. Well-known instances of such ``linkage attacks''  are the identification of the medical history of the former governor of Massachusetts~\citep{Sweeney2001} and the partial de-anonymization of Netflix movie rankings with the help of publicly available data in the Internet Movie Database (IMDb)~\citep{Narayanan2008}. Broken sample problems thus bear a relationship to data confidentiality; we refer to~\cite{Domingo2016} for a detailed discussion.\\

\vspace{-0.05in}

\noindent \emph{Related Work.} A starting point of recent research on the subject is the work by~\cite{Unnikrishnan2015} which
studies linear regression in the absence of noise with a scalar response that is observed up to an unknown permutation of
the entries, i.e., $y_i = \M{x}_{\pi^*(i)}^{\T} \beta^*$, $i=1,\ldots,n$, for a permutation $\pi^*$ on $\{1,\ldots,n\}$. The authors show that $\beta^* \in \R^d$ can be recovered with probability one by exhaustive enumeration
over all permutations if $n \geq 2d$ and the entries of $X$ are drawn i.i.d.~from a distribution absolutely continuous w.r.t.~the Lebesgue measure on $\R$. Alternative proofs of this result have been obtained in~\cite{Tsakiris2018b, Domankic2018}.~\cite{Pananjady2016} study computational and statistical limits of recovering $\pi^*$ for Gaussian $\{ \M{x}_i \}_{i = 1}^n$ and Gaussian additive noise with variance $\sigma^2$. The authors show that least squares estimation recovers $\pi^*$ exactly if the signal-to-noise ratio $\textsf{SNR} = \nnorm{\beta^*}_2^2 / \sigma^2 = n^{\Omega(1)}$ which is also shown to be sharp up to a constant factor in the exponent. At the same time, least squares estimation of $\pi^*$ is proved to be \textsf{NP}-hard.~\cite{Abid2017, Hsu2017} shed light on the estimation of $\beta^*$ under similar setups as in~\cite{Pananjady2016}. Specifically,~\cite{Hsu2017} establish that the requirement $\textsf{SNR} = \Omega(d / \log \log n)$ is necessary to ensure low relative squared $\ell_2$-estimation error which
is a dramatic gap compared to the requirement $\textsf{SNR} = \Omega(d / n)$ if $\pi^*$ is known. The paper~\cite{Abid2018} proposes Expectation-Maximization (EM) schemes to tackle the least squares problem for
estimation of $\pi^*$. A clever initialization strategy for those schemes based on algebraic considerations is developed in~\cite{Tsakiris2018}. The paper~\cite{SlawskiBenDavid2017} assumes that $\pi^*$ is
$k$-sparse, i.e., $\pi^*(i) = i$ except for $k \ll n$ indices, and analyzes a convex formulation for
estimating $\beta^*$ in this setting. A similar sparsity assumption is employed in~\cite{Shi2018} for
spherical regression. Order-constrained regression problems with unknown permutation are
discussed in~\cite{Flammarion16, Weed2018, Carpentier2016, Ma2020}. 

\vspace{0.1in}

\noindent \emph{Contributions.} While several papers have elucidated important aspects of linear regression with unknown permutation for a scalar response, only few papers~\citep{Pananjady2017, ZhangSlawskiLi2019, SlawskiRahmaniLi2018}
consider multivariate response, i.e., the $\{\M{y}_i \}_{i = 1}^n$ are $m$-dimensional, $m > 1$. This case is of independent interest for at least two reasons. First, in the context of record linkage it is natural to assume that both data sets $\mc{X}$ and $\mc{Y}$ to be merged are multi-dimensional. Second, the availability of multiple responses affected by the same permutation is expected to facilitate estimation as is confirmed by the results herein. Indeed, the requirements on the $\textsf{SNR}$ to achieve permutation recovery can be considerably weaker, with potential drops from $\textsf{SNR} = n^{\Omega(1)}$ for
$m = O(1)$ to $\textsf{SNR} = \Omega(1)$ for $m = \Omega(\log n)$. Similar benefits are shown in~\cite{Pananjady2017, ZhangSlawskiLi2019, SlawskiRahmaniLi2018}. The results in~\cite{Pananjady2017} concern the prediction or denoising error rather than estimation
of $\pi^*$.~\cite{ZhangSlawskiLi2019} provide information-theoretic lower bounds for permutation recovery; however, the  computational scheme therein is only investigated empirically without theoretical support.  The method in~\cite{SlawskiRahmaniLi2018} requires $m \gtrsim d$ to perform well; another downside of the approach is its cubic runtime in $n$. None of the aforementioned papers on the case $m > 1$ contain rigorous results regarding the estimation of the regression parameter. In order to enable the latter, the tolerable number of mismatches $k$ herein is limited to a sufficiently small fraction of the number of samples, i.e., $k/n < c$ for $c$ small enough. In this regime, estimation of the regression coefficients and restoration of the correct correspondence is shown to be possible based on convex optimization.

Moreover, we consider a more general notion of faulty correspondence between $\mc{X}$ and
$\mc{Y}$ which goes beyond permutations, specifically allowing for missing matches  and one-to-many matches. The effectiveness of the approach is demonstrated
by experiments on synthetic and real data sets \tcb{as well as a case study pertaining to data integration}.

\vspace{0.1in}

\noindent \emph{Outline.} In $\S$\ref{sec:probstatement}, we state the problem and setting under consideration as well as the approach taken. Our main theoretical results are presented in $\S$\ref{sec:mainresults}. Empirical corroboration based on synthetic and real data is provided in $\S$\ref{sec:experiments}. We conclude with a summary and an overview on potential directions of future research in $\S$\ref{sec:conclusion}.      

\vspace{0.1in}

\noindent \emph{Notation.} The symbol $\mathbb{I}$ is used for the indicator function with value one if its argument is true and zero else. For a positive integer $\ell$, $I_{\ell}$ denotes the
$\ell \times \ell$ identity matrix, and $\mathbb{S}^{\ell-1}$ denotes the unit sphere in $\R^{\ell}$. We write $|S|$ for the cardinality of a set $S$. The complement
of $S$ with respect to context-dependent base sets is denoted by $S^c$, and $\text{conv} \, S$ denotes the convex hull of $S$. For a matrix $A$, $\nnorm{A}_2 = \sigma_{\max}(A)$ denotes its spectral norm respectively maximum singular value, $\nnorm{A}_F$ denotes its Frobenius norm, and $\text{range}(A)$ denotes the column space of $A$.
The $i$-th row of $A$ is denoted by $A_{i,:}$, and is treated as column vector. For an index set $I$ and a vector $v$ of real numbers, $v_{I}$ denotes the subvector corresponding to $I$. We write 
$a \vee b = \max\{a,b\}$ and $a \wedge b = \min\{a,b \}$. Positive constants are denoted by $C$, $c$, $c_1$ etc. We make use of the usual Big-O notation in terms of $O$, $o$, $\Omega$ and $\Theta$. We often use $a \lesssim b$, $b \gtrsim a$, and $a \asymp b$ as shortcuts for $a = O(b)$, $b = \Omega(a)$ and
$a = \Theta(b)$, respectively.

\section{Problem statement and proposed approach}\label{sec:probstatement}
We start by fixing the setup under consideration herein before outlining our approach. We then provide a toy data example in order to illustrate some of the main challenges and characteristics of the
given problem and the proposed approach.  
\vskip1ex
\subsection{Setup}
As stated in the introduction, we assume that we are given two samples $\mc{X} = \{ \M{x}_i \}_{i = 1}^n$ and
$\mc{Y} = \{\M{y}_i \}_{i = 1}^n$ taking values in $\R^d$ and $\R^m$, respectively, that are related by the model 
\begin{equation}\label{eq:limo_mis}
\mathfrak{s}_i  \M{y}_i = B^{*\T} \M{x}_{\theta^*(i)} + \tcb{\sigma \mathfrak{s}_i \eps_{i}}, \quad 1 \leq i \leq n, 
\end{equation}
where $\theta^*: \{1,\ldots,n\} \rightarrow \{0, 1,\ldots,n\}$ is a map representing
the (unknown) underlying correspondence between observations in $\mc{X}$ and $\mc{Y}$, with
the convention that $\M{x}_{0} \coloneq 0$, and
$\mathfrak{s}_i = \mathbb{I}(\theta^*(i) \neq 0)$ indicates whether $\M{y}_i$ has a match among $\mc{X}$,
$1 \leq i \leq n$. \tcb{For the set of non-matches $\mc{N} = \{i: \,  \mathfrak{s}_i = 0 \}$, we suppose that $\{ \M{y}_i \}_{i \in \mc{N}}$ is independent of $\mc{X}$}. 

If $\theta^*(i) = i$ for $1 \leq i \leq n$, the above model reduces to an ordinary multivariate regression model
with $m$ responses and $d$ predictor variables, regression coefficients $B^* \in \R^{d \times m}$,
and random error variables $\{ \eps_i \}_{i = 1}^n$. Model~\eqref{eq:limo_mis} can be expressed
equivalently via
\begin{equation}\label{eq:limo_mis_c}
\tcb{\mc{S} Y = \Theta^* X B^*  + \sigma \mc{S} E},
\end{equation}
where $Y$ and $E$ are $n$-by-$m$ matrices whose rows are given by $\{ \M{y}_i^{\T} \}$ and $\{ \eps_i^{\T} \}$, respectively, $\mc{S} = \text{diag}(\mathfrak{s}_1, \ldots, \mathfrak{s}_n)$, $X$ is an $n$-by-$d$ matrix with rows $\{ \M{x}_i^{\T} \}_{i = 1}^n$, and $\Theta^* = (\Theta^*_{ij})_{1 \leq i, j \leq n}$ has entries $\Theta^*_{ij} = 1$ if $\theta^*(i) = j$ for $j \neq 0$, and zero otherwise. Observe that by construction, $\Theta^*$ is contained in the following set of matrices
\begin{align}
  \mc{M}& = \Big\{\Theta \in \R^{n \times n}:\; \Theta_{ij} \in \{0,1\}, \, 1 \leq i,j \leq n, \; \textstyle\sum_j \Theta_{ij} \leq 1 , \, 1 \leq i \leq n \Big\} \label{eq:matchings} \\
  &\supset \mc{P} = \{\Theta \in \R^{n \times n}:\; \Theta^{\T} \Theta  = I_n, \; \Theta_{ij} \in \{0,1\}, \, 1 \leq i,j \leq n  \} \label{eq:permutations},
\end{align}
which contains the set of $n$-by-$n$ permutation matrices $\mc{P}$ in~\eqref{eq:permutations}. Model~\eqref{eq:limo_mis} is hence more general compared to existing work in which $\theta^*$ is restricted to be a permutation. \tcb{In particular, the generalization herein allows for
missing matches via $\Theta^*_{i,:} = 0$ for $i \in \mc{N}$}, as well as for one-to-many matches, i.e., more than one element in $\mc{Y}$
may correspond to the same element in $\mc{X}$; cf.~Figure~\ref{fig:genpermutation} for an illustration. \tcb{We note that the case of one-to-many matches is also considered in~\cite{Pananjady2017}, cf.~Section 2.4 therein.} 

\begin{figure}[h!]
\begin{flushleft}
\hspace*{-1ex}\begin{minipage}{0.08\textwidth}
 $\begin{array}{l}  
  \{y_i \}_{i = 1}^n \\[5ex]
  \{ \M{x}_i \}_{i = 1}^n   
 \end{array}$  
\end{minipage}
\hspace*{.5ex}
\begin{minipage}{0.25\textwidth}
\centering \includegraphics[height = 0.08\textheight]{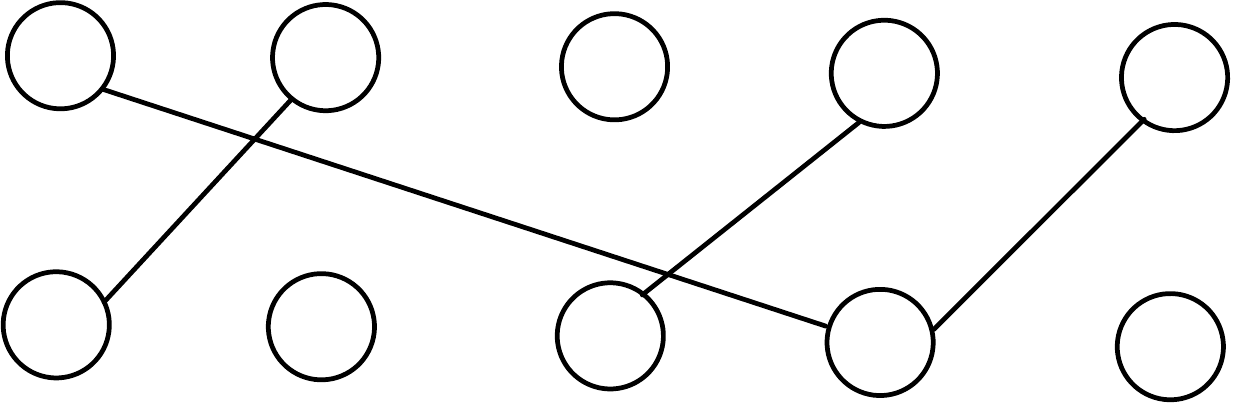}
\end{minipage}
\hspace*{12ex}
\begin{minipage}{0.5\textwidth}
  {\small \begin{equation*}
    \Theta^* = \begin{bmatrix}
      0 & 0 & 0 & 1 & 0 \\
      1 & 0 & 0 & 0 & 0 \\
      0 & 0 & 0 & 0 & 0 \\
      0 & 0 & 1 & 0 & 0 \\
      0 & 0 & 0 & 1 & 0
    \end{bmatrix}, \;\,
    \mc{S} = \begin{bmatrix}
      1 & 0 & 0 & 0 & 0 \\
      0 & 1 & 0 & 0 & 0 \\
      0 & 0 & 0 & 0 & 0 \\
      0 & 0 & 0 & 1 & 0 \\
      0 & 0 & 0 & 0 & 1
    \end{bmatrix}
  \end{equation*}}
\end{minipage}
\end{flushleft}
\vspace{-0.2in}
\caption{Illustration of the generalized permutation model herein for $n = 5$ including a missing match ($\M{y}_3$) and a one-to-many match between $(\M{y}_1, \M{y}_5)$ and $\M{x}_4$.}\label{fig:genpermutation}\vspace{-0.2in}
\end{figure}

Depending on the application, the goals in the setup~\eqref{eq:limo_mis} concern estimation of $B^*$ and/or $\Theta^*$. If $\Theta^*$ is recovered exactly by an estimator $\wh{\Theta}$, i.e., the event $\{ \wh{\Theta} = \Theta^* \}$ occurs, estimation of $B^*$ becomes an ordinary regression problem. In post-linkage data analysis, $\Theta^*$ can be used to
model error in the file linkage process, caused, e.g., by ambiguities resulting from the use of quasi-identifiers (say, the combination of age, gender, and race), but is typically treated as a nuisance parameter while primary interest concerns $B^*$. By contrast, in the setting of linkage attacks, the adversary aims at leveraging the linear
relationship between elements of $\mc{X}$ and $\mc{Y}$, and hence $B^*$ is only regarded as a means to retrieve $\Theta^*$. In the sequel, we adopt neither viewpoint and consider estimation of both $B^*$ and $\Theta^*$.  \\

\noindent \emph{Assumptions.} Below, we summarize and discuss the main assumptions of our analysis.
\begin{itemize}
\item The map $\theta^*$ is said to be $k$-sparse if $\theta^*(i) = i$ except for indices
  $S_* \subset \{1,\ldots,n\}$ with $|S_*| \leq k$ for $k \ll n$. Equivalently, $S_* = \{i: \, \Theta_{ii}^* \neq 1 \}$. Model~\eqref{eq:limo_mis_c} implies that
  \begin{equation}\label{eq:contamination}
Y =  X B^*  + \Phi^* + \sigma \mc{S} E, 
\end{equation}
where $\Phi^*_{i,:} = \M{y}_i - B^{*\T} \M{x}_{i}$ if $\theta^*(i) = 0$ and $\Phi^*_{i,:} = B^{*\T} \M{x}_{\theta^*(i)} - B^{*\T} \M{x}_{i}$ otherwise, $1 \leq i \leq n$.\blanco{with some slight abuse of notation, we replace $\Theta^* E$ by $\mc{S} E$, which follows the same distribution.} Observe that $k$-sparsity of $\theta^*$ implies that $\Phi^*$ has at most $k$ non-zero rows. Throughout this paper, we shall impose
constraints on the size of $k$. As of now, if $\sigma > 0$ and $k$ is not restricted, no practical estimation scheme with provable guarantees
is known even if $\theta^*$ is a permutation. \tcb{Apart from that, the sparse regime is relevant to applications in record linkage as elaborated in detail in the case study in $\S$\ref{sec:experiments}.}    
\item The matrix $X$ has i.i.d.~Gaussian rows $\M{x}_i \sim N(0, \Sigma)$, $1 \leq i \leq n$. Without loss of generality, we   
  assume that $\Sigma = I_d$ as can be ensured by re-defining $B^*$ accordingly.
\item Likewise, the matrix $E$ has i.i.d.~Gaussian rows $\eps_i \sim N(0, I_m)$, $1 \leq i \leq n$, and is
      independent of $X$. 
\end{itemize}
The second assumption and the first part of the third assumption do not appear critical to our approach, but they considerably
simplify results and proofs and thus aid presentation. \tcb{The main results in this
paper continue to hold for $X$ and $E$ with i.i.d.~sub-Gaussian rows up to slight modifications, cf.~Appendix~\ref{app:tosubgaussian}}. Moreover, it is common to assume 
that the $m$ entries of the noise terms $\{ \eps_i \}_{i = 1}^n$ are correlated; such extension can be accommodated, too.

Finally, we note that representation~\eqref{eq:contamination} is general enough to cover various
other scenarios involving mismatched data in regression. For example, it also applies if a subset of the predictors  
is collected jointly with the response, i.e., we observe samples $\mc{D}_1 = \{ (\M{x}_i^{(1)}, \M{y}_i) \}_{i = 1}^n$ and $\mc{D}_2 = \{ \M{x}_i^{(2)} \}_{i = 1}^n$ with $\{ \M{x}_i^{(1)} \}_{i = 1}^n$ and  $\{ \M{x}_i^{(2)} \}_{i = 1}^n$ having dimension $d_1$ and $d_2$, respectively,
$d_1 + d_2 = d$, and associated regression model
\begin{equation}\label{eq:covariates_with_response}
\M{y}_i = B_{(1)}^{*\T} \M{x}_{i}^{(1)} + B_{(2)}^{*\T} \M{x}_{\theta^*(i)}^{(2)} + \sigma \eps_i, \; i=1,\ldots,n, 
\end{equation}
where $\theta^*$ is a permutation of $\{1,\ldots,n\}$. Here, model~\eqref{eq:covariates_with_response}
is subsumed by~\eqref{eq:contamination} by setting $B^* = \left[ \begin{array}{c} B_{(1)}^* \\[.2ex] B_{(2)}^* \end{array} \right]$, $\Phi^*_{i,:} =  B_{(2)}^{*\T} \M{x}_{\theta^*(i)}^{(2)} - B_{(2)}^{*\T} \M{x}_{i}^{(2)}$, $1 \leq i \leq n$, and $\mc{S} = I_n$. 
The approach and its analysis
below applies to this and presumably also to other modifications with slight changes.  

\subsection{Approach}
We suggest to tackle estimation of $B^*$ and $\Theta^*$ in a two-stage approach that we motivate as follows. \tcb{Suppose first that there are no missing matches so that $\sum_{j} \Theta_{ij}^* = 1$, $1 \leq i \leq n$, and denote by $\overline{\mc{M}}$ the corresponding subset of $\mc{M}$ that excludes matrices with all-zero rows.} Joint
least squares estimation, i.e., $\min_{\Theta \in \overline{\mc{M}}, \, B \in \R^{d \times m}} \nnorm{Y - \Theta X B}_F^2$, is \textsf{NP}-hard
\citep{Pananjady2016}. However, if $B^*$ is known, least squares estimation of $\Theta^*$ reduces to a tractable
optimization problem that decouples along the rows of $Y$:
\begin{equation}\label{eq:Theta_oracle_p}
  \min_{\Theta \in \overline{\mc{M}}} \nnorm{Y - \Theta X B^*}_F^2 = \su \left\{ \min_{1 \leq j \leq n} \nnorm{\M{y}_i - B^{*\T} \M{x}_{j}}_2^2 \right \}.   
\end{equation}
\blanco{where we recall that $\M{x}_{0} = 0$.}Assuming for simplicity that the minimizing indices $\wh{j}(i)$ for the optimization problems
inside the curly brackets are unique, we have $\wh{\Theta}_{i \wh{j}(i)} = 1$\blanco{if $\wh{j}(i) \neq 0$}, $1 \leq i \leq n$; all other entries
of $\wh{\Theta}$ equal zero. \tcb{If in addition} $\theta^*$ is known to be one-to-one (i.e., a permutation), minimization over $\mc{M}$ can be replaced by minimization over $\mc{P}$~\eqref{eq:permutations}. The latter optimization problem reduces to a linear assignment problem~\citep{Burkard2009}, a specific linear program that can be solved efficiently by specialized techniques such as the Hungarian Algorithm~\citep{Kuhn1955} or the Auction Algorithm~\citep{Bertsekas1992}. 

\tcb{In the case of missing matches, taking the minimum in~\eqref{eq:Theta_oracle_p} over $\mc{M}$ instead of over $\overline{\mc{M}}$ cannot be expected to ensure the successful identification of missing matches. In fact, a row 
of zeroes in $\Theta$ means that the corresponding row of $Y$ is paired with the zero vector rather than with any of the
$\{ B^{*\T} \M{x}_j \}_{j = 1}^n$, but the use of the zero vector as a reference for missing matches is not meaningful. This observation prompts the following modification of~\eqref{eq:Theta_oracle}:
\begin{equation}\label{eq:Theta_oracle}
\text{Compute} \, \min_{1 \leq j \leq n} \nnorm{\M{y}_i - B^{*\T} \M{x}_{j}}_2^2 \, \text{;$\;\,$set}\, \, \wh{\Theta}_{ij} =\begin{cases} 1 \;\; &\text{if} \, j = \wh{j}(i) \, \text{and} \, \nnorm{\M{y}_i - B^{*\T} \M{x}_{\wh{j}(i)}}_2 \leq \tau,\\
0 \; \; &\text{otherwise},\;\;\, \qquad \qquad \quad 1 \leq i,j \leq n,
\end{cases}
\end{equation}
where  $\{ \wh{j}(i) \}_{i = 1}^n$ are the minimizing indices as above, and $\tau > 0$ is a suitably chosen threshold whose choice is discussed in Theorem~\ref{theo:correspondence} below.}\\

So far, $B^*$ was supposed to be known. If $B^*$ is unknown, it has to be replaced by an estimator $\wh{B}$. At this point, our approach makes use of the sparsity assumption for $\theta^*$. In view of relation~\eqref{eq:contamination}, we consider
\begin{equation}\label{eq:grouplasso}
\min_{B \in \R^{d \times m}, \, \Xi \in \R^{n \times m}} \frac{1}{2n \cdot m}\nnorm{Y - XB - \sqrt{n} \Xi}_F^2 + \lambda \su \nnorm{\Xi_{i,:}}_2,
\end{equation}
for a tuning parameter $\lambda > 0$, where $\Xi$ targets $\Xi^* \coloneq \Phi^* / \sqrt{n}$ with $\Phi^*$ as in~\eqref{eq:contamination}, and $\nnorm{\Xi_{i,:}}_2$ being used as a convex surrogate for $\mathbb{I}(\nnorm{\Xi_{i,:}}_2 > 0)$, $1 \leq i \leq n$, in order
to promote row-wise sparsity of $\Xi$~\citep{YuanLin2006, Eldar2009, Lounici2011}. The use of the re-scaled quantity $\Xi^*$ in place
of $\Phi^*$ is done merely for technical reasons. We note that a variant of~\eqref{eq:grouplasso} for a single response variable has
been employed in the context of linear regression with outliers~\citep{She2012, Laska2009, Nguyen2013}.

\begin{algorithm}[!h]
  \caption{Block coordinate descent for minimizing~\eqref{eq:grouplasso}}\label{alg:bcd}
Compute the QR factorization $X = Q R$ of $X$, and initialize $X B^{(0)} = Q Q^{\T} Y$, $\Xi^{(0)} \equiv 0$.\\
\textbf{1. Update for  $\Xi$}
\begin{align*}
  \Xi^{(t+1)} &\leftarrow (1 - \alpha^{(t)}) \Xi^{(t)} + \alpha^{(t)} \textsc{GroupThreshold}(Y - X B^{(t)}, \tau)/\sqrt{n}, \qquad \tau \coloneq m \cdot \sqrt{n} \cdot \lambda,  
\end{align*}
where for a matrix $A$ with rows $\{ a_i \}_{i = 1}^n$ and $\eta \geq 0$, $\text{\textsc{GroupThreshold}}(A, \eta)$ is defined by 
\begin{equation*}
a_{i} \leftarrow a_{i} \cdot \left(1 - \eta/\nnorm{a_{i}}_2 \right)_+, \; i=1,\ldots,n, \qquad (\cdot)_+ \coloneq \max\{\cdot,0\}.  
\end{equation*}
\\[-3ex]
\textbf{2. Update for $X B$}:
\begin{equation*}
X B^{(t+1)} \leftarrow (1 - \gamma^{(t)}) X B^{(t)} + \gamma^{(t)} Q Q^{\T} (Y - \sqrt{n} \Xi^{(t+1)}). 
\end{equation*}
The step sizes $\alpha^{(t)}, \gamma^{(t)} \subset (0,1)$ are chosen by back-tracking line search~\citep{Bertsekas1999}.
\end{algorithm}

Optimization problem
\eqref{eq:grouplasso} can be solved efficiently by block coordinate descent as outlined in Algorithm~\ref{alg:bcd} that has performed
extremely well throughout our experiments, typically converging after a small number of iterations. Formal convergence results follow
immediately from the general framework in~\cite{Tseng2010}.

The estimator $\wh{B}$ resulting from~\eqref{eq:grouplasso} can potentially be refined by a least squares re-fitting step after removing data corresponding to $\wh{S}(t) = \{1 \leq i \leq n:\, \nnorm{\wh{\Xi}_{i,:}}_2 \geq t \}$, where $\wh{\Xi}$ denotes the
minimizing $\Xi$ in~\eqref{eq:grouplasso} and $t$ is a suitably chosen threshold. The rationale is to remove mismatches as they
hamper parameter estimation. This yields
\begin{equation}\label{eq:refitting}
\min_{B \in \R^{d \times m}} \sum_{i \notin \wh{S}(t)} \nnorm{\M{y}_i - B^{\T} \M{x}_i}_2^2.
\end{equation}
In summary, this yields the following two-stage (or optionally three-stage) approach for estimating
$B^*$ and subsequently $\Theta^*$.
\vspace*{-0.1ex}
\begin{enumerate}
\item Estimate $B^*$ from~\eqref{eq:grouplasso}, and optionally refine via~\eqref{eq:refitting}. 
\item Estimate $\Theta^*$ from~\eqref{eq:Theta_oracle} with $B^*$ replaced by the estimator
      obtained in 1. 
    \end{enumerate}
\tcb{It is worth pointing out that sparsity of $\Theta^*$ is incorporated at step 1.~only. The procedure~\eqref{eq:Theta_oracle} can be modified accordingly by applying it only for the indices corresponding to the $k$ largest values among $\{ \nnorm{\M{y}_i - B^{*\T} \M{x}_i}_2^2 \}_{1 \leq i \leq n}$, and setting $\wh{\Theta}_{ii} = 1$ for all remaining $i$. 
We do not study this modification in the sequel since it does not fundamentally change the statistical limits in recovering $\Theta^*$ as stated in Theorem~\ref{theo:correspondence} below.} \\

\begin{figure}[h!!]
  {\scriptsize  \begin{tabular}{|lllllll|lllllll|}
                  \hline
                  Jan &  Mar &   May &  Jul &  Sep &  Nov & $\mc{X}$ & $\mc{Y}$ & Feb &   Apr &  Jun &   Aug &  Oct &  Dec  \\
                  \hline
    16  &  33  &  59  &  74  &  62  &  34 & Minneapolis &  Memphis &  46 &   63 &   80 &   82 &   64  &  44 \\
    -8  &  12  &  50  &  63  &  45  &   3 & Fairbanks & San Antonio & 56  &  70  &  83  &  85  &  71   & 53 \\
     1  &  54  &  72  &  83  &  75  &  53 & Memphis & Fairbanks & -1  &  33  &  61  &  57  &  24   & -4 \\
    34  &  44  &  64  &  78  &  68  &  47 & Baltimore & Dallas & 50  &  66  &  81  &  86  &  68   & 47 \\
    46  &  58  &  74  &  86  &  78  &  57 & Dallas & Tampa & 63  &  72  &  82  &  83  &  76   & 63 \\
    23  &  35  &  56  &  72  &  63  &  39 & Milwaukee$^*$ & Pittsburgh &  31 &   51 &   69 &   72 &   53  &  33 \\
    61  &  67  &  78  &  83  &  82  &  69 & Tampa & Minneapolis & 21  &  48  &  69  &  71  &  49   & 20 \\
    29  &  40  &  60  &  73  &  64  &  43 & Pittsburgh$^*$ & Portland & 44  &  52  &  64  &  70  &  55   & 40 \\
    52  &  62  &  77  &  85  &  80  &  61 & San Antonio & Baltimore & 36  &  54  &  73  &  76  &  57   & 37 \\
                  41  &  48  &  58  &  69  &  65  &  47 & Portland & Milwaukee & 26  &  46  &  67  &  71  &  52   &  27  \\
                 \hline 
                \end{tabular}}
              \vskip1ex
              \begin{center}
                \begin{minipage}{.92\textwidth}
                  \begin{center}
              {\scriptsize \begin{tabular}{|l|llllll|}
                             \hline & & & & & & \\[-1.7ex]
               $\wh{S}$   & Baltimore \hspace*{0.0001ex} &  Dallas  &  Fairbanks  & \hspace*{0.0001ex} Las Vegas$^\dagger$  \hspace*{0.0001ex} &  Memphis  \hspace*{0.0001ex} &  \hspace*{0.0001ex} Minneapolis \hspace*{0.0001ex} \\[.5ex]
                             $\wh{\theta}(\wh{S})$ & \hspace*{0.0001ex} Milwaukee \hspace*{0.0001ex} &  Seattle  &   Fairbanks & \hspace*{0.0001ex} Dallas \hspace*{0.0001ex}  &  Baltimore \hspace*{0.0001ex} & \hspace*{0.0001ex} Minneapolis \hspace*{0.0001ex} \\
                             \hline
                           \end{tabular}\\[.7ex]
                           continued:\\[1ex]
                           \begin{tabular}{|l|llllll|}
                             \hline &&&&&&\\[-1.7ex]
                             $\wh{S}$    &  Phoenix & Portland & San Antonio & San Francisco$^{\dagger}$ & Seattle$^{\dagger}$ & Tampa \\[.5ex]                                    $\wh{\theta}(\wh{S})$  &Las Vegas  & Memphis & Phoenix &  San Francisco & San Antonio & Tampa \\
                             \hline    \end{tabular}}
                       \end{center}
                     \end{minipage}
\end{center}
\vspace*{-2.35ex}
\caption{Top: mismatched subset of the U.S.~cities temperatures data set. Bottom: estimated subset of mismatched cities $\wh{S}$ and estimated
  correspondence $\wh{\theta}(\wh{S})$. Asterisked cities Milwaukee and Pittsburgh did not end up included in $\wh{S}$ since the misfit 
  resulting from shuffling happened not to be substantial enough. The superscript $\dagger$ refers to cities not affected by shuffling yet included
in $\wh{S}$.}\label{fig:cities}
            \end{figure}

\noindent\emph{Illustration.} An illustration of the above approach is provided in Figure~\ref{fig:cities}. The data set
consists of monthly average temperatures of $n = 46$ U.S.~cities as reported on~\cite{Temperatures2019}. The
data set is broken into two samples $\mc{X}$ and $\mc{Y}$ with the former containing the temperatures
of the odd numbered months (January, March, $\ldots$, November) and the latter containing the temperatures
of the even numbered months. For a random subset of $k = 10$ cities, we randomly permute matching records in
$\mc{X}$ and $\mc{Y}$. Linear regression is used to predict the $m = 6$ temperatures in $\mc{Y}$ from $\mc{X}$.
Due to high correlations among predictors, we work with the top $d = 3$ principal components as regressors. In the absence of
partial data shuffling, this yields a reasonable goodness of fit overall in terms of a coefficient of determination $R^2 \approx0.73$, apart from poor
model fit for several west coast cities (Los Angeles, San Diego, Seattle and San Francisco) with mild winters
and small seasonal differences, as well as for cities in desert
regions (Las Vegas and Phoenix) with extreme temperatures during summer. After data shuffling, model fit drops
to $R^2 \approx 0.4$. The approach outlined above shows some potential in this setting. With the choice
of $\lambda = \frac{1}{3} \cdot \wh{\sigma}_0/ \sqrt{n \cdot m}$, where $\wh{\sigma}_0$ is the estimated error variance from
the regression model in the absence of partial data shuffling, we ensure $R^2 \approx0.62$. Subsequent restoration
of the correct correspondence between $\mc{X}$ and $\mc{Y}$ is restricted to observations in
$\wh{S} = \{i:\;\nnorm{\wh{\Xi}_{i,:}}_2 \geq \sqrt{2m} \wh{\sigma}_0 \}$; for all other observations, no mismatches
are assumed, i.e., $\wh{\Theta}_{ii} = 1$, $i \notin \wh{S}$. The results highlight the challenges that
are encountered in the estimation of $\Theta^*$. Most crucially, the more an observation is distinct from the rest,
the easier it is identified as mismatch and the easier to retrieve its matching counterpart, with Fairbanks here being the
most distinct instance. On the other hand, the temperature differences between Milwaukee and Pittsburgh are only marginal, and
accordingly this mismatch remains undetected. Moreover, it is hard to disentangle cities affected by shuffling and poor
fit of the linear model, respectively. Nevertheless, re-matching succeeds for three cities (Fairbanks, Minneapolis, Tampa) and
gets close in case of Phoenix $\rightarrow$ Las Vegas and San Antonio $\rightarrow$ Phoenix.\\

\noindent\tcb{\emph{Alternatives to~\eqref{eq:grouplasso}}. Formulation~\eqref{eq:grouplasso} treats mismatches in the same way as generic data contamination (outliers). A promising alternative approach if an upper bound on $k$ is known and $m = 1$ can be found in~\cite{Bhatia2017}. A direct extension of this approach to the multiple response case with row-sparse contaminations is given by
{\small \begin{equation}\label{eq:Bhatia}
\wt{B} \in \argmin_{B \in \R^{d \times m}} \nnorm{Y - \wt{\Phi} - X B}_F^2, \;\; \text{where} \; \, \wt{\Phi} \in \argmin_{\Phi \in \R^{n \times m}}  \nnorm{\texttt{P}_{X}^{\perp} (Y - \Phi)}_F^2  \; \; \text{subject to} \, \su I(\Phi_{i,:} \neq \mathbf{0}) \leq k,
\end{equation}}where $\texttt{P}_{X}^{\perp}$ denotes the projection on the orthogonal complement of $\text{range}(X)$. Following~\cite{Bhatia2017}, the rightmost optimization problem in~\eqref{eq:Bhatia} is tackled via iterative hard thresholding~\citep{Blumensath2009}, and 
the result is substituted into the leftmost optimization problem to obtain an estimator for $B^*$. In our experiments, the performance of~\eqref{eq:Bhatia} is rather similar to that of the three-stage approach~\eqref{eq:refitting}.}    

\tcb{Given that both~\eqref{eq:grouplasso} and~\eqref{eq:Bhatia} treat mismatches as generic contaminations, it is worth exploring whether the additional structure under consideration here can be leveraged for improved performance. In the following, we present two approaches that are based on optimization over the polyhedron 
\begin{equation}
 \mc{C} =  \Big\{\Theta \in \R^{n \times n}:\; \Theta_{ij} \in [0,1], \, 1 \leq i,j \leq n, \; \textstyle\sum_j \Theta_{ij} \leq 1 , \, 1 \leq i \leq n \Big\}. \label{eq:matchings_relaxed}
\end{equation}
The first proposal can be seen as an immediate refinement of~\eqref{eq:grouplasso}:
\begin{equation}\label{eq:grouplasso_refined}
\min_{\Theta \in \mc{C}} \frac{1}{2n \cdot m}\nnorm{\texttt{P}_{X}^{\perp} \Theta Y}_F^2 + \lambda \textstyle\su \nnorm{Y^{\T}(I - \ \Theta)^{\T} e_i}_2,
\end{equation}
with $\texttt{P}_{X}^{\perp}$ as defined below~\eqref{eq:Bhatia} and $\{ e_i \}_{i = 1}^n$ denoting the canonical basis of $\R^n$. Similar to~\eqref{eq:grouplasso}, the penalty in~\eqref{eq:grouplasso_refined} is motivated by the fact that $(I - \Theta^*) Y$ has only few non-zero rows.}  

\tcb{Given an upper bound on $k$, an alternative to~\eqref{eq:grouplasso_refined} is given by the optimization problem
\begin{equation}\label{eq:grouplasso_refined_cons}
\min_{\Theta \in \mc{C}} \frac{1}{2n \cdot m}\nnorm{\texttt{P}_{X}^{\perp} \Theta Y}_F^2  \; \; \text{subject to} \; \textstyle\su \Theta_{ii} \geq n-k. 
\end{equation}
Given a minimizer $\wt{\Theta}$ of~\eqref{eq:grouplasso_refined} or~\eqref{eq:grouplasso_refined_cons}, an estimate
of $B^*$ is obtained via least squares regression of $\wt{\Theta} Y$ on $X$. Both~\eqref{eq:grouplasso_refined} and~\eqref{eq:grouplasso_refined_cons} are convex problems;~\eqref{eq:grouplasso_refined_cons} is a quadratic program. In spite of this,~\eqref{eq:grouplasso_refined} and~\eqref{eq:grouplasso_refined_cons} have significant computational drawbacks compared to the approaches~\eqref{eq:grouplasso} and ~\eqref{eq:Bhatia} since the former involve $n^2$ variables and thus scale poorly with problem size. According to own experiments, state-of-the art solvers for quadratic programs such as \textsf{cplexqp} in \textsf{CPLEX}\footnote{http://www.ibm.com/us-en/marketplace/ibm-ilog-cplex} take prohibitively long to solve instances of~\eqref{eq:grouplasso_refined_cons} even for $n = 200$. In Appendix~\ref{app:congradient}, we present reasonably practical algorithms for obtaining approximate solutions of~\eqref{eq:grouplasso_refined} and~\eqref{eq:grouplasso_refined_cons} based on the conditional gradient (aka Frank-Wolfe) method~\citep{Jaggi2013}, which are also used in an empirical comparison with our primary proposal~\eqref{eq:grouplasso} in $\S$\ref{sec:experiments}. In that comparison, neither~\eqref{eq:grouplasso_refined} nor~\eqref{eq:grouplasso_refined_cons} achieve substantial improvements over~\eqref{eq:grouplasso}.}

\section{Main results}\label{sec:mainresults}
This section provides theoretical results on the approach introduced in the previous section. Theorem
\ref{theo:parameter_estimation} quantifies the error in estimating $B^*$, while recovery of the
correct correspondence in terms of $\Theta^*$ is discussed in a separate subsection. \\
\begin{theorm}\label{theo:parameter_estimation}
  Consider model~\eqref{eq:contamination} and the minimizer $(\wh{B}, \wh{\Xi})$ of~\eqref{eq:grouplasso} with $\lambda \geq 2 \lambda_0$, where
  \begin{equation}\label{eq:lambdascaling}
\lambda_0 =  \frac{\mu_{n,d} \, \sigma}{\sqrt{n \cdot m}} \left( 1 + \sqrt{\frac{4 \log n}{m}}\right), \; \; \mu_{n,d} \coloneq \Big(\textstyle\frac{n-d}{n} + \textstyle\sqrt{24 \frac{\log n}{n}} \Big) \wedge 1, 
  \end{equation}
 and suppose $d/n < 1/4$. Then for any $\epss \in (0,1/3)$, there exists constants $c_{\epss}, c_{\epss}' > 0$ so that if $k \leq c_{\epss} n / \log(n/k)$, it holds that   
  \begin{align}\label{eq:xibound}
\frac{\nnorm{\wh{\Xi}  - \Xi^*}_F}{\sqrt{m}} &\leq 2\epss^{-2} \cdot \lambda \sqrt{m} \cdot \frac{\lambda + \lambda_0}{\lambda - \lambda_0} \sqrt{k}.
\end{align}
with probability at least $1 - 2/n  - 3.5 \cdot \exp(-c_{\epss}' n)$. Furthermore,    
\begin{align*}
\frac{\nnorm{\wh{B} - B^*}_F}{\sqrt{m}} \leq \frac{1}{1 - \sqrt{\frac{4 d \vee \log n}{n}}} \, \left( \sigma \sqrt{\frac{5 (d  \vee \log(n))}{n}} + \frac{\nnorm{\wh{\Xi}  - \Xi^*}_F}{\sqrt{m}} \right) 
\end{align*}
with probability at least $1 - 2\exp(-\frac{1}{2} (d \vee \log n)) - \exp(-(d \cdot m) \vee \log(n \cdot m))$. 
\end{theorm}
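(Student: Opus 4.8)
The plan is to eliminate $B$ from \eqref{eq:grouplasso} by profiling: for fixed $\Xi$ the inner least-squares problem in $B$ is solved by $X\wh{B}(\Xi) = \texttt{P}_X(Y - \sqrt{n}\,\Xi)$, where $\texttt{P}_X$ is the orthogonal projection onto $\mathrm{range}(X)$ and $\texttt{P}_X^{\perp} = I - \texttt{P}_X$. Substituting back reduces \eqref{eq:grouplasso} to an optimization in $\Xi$ alone with data-fit term $\tfrac{1}{2nm}\nnorm{\texttt{P}_X^{\perp}(Y - \sqrt{n}\,\Xi)}_F^2$. By \eqref{eq:contamination} one has $\texttt{P}_X^{\perp} Y = \texttt{P}_X^{\perp}(\sqrt{n}\,\Xi^* + \sigma \mc{S} E)$ because $\texttt{P}_X^{\perp} X B^* = 0$, so writing $\Delta \coloneq \wh{\Xi} - \Xi^*$ and comparing the objective at $\wh{\Xi}$ and at $\Xi^*$ yields, after cancelling the pure-noise term, the basic inequality
\[
\tfrac{1}{2m}\nnorm{\texttt{P}_X^{\perp}\Delta}_F^2 \le \tfrac{\sigma}{\sqrt{n}\,m}\scp{\Delta}{\texttt{P}_X^{\perp}\mc{S} E} + \lambda\Big(\textstyle\su\nnorm{\Xi^*_{i,:}}_2 - \su\nnorm{\wh{\Xi}_{i,:}}_2\Big).
\]

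Next I would control the stochastic term. By row-wise Cauchy--Schwarz, $\scp{\Delta}{\texttt{P}_X^{\perp}\mc{S} E} \le \big(\max_i \nnorm{(\texttt{P}_X^{\perp}\mc{S} E)_{i,:}}_2\big)\su\nnorm{\Delta_{i,:}}_2$. Conditionally on $X$, the $i$-th row $(\texttt{P}_X^{\perp}\mc{S} E)_{i,:}$ has independent coordinates of variance $v_i = (\texttt{P}_X^{\perp}\mc{S}\,\texttt{P}_X^{\perp})_{ii} \le (\texttt{P}_X^{\perp})_{ii}$, so $\nnorm{(\texttt{P}_X^{\perp}\mc{S} E)_{i,:}}_2^2$ is $v_i$ times a $\chi^2_m$ variable. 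A Laurent--Massart tail bound with a union bound over the $n$ rows controls the $\chi^2_m$ fluctuation by $(\sqrt{m} + \sqrt{4\log n})^2$, while concentration of the leverage scores $(\texttt{P}_X^{\perp})_{ii}$ of the Gaussian design bounds $\max_i v_i$; together these reproduce the factor $\mu_{n,d}(1 + \sqrt{4\log n/m})$ defining $\lambda_0$, so that on the resulting event the stochastic term is at most $\lambda_0 \su\nnorm{\Delta_{i,:}}_2$. Splitting the penalty difference across $S_* = \{i : \Xi^*_{i,:} \neq 0\}$ (with $|S_*| \le k$) by the reverse triangle inequality and using $\lambda \ge 2\lambda_0$, I obtain the cone condition $\sum_{i \notin S_*}\nnorm{\Delta_{i,:}}_2 \le \tfrac{\lambda + \lambda_0}{\lambda - \lambda_0}\sum_{i \in S_*}\nnorm{\Delta_{i,:}}_2$, whence $\su\nnorm{\Delta_{i,:}}_2 \le \tfrac{2\lambda}{\lambda - \lambda_0}\sqrt{k}\,\nnorm{\Delta}_F$.

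The crux, and the step I expect to be the main obstacle, is a restricted-eigenvalue bound: with high probability over $X$, $\nnorm{\texttt{P}_X^{\perp}\Delta}_F^2 \ge \epss^2 \nnorm{\Delta}_F^2$ uniformly over the cone above. Equivalently, the random $d$-dimensional subspace $\mathrm{range}(X)$ must be nearly orthogonal to all approximately row-sparse directions, i.e.\ $\nnorm{\texttt{P}_X\Delta}_F^2 \le (1-\epss^2)\nnorm{\Delta}_F^2$ on the cone. I would establish this by a covering argument: discretize the unit sphere restricted to each row-support of size $\lesssim k$, bound $\nnorm{\texttt{P}_X\Delta}_F$ on the net by Gaussian concentration, take a union bound over the $\binom{n}{k}$ supports (contributing entropy $\asymp k\log(n/k)$), and extend to the full cone. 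The budget $k\log(n/k) \lesssim n$ is exactly what forces $k \le c_{\epss} n/\log(n/k)$, and tracking constants yields the admissible $c_{\epss}, c_{\epss}'$ and the $\exp(-c_{\epss}' n)$ probability. Combining this bound with the basic inequality and the cone condition, while bounding the right-hand side by $(\lambda + \lambda_0)\sum_{i \in S_*}\nnorm{\Delta_{i,:}}_2 \le (\lambda + \lambda_0)\sqrt{k}\,\nnorm{\Delta}_F$, gives $\nnorm{\Delta}_F/\sqrt{m} \le 2\epss^{-2}\sqrt{m}\,(\lambda + \lambda_0)\sqrt{k}$, which is dominated by the stated right-hand side of \eqref{eq:xibound} since $\lambda + \lambda_0 \le \lambda(\lambda + \lambda_0)/(\lambda - \lambda_0)$.

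Finally, for $\wh{B}$ I would use the normal equations $X^{\T}(Y - X\wh{B} - \sqrt{n}\,\wh{\Xi}) = 0$, which give $X(\wh{B} - B^*) = \sigma\,\texttt{P}_X\mc{S} E - \sqrt{n}\,\texttt{P}_X\Delta$; hence $\nnorm{\wh{B} - B^*}_F \le \sigma_{\min}(X)^{-1}\big(\sigma\nnorm{\texttt{P}_X\mc{S} E}_F + \sqrt{n}\,\nnorm{\Delta}_F\big)$ using $\nnorm{\texttt{P}_X\Delta}_F \le \nnorm{\Delta}_F$. Standard Gaussian estimates then finish the job: $\sigma_{\min}(X) \ge \sqrt{n}\,(1 - \sqrt{(4d \vee \log n)/n})$ with high probability supplies the leading factor, and $\nnorm{\texttt{P}_X\mc{S} E}_F^2$ is a sum of $m$ quadratic forms in $E$ of rank $\le d$, so it concentrates at scale $\sigma\sqrt{(d \vee \log n)m}$, producing the $\sigma\sqrt{5(d \vee \log n)/n}$ term after normalizing by $\sqrt{nm}$ and inserting the bound on $\nnorm{\Delta}_F/\sqrt{m}$ from \eqref{eq:xibound}.
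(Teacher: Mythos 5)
Your overall architecture coincides with the paper's own proof: you profile out $B$ exactly as in the paper's decomposition lemma, derive the same basic inequality, control the stochastic term by row-wise H\"older together with Gaussian-norm concentration and concentration of the leverage scores of $\texttt{P}_{X}^{\perp}$ (which is how the paper arrives at $\lambda_0$, via the Dasgupta--Gupta random projection bound), obtain the same cone condition with constant $\frac{\lambda+\lambda_0}{\lambda-\lambda_0}$, and for $\wh{B}$ you use the same normal-equation/pseudo-inverse split with $\sigma_{\min}(X/\sqrt{n})$ and quadratic-form concentration. The single point of genuine divergence is the restricted-eigenvalue step, which you correctly identify as the crux: the paper first reduces the $n\cdot m$-dimensional problem to an $n$-dimensional one by structuring the error into its $m$ columns, embeds the cone into $2\,\text{conv}(\mc{B}_0(k'))\cap \mathbb{S}^{n-1}$ via a Maurey-type inclusion lemma, and then applies Gordon's escape-through-the-mesh theorem with the Gaussian width bound $w \lesssim \sqrt{k'\log(en/k')}$; you propose instead a net argument with a union bound over $\binom{n}{k}$ supports.

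That substitution is where your sketch has a real gap: the phrase ``extend to the full cone'' hides the entire difficulty, and the most obvious way to do the extension fails. Since $v \mapsto \nnorm{\texttt{P}_{X}v}_2^2$ is convex, its supremum over $2\,\text{conv}(B_0(k'))$ equals its supremum over $2B_0(k')$, i.e.\ $4\sup_{u \in B_0(k'),\,\nnorm{u}_2\le 1}\nnorm{\texttt{P}_{X}u}_2^2 \approx 4\big(d/n + k'\log(en/k')/n\big)$; the theorem only assumes $d/n<1/4$, so this quantity can exceed $1$, and no bound of the form $\nnorm{\texttt{P}_{X}\Delta}_F^2 \le (1-\epss^2)\nnorm{\Delta}_F^2$ follows. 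The sphere intersection is essential and cannot be discarded by scaling, which is precisely what Gordon's theorem handles for the paper. To complete your route you would need a block-decomposition (``transfer principle'') argument in the style of Bickel--Ritov--Tsybakov or Rudelson--Zhou: split a unit-norm cone vector into row-blocks of size $\asymp k$ sorted by row norms, apply the sparse-vector lower bound to the leading block and the cone inequality to the tail. A second, subtler repair is needed in the union bound itself: the nets must be taken per column of the error matrix (equivalently, over column submatrices $(\texttt{P}_{X}^{\perp})_{:,T}$ with $|T|\le k'$), not over unit matrices supported on $k'$ rows, since otherwise the entropy acquires a factor $m$ and your condition degrades from $k\log(n/k)\lesssim n$ to $k\cdot m \lesssim n$; the paper's column-structuring step exists exactly to avoid this. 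Both repairs are standard, so the approach is salvageable, but as written the key step is asserted rather than proved. (A minor wrinkle you share with the paper: the row variances of $\texttt{P}_{X}^{\perp}\mc{S}E$ are bounded by $\mu_{n,d}$, so the norm bound should carry $\sqrt{\mu_{n,d}}$ rather than $\mu_{n,d}$; since $\mu_{n,d}\le 1$ this only affects constants.)
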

\noindent In order to better understand the consequences of Theorem~\ref{theo:parameter_estimation}, we spell out essential
scalings in $(n,k,d,m)$ below. According to~\eqref{eq:lambdascaling}, the parameter $\lambda$ should be chosen proportional
to 
\begin{equation}\label{eq:lambdscaling_spelledout}
\lambda_0 \asymp \textstyle\frac{1}{\sqrt{n \cdot m}} (1 + \sqrt{\log(n)/m})
\end{equation}
in which case $\frac{\nnorm{\wh{\Xi}  - \Xi^*}_F}{\sqrt{m}} \lesssim \sqrt{\frac{k}{n}} (1 + \sqrt{\log(n)/m})$ which are familiar rates for multivariate regression with block sparsity regularization~\citep{Lounici2011}. At the same time, the estimation error for the regression coefficients scales as $\frac{\nnorm{\wh{B} - B^*}_F}{\sqrt{m}} \lesssim \sqrt{d/n} + \frac{\nnorm{\wh{\Xi}  - \Xi^*}_F}{\sqrt{m}}$, where the first term on the right hand side equals the estimation rate of least squares regression in the absence of mismatches while the second term reflects the slack arising from the presence of the latter. The bottom line is that
the estimation error is in check as long as the fraction of mismatches $k / n$ is small. In fact, the condition preceding
\eqref{eq:xibound} imposes a bound on that fraction as well. In experiments, performance degrades more noticeably once $k/n >0.3$. Theorem~\ref{theo:parameter_estimation} also indicates a positive influence of the number of response variables $m$ in that one can choose $\lambda \asymp \frac{1}{\sqrt{n \cdot m}}$ once $m \gtrsim \log n$ which in turn eliminates the factor $\sqrt{\log n}$ in~\eqref{eq:lambdscaling_spelledout} and thus also in ~\eqref{eq:xibound}. This is a known benefit of block sparsity regularization in comparison to element-wise sparsity regularization~\citep{Lounici2011}.

\subsection*{Restoring Correspondence}

In this subsection, we study recovery of $\Theta^*$. To begin with, we suppose that the regression
parameter $B^*$ is known, and establish one sufficient and one necessary condition for exact
recovery of $\Theta^*$ based on the oracle estimator~\eqref{eq:Theta_oracle}. A crucial
quantity in the analysis is
\begin{equation}\label{eq:gamma}
\gamma^2 = \min_{i < j} \frac{\nnorm{B^{*\T} (\M{x}_i - \M{x}_j)}_2^2}{\nnorm{B^*}_F^2}, 
\end{equation}
the minimum squared distance among all pairs of linear predictors scaled by $\norm{B^*}_F^2$. A lower bound on $\gamma^2$ is
clearly needed in order to reliably match noisy responses $\{ \M{y}_i \}_{i = 1}^n$ to
the corresponding elements in $\{ B^{*\T} \M{x}_i \}_{i = 1}^n$: if there exists a pair
$(i,j)$ such that $\nnorm{B^{*\T} (\M{x}_i - \M{x}_j)}_2$ is smaller than the noise level, then
there is a good chance that the corresponding responses get swapped. The following two lemmas provide
upper and lower bounds on~\eqref{eq:gamma}.
\begin{lemma}\label{lem:gamma_min} Let $\srank(B^*) \coloneq \frac{\nnorm{B^*}_F^2}{\nnorm{B^*}_2^2}$ denote the stable
  rank of $B^*$, and consider $\gamma^2$ as defined in~\eqref{eq:gamma}. There exists
  universal constants $\alpha_0 \in (0,1)$ and $\kappa$ such that for any $\epss > 0$, with probability at least $1 - n^{-2\epss}$, it holds that
\begin{equation}\label{eq:gamma_latala}
\gamma^2 > \min\left\{2 n^{\frac{-2(1 + \epss)}{\kappa \cdot \srank(B^*)}},  \alpha_0 \right \}^2 
\end{equation}
\end{lemma}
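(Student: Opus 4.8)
The plan is to reduce the uniform lower bound over all $\binom{n}{2}$ pairs to a sharp anti-concentration (small-ball) estimate for a single pair, and then control the union over pairs.

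\textbf{Step 1 (distribution of a single pair).} Fix $i \neq j$. Since $\M{x}_i - \M{x}_j \sim N(0, 2 I_d)$, I would introduce the singular value decomposition $B^* = U \Sigma V^{\T}$ with singular values $\sigma_1, \sigma_2, \ldots$ and, after an orthogonal change of variables, obtain
\begin{equation*}
W_{ij} \coloneq \frac{\nnorm{B^{*\T}(\M{x}_i - \M{x}_j)}_2^2}{\nnorm{B^*}_F^2} \;\overset{d}{=}\; 2 \sum_k p_k \, g_k^2, \qquad p_k \coloneq \frac{\sigma_k^2}{\nnorm{B^*}_F^2},
\end{equation*}
where $g_k \sim N(0,1)$ are i.i.d. The weights satisfy $\sum_k p_k = 1$ and $\max_k p_k = \nnorm{B^*}_2^2/\nnorm{B^*}_F^2 = 1/\srank(B^*)$, so the stable rank is exactly the reciprocal of the largest weight. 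Thus $\gamma^2 = \min_{i<j} W_{ij}$, and it suffices to bound each $W_{ij}$ from below with overwhelming probability.

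\textbf{Step 2 (small-ball estimate).} For $0 < t < 2 = \E W_{ij}$ I would use the Laplace-transform (Chernoff) method: for every $\lambda > 0$,
\begin{equation*}
\p\Big( \textstyle\sum_k p_k g_k^2 \le \tfrac{t}{2}\Big) \le \exp\Big( \tfrac{\lambda t}{2} - \tfrac{1}{2}\textstyle\sum_k \log(1 + 2\lambda p_k) \Big).
\end{equation*}
The crucial step is to lower bound $\sum_k \log(1 + 2\lambda p_k)$ using only $\sum_k p_k = 1$ and $p_k \le 1/\srank(B^*)$. Since $x \mapsto \log(1 + 2\lambda x)$ is concave, the sum is concave in $(p_k)$, so its minimum over this polytope is attained at a vertex, i.e.\ with $\srank(B^*)$ of the weights equal to $1/\srank(B^*)$ and the rest zero, giving $\sum_k \log(1 + 2\lambda p_k) \ge \srank(B^*) \log(1 + 2\lambda/\srank(B^*))$. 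Optimizing the resulting bound over $\lambda$ (the optimizer $\lambda = \tfrac{\srank(B^*)}{2}(\tfrac{2}{t} - 1)$ is positive precisely in the small-ball regime $t < 2$) yields an estimate of the form $\p(W_{ij} \le t) \le (c\,t)^{\srank(B^*)/2}$ for a universal constant $c$, valid for $t$ bounded away from the mean.

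\textbf{Step 3 (union bound and threshold).} Taking a union bound over the $\binom{n}{2} \le n^2/2$ pairs gives $\p(\gamma^2 \le t) \le \tfrac{n^2}{2}(c\,t)^{\srank(B^*)/2}$. Requiring the right-hand side to fall below $n^{-2\epss}$ amounts to $(c\,t)^{\srank(B^*)/2} \le n^{-2(1+\epss)}$, which I solve for $t$ to produce a threshold of the form $t \asymp n^{-4(1+\epss)/(\kappa\,\srank(B^*))}$, where the universal constant $\kappa$ absorbs $c$ and the slack in the Chernoff bound; this is exactly the (squared) polynomial term inside the minimum in~\eqref{eq:gamma_latala}. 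The truncation at the universal constant $\alpha_0 \in (0,1)$ reflects the range of validity of the small-ball estimate: once $\srank(B^*)$ is large the threshold approaches the mean $\E W_{ij} = 2$, where the Chernoff optimizer ceases to be positive, so I cap the guaranteed lower bound at $\alpha_0^2$ (a conservative, hence valid, choice), recovering a dimension-free separation for high stable rank.

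\textbf{Obstacle.} The essential work is Step 2: obtaining an anti-concentration bound whose exponent scales with the stable rank rather than with a single coordinate. Keeping only one term of $\sum_k p_k g_k^2$ loses the stable-rank gain and yields merely $\p(W_{ij} \le t) \lesssim \sqrt{t\,\srank(B^*)}$, far too weak for large $\srank(B^*)$; it is the concavity/vertex argument for $\sum_k \log(1 + 2\lambda p_k)$ that converts the spread of the singular spectrum into the improved exponent. Carefully delimiting the regime in which the small-ball bound is valid (and thereby justifying the cap $\alpha_0$) is the remaining delicate point; alternatively, one could invoke a known small-ball estimate for Gaussian quadratic forms in place of the explicit Chernoff computation.
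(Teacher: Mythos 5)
Your proposal is correct in substance, but it takes a genuinely different route from the paper. The paper's proof is essentially a two-line reduction: it invokes Proposition 2.6 of~\cite{Latala2007} (restated as Lemma~\ref{theo:latala} in Appendix~\ref{app:separation}), which supplies exactly the single-pair small-ball estimate $\sup_{\mu}\p\big(\nnorm{\mu - B^{*\T}g}_2 \le \alpha\nnorm{B^*}_F\big)\le \exp\big(\kappa\log(\alpha)\,\srank(B^*)\big)$, applies it with $\mu = 0$ and $g = (\M{x}_i - \M{x}_j)/\sqrt{2}$, and concludes with the same union bound over pairs as your Step 3. You instead rederive the single-pair estimate from scratch: the diagonalization $W_{ij}\overset{d}{=}2\sum_k p_k g_k^2$, the Laplace-transform bound $\p(\sum_k p_k g_k^2\le s)\le \exp\big(\lambda s - \tfrac12\sum_k\log(1+2\lambda p_k)\big)$, and---this is the key idea---the concavity/extreme-point argument showing that the worst weight configuration is $\srank(B^*)$ equal weights, which yields $\p(W_{ij}\le t)\le(et/2)^{\srank(B^*)/2}$ for $t<2$. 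What each approach buys: yours is elementary, self-contained, and produces explicit constants (effectively $\kappa$ of order one); the paper's is shorter, and the $\sup_\mu$ form of Lata\l{}a's bound is reused later (Remark 1 following Lemma~\ref{theo:latala}) to control $\gamma_0^2$ in~\eqref{eq:gamma0}, where the shift is the random, $X$-independent response $\M{y}_i$. Your computation is stated for $\mu=0$ only, though it does extend, since a noncentral weighted chi-square stochastically dominates the central one, so $\mu=0$ is the worst case.

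Two points of bookkeeping need repair, neither a genuine gap. First, $\srank(B^*)$ need not be an integer, so the vertices of $\{p:\, p_k\in[0,1/\srank(B^*)],\ \sum_k p_k=1\}$ have $\lfloor \srank(B^*)\rfloor$ coordinates equal to $1/\srank(B^*)$ plus one fractional coordinate; using $\lfloor r\rfloor\ge r/2$ (valid since $r\ge1$) costs only a factor of~$2$ in the exponent. Second, a multiplicative constant $c$ cannot literally be ``absorbed into $\kappa$'': for small $n$ and large $\srank(B^*)$ the factor $c$ changes the threshold by a constant that no choice of exponent can compensate uniformly. The cap rescues the claim: writing $y = n^{-4(1+\epss)/\srank(B^*)}\in(0,1)$, your argument proves $\gamma^2 > \min\{(2/e)\,y,\,1\}$ with the stated probability, and one checks that with $\kappa=1/2$ and $\alpha_0=1/e$ one has $\min\{2y,\alpha_0\}^2 = \min\{4y^2,\alpha_0^2\}\le (2/e)\,y$ for all $y\in(0,1)$, so your bound implies~\eqref{eq:gamma_latala} with these universal constants. (The paper is equally cavalier about such constants in its own derivation, so this is tidying, not a flaw in the approach.)
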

\noindent The stable rank of $B^*$ as defined in the lemma crucially governs the scaling of $\gamma$. It
is instructive to consider the extreme case $\srank(B^*) = 1$: we then obtain $\gamma^2 \gtrsim n^{-C}$ for $C > 0$. Results in~\cite{SlawskiBenDavid2017}
on the case $m = 1$ show that $\gamma^2 \lesssim n^{-2}$ with constant probability, which indicates
sharpness of the above result in this case up to a constant in the exponent of $n$. On the other hand, if $\srank(B^*) = m \gtrsim \log n$, we have
\begin{equation*}
2 n^{\frac{-2(1 + \epss)}{\kappa \cdot \srank(B^*)}} = \exp \left(-\frac{2(1 + \epss)}{\kappa \cdot \srank(B^*)} \log(2n) \right) = \Omega(1),
\end{equation*}
i.e., the lower bound on $\gamma^2$ does no longer decay with $n$. Additional insights
can be obtained by considering the special case in which all non-zero singular values
of $B^*$ are equal to $b_* > 0$ and thus also $\srank(B^*) = \text{rank}(B^*) = r$. For
$r = 2(q + 1), \; q \geq 0$, the quantity~\eqref{eq:gamma} then becomes analytically tractable based
on a closed form expression for $\chi^2$-random variables with an even degrees of freedom. 

\begin{lemma}\label{lem:gamma_min_const} Consider $\gamma^2$ as defined in~\eqref{eq:gamma} and suppose that $B^*$ has
  exactly $r = 2(q + 1), \, q \in \{0,1,\ldots\}$ non-zero singular values equal to $b_* > 0$. Then for
  all $\delta > 0$
\begin{equation*}
\text{(Lower Bound):} \;\;\, \p\left(\gamma^2 \geq \frac{2}{e} (n^{-2} \, \delta)^{\frac{2}{r}} \right)\geq 1 - \delta/2. 
\end{equation*}
Moreover, if $n > 8 (r/2)^{r/2}$,
\begin{equation*}
\hspace*{-4ex}\text{(Upper Bound):} \;\;\, \p\left(\gamma^2  \leq  2 \cdot 8^{2/r}  n^{-2/r} \right) \geq0.75.
\end{equation*}
\end{lemma}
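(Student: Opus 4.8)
The plan is to reduce $\gamma^2$ to the minimum of a family of $\chi^2$ random variables and then control this extreme value, handling the lower and upper bounds with a union bound and an independent-subsample (matching) argument, respectively.

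First I would exploit the Gaussian design and the spectral structure of $B^*$. Fix a pair $i<j$ and put $z = \M{x}_i - \M{x}_j \sim N(0, 2 I_d)$, so that $\nnorm{B^{*\T}(\M{x}_i - \M{x}_j)}_2^2 = z^{\T} B^* B^{*\T} z$. Since $B^*$ has exactly $r$ singular values all equal to $b_*$, the matrix $B^* B^{*\T}$ has $r$ eigenvalues equal to $b_*^2$ and the remaining ones equal to zero. Diagonalizing $B^* B^{*\T}$ by an orthogonal matrix and invoking the rotational invariance of the isotropic Gaussian, $z^{\T} B^* B^{*\T} z$ is distributed as $2 b_*^2 \sum_{\ell = 1}^r g_\ell^2$ with $g_\ell \sim N(0,1)$ i.i.d. Because $\nnorm{B^*}_F^2 = r \, b_*^2$, the ratio in~\eqref{eq:gamma} for the pair $(i,j)$ equals $\frac{2}{r} W_{ij}$ with $W_{ij} \sim \chi^2_r$, and hence $\gamma^2 = \frac{2}{r} \min_{i<j} W_{ij}$. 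Everything then reduces to tail estimates for $\chi^2_r$.

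Here the assumption $r = 2(q+1)$ becomes crucial: $r/2$ is an integer, the density of $\chi^2_r$ is $x^{r/2 - 1} e^{-x/2}/(2^{r/2}(r/2 - 1)!)$, and the lower tail is explicitly controllable from both sides. Bounding $e^{-x/2} \leq 1$ on $[0,s]$ gives $\p(W \leq s) \leq s^{r/2}/(2^{r/2} (r/2)!) \leq (s e / r)^{r/2}$ after the inequality $(r/2)! \geq (r/(2e))^{r/2}$; bounding $e^{-x/2} \geq e^{-s/2}$ instead gives the matching lower bound $\p(W \leq s) \geq e^{-s/2} s^{r/2}/(2^{r/2}(r/2)!)$. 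For the lower bound on $\gamma^2$, note that $\{\gamma^2 < t\}$ forces $W_{ij} < \frac{r}{2} t$ for some pair, so a union bound over the $\binom{n}{2} \leq n^2/2$ pairs yields $\p(\gamma^2 < t) \leq \frac{n^2}{2}(t e/2)^{r/2}$; equating the right-hand side with $\delta/2$ and solving for $t$ produces exactly $t = \frac{2}{e}(n^{-2}\delta)^{2/r}$, as claimed.

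The hard part is the upper bound on $\gamma^2$, because the variables $\{W_{ij}\}_{i<j}$ are dependent, so the minimum cannot be treated as a minimum of independent draws. The remedy I would use is to restrict attention to a perfect (or near-perfect) matching $M$ of $\lfloor n/2 \rfloor$ disjoint index pairs: along $M$ the variables $\{W_{ij}\}_{(i,j) \in M}$ are i.i.d.\ $\chi^2_r$ since they depend on disjoint subsets of $\{\M{x}_i\}$. Then $\p(\gamma^2 > u) \leq \p\big(\min_{(i,j) \in M} W_{ij} > \tfrac{r}{2} u\big) = (1 - p)^{|M|}$ with $p = \p(W \leq \tfrac{r}{2} u)$. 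Substituting $s = \tfrac{r}{2} u = r(8/n)^{2/r}$ into the lower-tail lower bound and using $(r/2)^{r/2} \geq (r/2)!$ gives $p \geq e^{-s/2} \cdot \frac{8}{n}$; the hypothesis $n > 8 (r/2)^{r/2}$ forces $(8/n)^{2/r} < 2/r$ and hence $s < 2$, so $e^{-s/2} > 1/e$ and $p \geq 8/(en)$. With $|M| \asymp n/2$ this gives $(1 - p)^{|M|} \leq \exp(-p|M|) \leq \exp(-4/e) < 1/4$, i.e.\ $\p(\gamma^2 \leq u) \geq 3/4$, completing the argument. The only delicate points are the bookkeeping of the constants in the final step (tightest at $r = 2$, where the exact identity $\p(\chi^2_2 \leq s) = 1 - e^{-s/2}$ can be used to recover any slack) and the verification that the matching-based count of independent pairs suffices.
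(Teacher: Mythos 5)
Your proposal is correct and follows essentially the same route as the paper: the same reduction of $\gamma^2$ to a minimum of $2b_*^2\chi^2(r)$ variables, the same union bound over pairs for the lower bound, and the same disjoint-pairs (matching) independence device for the upper bound, with evenness of $r$ exploited in both cases to turn $\Gamma(r/2)$ into a factorial and apply the same Stirling-type estimate. The only difference is cosmetic—you derive the $\chi^2_r$ lower-tail bounds by integrating the density with $e^{-x/2}\le 1$ (resp.\ $e^{-x/2}\ge e^{-s/2}$) where the paper uses the exact CDF series plus a Taylor--Lagrange remainder—and the slight slack in your final constant for small odd $n$ (from $\lfloor n/2\rfloor < n/2$) is present in the paper's own argument as well.
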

\noindent Lemma~\ref{lem:gamma_min_const} sheds some light on the range of the exponent 
$\kappa$ in the previous Lemma~\ref{lem:gamma_min}, and provides essentially matching upper and lower bounds on $\gamma^2$, where ``essentially'' refers to $n^{-4/r} \lesssim \gamma \lesssim n^{-2/r}$, i.e., the match is up to constant
factors and a factor $2$ in the exponent.

\tcb{In order to address the case of missing matches, we shall also consider}
\begin{equation}\label{eq:gamma0}
\gamma_0^2 = \min_{\substack{i \in \mc{N} \\  1\leq j \leq n}} \nnorm{\M{y}_i - B^{*\T} \M{x}_j}_2^2 / \nnorm{B^*}_F^2, 
\end{equation}
\tcb{where we recall that $\mc{N} = \{i: \,  \theta^*(i) = 0 \}$ denotes the set of missing matches.
The quantity~\eqref{eq:gamma0} exhibits scalings very similar to $\gamma^2$~\eqref{eq:gamma} as discussed in the remark following Lemma~\ref{theo:latala} in Appendix~\ref{app:separation}.}\\

Equipped with Lemma~\ref{lem:gamma_min} \&~\ref{lem:gamma_min_const}, we are in better position to interpret the following theorem.  \begin{theorm}\label{theo:correspondence} Let $\wh{B} = \wh{B}(X,Y)$ be an estimator of $B^*$, and let $\wh{\Theta}(\wh{B}) = \big(\wh{\Theta}_{ij}(\wh{B}) \big)$ denote the estimator~\eqref{eq:Theta_oracle} with $\tau > \tau_0 \coloneq \sigma (\sqrt{m} + 2 \sqrt{\log n})  + \max_{1 \leq j \leq n} \nnorm{\M{x}_j}_2 \nnorm{B^* - \wh{B}}_2$ and $B^*$ replaced by $\wh{B}$, i.e.,
\begin{equation*}
\wh{\Theta}_{ij}(\wh{B}) = \begin{cases} 1, &\quad \text{\emph{if}} \; j = \wh{j}(i) \; \text{\emph{and}} \; \, \nnorm{\M{y}_i - \wh{B}^{\T} \M{x}_{\wh{j}(i)}}_2 \leq \tau, \\
0 & \quad\text{\emph{otherwise}}, \qquad \qquad \qquad \qquad \qquad \qquad \quad 1 \leq i,j \leq n,
\end{cases}
\end{equation*}
where the index $\wh{j}(i)$ is defined by $\nnorm{\M{y}_i - \wh{B}^{\T} \M{x}_{\wh{j}(i)}}_2 = \min_{1 \leq j \leq n}\nnorm{\M{y}_i - \wh{B}^{\T} \M{x}_{j}}_2, \; 1 \leq i \leq n$.  
Let $\gamma^2$ and $\gamma_0^2$ be as in~\eqref{eq:gamma} and~\eqref{eq:gamma0}, respectively, and define the signal-to-noise ratio by $\textsf{\emph{SNR}} = \frac{\nnorm{B^*}_F^2}{\sigma^2 m}$. Consider the event
{\small\begin{equation*}
\mc{B} = \left\{ \min\{ \gamma_0^2, \gamma^2 \} \, \textsf{\emph{SNR}} >  36 \max \left\{ \frac{\nnorm{\wh{B} - B^*}_2^2}{\sigma^2 m} \max_{1 \leq i \leq n} \nnorm{\M{x}_i}_2^2,  2 \left( 1 + \sqrt{\frac{4\log n}{m}} \right)^2, \,  \frac{\tau^2}{\sigma^2 m}   \right\} \right\}.
\end{equation*}}Conditional on $\mc{B}$, with probability at least $1 - \p(\mc{B}^c) - 1/n$, $\{ \wh{\Theta}(\wh{B}) = \Theta^* \}$. Conversely, in the case that $\theta^*(i) \neq 0$ for $1 \leq i \leq n$, the following holds:
\begin{itemize}
\item There exists $c > 0$  so that if
  $\textsf{\emph{SNR}} <  c \frac{\log n}{m}$, $\p(\wh{\Theta}(B^*) \neq \Theta^*) \geq 1/3$.\\[-3ex] 
\item If additionally $m = O(1)$, there exists $c' > 0$ so that if $\min\{\gamma_0^2, \gamma^2\} \textsf{\emph{SNR}} < c'$,
  $\p(\wh{\Theta}(B^*) \neq \Theta^*) \geq 1/3$.\\
\end{itemize}
\end{theorm}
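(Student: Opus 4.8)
\emph{Forward direction (sufficiency).} The plan is to treat exact recovery $\{\wh{\Theta}(\wh{B}) = \Theta^*\}$ as a collection of deterministic per-row conditions and to verify each on the event $\mc{B}$ intersected with a single high-probability noise bound. Writing $\Delta \coloneq \nnorm{\wh{B} - B^*}_2 \max_{1\le j\le n}\nnorm{\M{x}_j}_2$ and $\nu \coloneq \sigma(\sqrt{m} + 2\sqrt{\log n})$, so that $\tau_0 = \Delta + \nu$, I would first record the algebraic content of $\mc{B}$: multiplying its defining inequality through by $\sigma^2 m$ and noting $\sigma^2 m(1+\sqrt{4\log n/m})^2 = \nu^2$ turns it into $\gamma\nnorm{B^*}_F > 6\max\{\Delta, \sqrt{2}\,\nu, \tau\}$ and, identically, $\gamma_0\nnorm{B^*}_F > 6\max\{\Delta, \sqrt{2}\,\nu, \tau\}$. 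The only probabilistic ingredient beyond $\mc{B}$ is the uniform bound $\max_{1\le i\le n}\sigma\nnorm{\eps_i}_2 \le \nu$; since $\nnorm{\eps_i}_2$ is a $1$-Lipschitz function of a standard Gaussian with mean $\le \sqrt{m}$, Gaussian concentration gives $\p(\sigma\nnorm{\eps_i}_2 > \nu) \le n^{-2}$, and a union bound over $i$ contributes the advertised $1/n$. It then remains to show recovery holds deterministically on $\mc{B}\cap\{\max_i\sigma\nnorm{\eps_i}_2\le\nu\}$, which yields $\p(\wh{\Theta}(\wh{B})=\Theta^*) \ge 1 - \p(\mc{B}^c) - 1/n$.

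\emph{Per-row verification.} For a matched row $i$ with true index $\ell = \theta^*(i)$, decompose $\M{y}_i - \wh{B}^{\T}\M{x}_\ell = (B^* - \wh{B})^{\T}\M{x}_\ell + \sigma\eps_i$; the triangle inequality gives $\nnorm{\M{y}_i - \wh{B}^{\T}\M{x}_\ell}_2 \le \Delta + \nu = \tau_0 < \tau$, so the true match clears the threshold. For a competitor $j \ne \ell$, expanding $\M{y}_i - \wh{B}^{\T}\M{x}_j = B^{*\T}(\M{x}_\ell - \M{x}_j) + (B^* - \wh{B})^{\T}\M{x}_j + \sigma\eps_i$ and invoking $\nnorm{B^{*\T}(\M{x}_\ell - \M{x}_j)}_2 \ge \gamma\nnorm{B^*}_F$ (the definition of $\gamma$) gives $\nnorm{\M{y}_i - \wh{B}^{\T}\M{x}_j}_2 \ge \gamma\nnorm{B^*}_F - \Delta - \nu$, which on $\mc{B}$ exceeds $\tau_0$ because $\gamma\nnorm{B^*}_F > 6\max\{\Delta,\sqrt{2}\,\nu\} \ge 3\Delta + 3\nu > 2\Delta + 2\nu$. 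Hence $\ell$ is the strict minimizer and passes the threshold. For a non-matched row $i \in \mc{N}$, independence of $\M{y}_i$ from $\mc{X}$ lets me use the definition of $\gamma_0$ directly: $\min_{j}\nnorm{\M{y}_i - \wh{B}^{\T}\M{x}_j}_2 \ge \gamma_0\nnorm{B^*}_F - \Delta > \tau$, so every candidate is rejected and row $i$ is correctly declared a non-match. Collecting the cases yields $\wh{\Theta} = \Theta^*$.

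\emph{Converse, second bullet.} Here I take $\wh{B} = B^*$ and focus on the representative case in which $\theta^*$ is a permutation, so $\mc{N} = \emptyset$ and $\min\{\gamma_0^2,\gamma^2\} = \gamma^2$. Conditioning on $X$, let $(a,b)$ attain $\gamma^2$ and let $i_a$ be the response matched to predictor $a$. Writing $\M{v} = B^{*\T}(\M{x}_a - \M{x}_b)$, a swap $\wh{j}(i_a) = b$ occurs once $\nnorm{\M{y}_{i_a} - B^{*\T}\M{x}_b}_2 < \nnorm{\M{y}_{i_a} - B^{*\T}\M{x}_a}_2$, equivalently $\scp{\M{v}}{\sigma\eps_{i_a}} < -\tfrac12\nnorm{\M{v}}_2^2$; since $\scp{\M{v}}{\eps_{i_a}}\mid X \sim N(0,\nnorm{\M{v}}_2^2)$, this has conditional probability $\Phi\big(-\nnorm{\M{v}}_2/(2\sigma)\big) = \Phi\big(-\tfrac12\sqrt{\gamma^2 m\,\textsf{SNR}}\big)$. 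On the event $\gamma^2\textsf{SNR} < c'$ with $m = O(1)$, this is at least $\Phi\big(-\tfrac12\sqrt{c'm}\big) \ge 1/3$ for $c'$ small enough, and any such swap forces $\wh{\Theta}_{i_a a} = 0 \ne 1$, i.e.\ $\wh{\Theta} \ne \Theta^*$.

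\emph{Converse, first bullet (the main obstacle).} This is the delicate part, since failure must be produced from the \emph{typical} separation $\approx \sqrt{2}\nnorm{B^*}_F$ rather than the minimal one. My plan is to exploit that there are $n$ mutually independent noise vectors. Condition on $X$ and fix a fixed-point-free injection $\rho$ with $\rho(i) \ne \theta^*(i)$ (e.g.\ $\rho(i) = \theta^*(i+1)$, indices mod $n$), so that the swap events $\{\nnorm{\M{y}_i - B^{*\T}\M{x}_{\rho(i)}}_2 < \nnorm{\M{y}_i - B^{*\T}\M{x}_{\theta^*(i)}}_2\}$ are independent across $i$ given $X$, each of conditional probability $\Phi(-t_i)$ with $t_i = \nnorm{B^{*\T}(\M{x}_{\theta^*(i)} - \M{x}_{\rho(i)})}_2/(2\sigma)$. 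Since $\M{x}_{\theta^*(i)} - \M{x}_{\rho(i)} \sim N(0, 2I_d)$, one has $\E_X\, t_i^2 = \nnorm{B^*}_F^2/(2\sigma^2) = m\,\textsf{SNR}/2$, and because $\theta^*$ and $\rho$ are injective each $\M{x}_j$ enters at most two of the $t_i$, so a Markov plus bounded-dependence concentration argument shows that, with high probability over $X$, a constant fraction of indices satisfy $t_i^2 \le m\,\textsf{SNR} < c\log n$. For these, the sharp Gaussian lower tail $\Phi(-x) \ge c_0\, e^{-x^2/2}/(1+x)$ gives $\Phi(-t_i) \gtrsim n^{-c/2}/\sqrt{\log n}$, whence $\sum_i \Phi(-t_i) \gtrsim n^{1-c/2}/\sqrt{\log n} \to \infty$ whenever $c < 2$. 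By independence, the probability that at least one such swap occurs is $1 - \prod_i(1-\Phi(-t_i)) \ge 1 - \exp\big(-\sum_i\Phi(-t_i)\big) \to 1$, and any swap forces $\wh{\Theta} \ne \Theta^*$; choosing $c$ a suitable small constant therefore gives $\p(\wh{\Theta}(B^*) \ne \Theta^*) \ge 1/3$. The two delicate points are constructing $\rho$ so as to guarantee both injectivity and the conditional independence of the swap events, and converting the divergent sum into the stated constant lower bound uniformly in $n$.
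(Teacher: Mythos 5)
Your sufficiency argument and your second (fixed-$m$) converse track the paper's own proof closely. The paper proceeds exactly as you do for sufficiency: it first zeroes out the rows in $\mc{N}$ using $\gamma_0$ and the event $\mc{B}$, then shows matched rows clear the threshold on the noise event $\{\max_i \sigma \nnorm{\eps_i}_2 \leq \sigma(\sqrt{m}+2\sqrt{\log n})\}$ (its Gaussian-norm lemma, costing the same $1/n$), and finally shows the true index is the minimizer. The only difference is in that last step: the paper expands squares and bounds four ratio terms on $\mc{B}$ by $1/36+1/3+1/3+1/9<1$, whereas you use a direct reverse-triangle-inequality comparison ($\gamma\nnorm{B^*}_F-\Delta-\nu > 2(\Delta+\nu) > \tau_0 \geq$ the true-match distance); your version is cleaner, the constants check out under the same event, and nothing is lost. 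Your second bullet is the paper's argument essentially verbatim (the pair attaining $\gamma$, the swap rewritten as a one-dimensional Gaussian inequality, probability $\geq 1/3$ once $\gamma^2 m \textsf{SNR}$ is a small constant); the paper, like you, simply assumes the minimizing index has a nonempty preimage under $\theta^*$.

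The first-bullet converse is where you genuinely depart. The paper fixes one competitor $\eta(i)=\theta^*(i)\pm 1$ per row, notes that conditionally on $X$ the projections $\scp{\eps_i}{u_i}$ onto the unit vectors $u_i$ along $B^{*\T}(\M{x}_{\eta(i)}-\M{x}_{\theta^*(i)})$ are i.i.d.\ $N(0,1)$, and argues that their maximum exceeds $c_0\sqrt{\log n}$ with probability $\geq 3/5$ while the separation \emph{at the maximizing index} is $\leq \sqrt{18}\,\nnorm{B^*}_F$ with probability $\geq 1-e^{-2}$ (via independence of norm and direction of a Gaussian vector); intersecting the two events gives the constant $1/3$ directly, for every $n$. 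You instead lower bound each swap probability by $\Phi(-t_i)$, invoke conditional independence across rows, and show $\sum_i \Phi(-t_i)\to\infty$ when $m\,\textsf{SNR} < c\log n$ with $c<2$, so failure occurs with probability tending to one --- asymptotically a stronger conclusion than the paper's. Two loose ends need repair. First, your dependence-control step (``each $\M{x}_j$ enters at most two of the $t_i$'') and your example $\rho(i)=\theta^*(i+1)$ implicitly assume $\theta^*$ is injective, but the hypothesis of the converse only excludes missing matches and explicitly permits one-to-many $\theta^*$; the fix is to take $\rho(i)=\theta^*(i)+1$ (mod $n$), as the paper does, and to replace the bounded-dependence concentration of the count $N=\#\{i:\, t_i^2\leq m\,\textsf{SNR}\}$ by the dependence-free reverse-Markov bound $\p(N\geq \alpha n)\geq 1/2-\alpha$, which follows from $\E[N]\geq n/2$ alone and suffices for your divergence argument. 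Second, converting ``$\to 1$'' into the uniform constant $1/3$ for all $n$ --- which you flag yourself --- needs either constant tracking or a separate treatment of small $n$ (e.g., for $n\leq n_0$ a single-pair estimate $\E_X[\Phi(-t_1)]\to 1/2$ as $m\,\textsf{SNR}\to 0$ does the job after shrinking $c$); avoiding exactly this bookkeeping is what the paper's two-event argument buys relative to yours.
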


The above theorem contains both an achievability result in the form of a sufficient
condition for successful recovery of $\Theta^*$ given any estimator of $\wh{B}$, as well
as inachievability results concerning failure of recovery in the situation where $B^*$ is known. As explained in more detail below, the above sufficient and necessary conditions agree up to multiplicative constants in certain regimes. To shed more light on the implications of the theorem, it is instructive to consider
certain special cases of interest and to discuss them in connection with the error bounds stated in Theorem~\ref{theo:parameter_estimation}.
\begin{itemize}
\item[i)] The conditions of Theorem~\ref{theo:correspondence} involve \textsf{SNR} as the ratio of
the signal energy $\nnorm{B^*}_F^2/m$ per response variable and noise variance $\sigma^2$. If
$\wh{B} = B^*$ and every element of $\mc{Y}$ has match in $\mc{X}$, the condition of
the event $\mc{B}$ becomes
\begin{equation}\label{eq:suffSNR_Bknown}
\min \{\gamma_0^2, \gamma^2 \} \textsf{SNR} \geq 2 (1 + \sqrt{\log(n)/m})^2.  
\end{equation}
If $m = O(1)$, the scaling of $\gamma^2$ according Lemmas~\ref{lem:gamma_min} and~\ref{lem:gamma_min_const} imply that the condition $\textsf{SNR}  = \Omega(n^c)$ for a constant $c$ depending on $\text{srank}(B^*)$ suffices for recovery of $\Theta^*$. 
\item[ii)] The second bullet in Theorem~\ref{theo:correspondence} implies that for $m = O(1)$, the condition $\textsf{SNR}  = \Omega(n^c)$ is also necessary (up to a constant factor in the exponent of $n$). In particular,
Theorem~\ref{theo:correspondence} qualitatively recovers earlier results in~\cite{Pananjady2016} and~\cite{SlawskiBenDavid2017} on $m = 1$.
\item[iii)] Regarding the scaling of $m$, the threshold case appears to be $m \asymp \log n \asymp \srank(B^*)$. In this regime,~\eqref{eq:suffSNR_Bknown} requires only $\textsf{SNR} = \Omega(1)$ which is a far less stringent
condition compared to the regime of uniformly bounded $m$. Again, the sufficient condition is matched up
to a constant multiplicative factor by the necessary condition stated in the first bullet of Theorem~\ref{theo:correspondence}. 
\item[iv)] Once $m$ respectively $\srank(B^*)$ grow at a faster rate than $\log n$, the necessary condition of the first bullet is no longer aligned with~\eqref{eq:suffSNR_Bknown}. It remains an open
question whether Theorem~\ref{theo:correspondence} can be sharpened in this regard.
\end{itemize}
We now discuss the situation in which $B^*$ is replaced by an estimator $\wh{B}$. In the absence of mismatches, random matrix theory~\citep{Vershyninproduct} shows that ordinary least squares estimation obeys
$\E [\nnorm{\wh{B} - B^*}_2^2 / (\sigma^2 m)  ] \lesssim (d + m)/(n \cdot m)$
while $\max_{1 \leq i \leq n}\nnorm{\M{x}_i}_2^2 \lesssim d$ with high probability assuming that $d \gtrsim \log n$, which implies
that the first term in the outer ``$\max$'' of the event $\mc{B}$ is at best of the order $d^2/(n \cdot m)$. A slightly less
favorable condition is obtained when substituting the error bound of the proposed estimator in Theorem~\ref{theo:parameter_estimation}. In this case,
\begin{equation*}
\nnorm{\wh{B} - B^*}_2^2 / (\sigma^2 m) \leq \nnorm{\wh{B} - B^*}_F^2 / (\sigma^2 m) \lesssim (k + d)/n 
\end{equation*}
with the stated probability, and thus Theorem~\ref{theo:correspondence} yields the condition $n \gtrsim d \cdot (k \vee d)$. In summary, the
effect of replacing $B^*$ by the proposed estimator can either be compensated by imposing a more stringent
condition on \textsf{SNR} or the ratio $d / n$.

Lastly, let us comment on the case of missing matches, i.e., $\mc{N} \neq \emptyset$, and the choice
of $\tau$. As long as $\tau$ is chosen proportional to the threshold $\tau_0$, the requirements on the \textsf{SNR}
remain qualitatively unchanged. The dependence of $\tau_0$ on the noise level is intrinsic, hence approximate knowledge of $\sigma$ is inevitable to guide the choice of $\tau$. While $\tau_0$ also depends on $\nnorm{\wh{B} - B^*}_2$, the latter can be estimated 
given bounds on the estimation error as discussed in the preceding paragraph. Clearly, $\tau$ can be set to zero whenever it is known that $\mc{N} = \emptyset$.




\subsection*{Identification of Mismatched Data}
In the following, we discuss a simpler task than recovery of $\Theta^*$, namely recovery
of $S_* = \{1 \leq i \leq n:\;\theta^*(i) \neq i \}$, or equivalently,
$S_* = \{1 \leq i \leq n:\;\Xi^*_{i,:} \neq 0 \}$ with $\Xi^* = \Phi^* / \sqrt{n}$ as
defined in~\eqref{eq:contamination}. The following statement provides a condition
that ensures that we can separate mismatched data $S_*$ and correctly matched data
$S_*^c$ in terms of $\{ \nnorm{\wh{\Xi}_{i,:}}_2 \}_{i = 1}^n$, where $\wh{\Xi}$
is obtained from optimization problem~\eqref{eq:grouplasso} and analyzed in Theorem
\ref{theo:parameter_estimation}.
\begin{prop}\label{prop:discovery_mismatches}
 Let $\wh{\Xi}$ be as in Theorem~\ref{theo:parameter_estimation}, and let $\gamma_0^2$, $\gamma^2$, and \textsf{\emph{SNR}} be as in Theorem~\ref{theo:correspondence}. We then have $\min_{i \in S_*} \nnorm{\wh{\Xi}_{i,:}}_2 > \max_{i \in S_*^c} \nnorm{\wh{\Xi}_{i,:}}_2$ if 
  \begin{equation}\label{eq:mismatch_recovery_rhs}
  \min\{\gamma_0^2, \gamma^2 \}\textsf{\emph{SNR}} \geq  \frac{4 \max_{1 \leq i \leq n} \nnorm{\sqrt{n} ( \wh{\Xi}_{i,:} - \Xi_{i,:}^*)}_2^2}{\sigma^2 m}.
  \end{equation}
\end{prop}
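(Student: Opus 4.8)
The plan is to reduce the claimed separation to a single deterministic inequality by means of the triangle inequality, after pinning down the structure of $\Xi^*$ on $S_*^c$ and $S_*$ separately; all probabilistic content then sits in the bound on the estimation error of $\wh{\Xi}$ already furnished by Theorem~\ref{theo:parameter_estimation}. Write $\delta = \max_{1\le i\le n}\nnorm{\wh{\Xi}_{i,:}-\Xi^*_{i,:}}_2$, so that $n\delta^2 = \max_{1\le i\le n}\nnorm{\sqrt{n}(\wh{\Xi}_{i,:}-\Xi^*_{i,:})}_2^2$ is exactly the numerator on the right-hand side of~\eqref{eq:mismatch_recovery_rhs}. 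First I would record that $\Xi^*=\Phi^*/\sqrt{n}$ vanishes on correctly matched indices: for $i\in S_*^c$ we have $\theta^*(i)=i$, hence $\Phi^*_{i,:}=B^{*\T}\M{x}_i-B^{*\T}\M{x}_i=0$, so by the row-wise triangle inequality
$$\max_{i\in S_*^c}\nnorm{\wh{\Xi}_{i,:}}_2 = \max_{i\in S_*^c}\nnorm{\wh{\Xi}_{i,:}-\Xi^*_{i,:}}_2 \le \delta.$$

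Next I would lower bound $\min_{i\in S_*}\nnorm{\Xi^*_{i,:}}_2$ using $\gamma$ and $\gamma_0$, splitting $S_*$ according to the two mechanisms in~\eqref{eq:contamination}. If $i$ is a missing match, $i\in\mc{N}$, then $\Phi^*_{i,:}=\M{y}_i-B^{*\T}\M{x}_i$, and taking $j=i$ in definition~\eqref{eq:gamma0} gives $\nnorm{\Phi^*_{i,:}}_2^2\ge\gamma_0^2\nnorm{B^*}_F^2$. If instead $\theta^*(i)=j$ with $j\notin\{0,i\}$ (which also subsumes the one-to-many case), then $\Phi^*_{i,:}=B^{*\T}(\M{x}_j-\M{x}_i)$, and since $j\ne i$ the definition~\eqref{eq:gamma} of $\gamma^2$ as a minimum over all distinct pairs yields $\nnorm{\Phi^*_{i,:}}_2^2\ge\gamma^2\nnorm{B^*}_F^2$. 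Hence $\nnorm{\Phi^*_{i,:}}_2^2\ge\min\{\gamma_0^2,\gamma^2\}\nnorm{B^*}_F^2$ uniformly over $i\in S_*$; dividing by $\sqrt{n}$ and inserting $\nnorm{B^*}_F^2=\textsf{SNR}\cdot\sigma^2 m$ gives
$$\min_{i\in S_*}\nnorm{\Xi^*_{i,:}}_2 \ge \tfrac{1}{\sqrt{n}}\,\sigma\sqrt{m}\,\sqrt{\min\{\gamma_0^2,\gamma^2\}\,\textsf{SNR}},$$
and a further triangle inequality gives $\min_{i\in S_*}\nnorm{\wh{\Xi}_{i,:}}_2\ge\min_{i\in S_*}\nnorm{\Xi^*_{i,:}}_2-\delta$.

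Chaining the two displays, the conclusion $\min_{i\in S_*}\nnorm{\wh{\Xi}_{i,:}}_2>\max_{i\in S_*^c}\nnorm{\wh{\Xi}_{i,:}}_2$ follows once $\min_{i\in S_*}\nnorm{\Xi^*_{i,:}}_2>2\delta$. Substituting the lower bound above and squaring both (nonnegative) sides turns this into $\min\{\gamma_0^2,\gamma^2\}\,\textsf{SNR}>4n\delta^2/(\sigma^2 m)$, which is precisely~\eqref{eq:mismatch_recovery_rhs}; the factor $4$ is just the square of the factor $2$ separating $S_*$ from $S_*^c$. There is essentially no analytic obstacle here, since the statement is deterministic given $\wh{\Xi}$. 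The only points needing minor care are the passage from the hypothesis ($\ge$) to the strict conclusion ($>$) — handled by noting that the two controlling maxima are taken over the disjoint sets $S_*$ and $S_*^c$, or that the per-row bound on $S_*$ is strict at the minimizing index — and the fact that treating the missing-match case through~\eqref{eq:gamma0} implicitly relies on the assumed independence of $\{\M{y}_i\}_{i\in\mc{N}}$ from $\mc{X}$, which is what renders $\gamma_0^2$ a genuine separation parameter.
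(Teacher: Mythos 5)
Your proposal is correct and follows essentially the same route as the paper's proof: both reduce the claim via the triangle inequality to the separation condition $\min_{i \in S_*} \nnorm{\Xi^*_{i,:}}_2 > 2 \max_{1 \leq i \leq n} \nnorm{\wh{\Xi}_{i,:} - \Xi^*_{i,:}}_2$ (using $\Xi^*_{i,:} = 0$ on $S_*^c$), and both lower-bound $\min_{i \in S_*} \nnorm{\Xi^*_{i,:}}_2$ by the same case split, invoking $\gamma_0$ for missing matches ($\theta^*(i) = 0$) and $\gamma$ for genuine mismatches ($\theta^*(i) \notin \{0,i\}$), which yields exactly condition~\eqref{eq:mismatch_recovery_rhs}. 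Your added remarks on the strict-versus-weak inequality and on the role of the independence assumption behind $\gamma_0$ are sensible refinements of details the paper passes over silently.
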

\noindent The practical consequences are as follows: if it holds that $\min_{i \in S_*} \nnorm{\wh{\Xi}_{i,:}}_2 > \max_{i \in S_*^c} \nnorm{\wh{\Xi}_{i,:}}_2$, we can sort the $\{ \nnorm{\wh{\Xi}_{i,:}}_2 \}_{i = 1}^n$ and retain the observations
corresponding to the $\lfloor \nu n \rfloor$ smallest elements for $\nu \in (0, (1-k/n)]$. Any choice of $\nu = \Omega(1)$ in
that range identifies $Q \subseteq S_*^c$ with $|Q| = \Omega(n)$. The least squares estimator $\wt{B}$ of $B^*$ using observations
in $Q$ only, i.e.,
\begin{equation*}
\wt{B} = \argmin_{B \in \R^{d \times m}} \sum_{i \in Q} \nnorm{\M{y}_i - B^{\T} \M{x}_i}_2^2
\end{equation*}
can substantially improve over the estimator $\wh{B}$ in Theorem~\ref{theo:parameter_estimation}. The condition of Proposition~\ref{prop:discovery_mismatches} tends to be easier to satisfy than that for recovery
of $\Theta^*$ in Theorem~\ref{theo:correspondence}. The right hand side of~\eqref{eq:mismatch_recovery_rhs} 
is of the order $O(1 + \log(n)/m)$ and $O(k\{1 + \log(n/m) \})$ in the best and worst case, respectively, in view of Theorem~\ref{theo:parameter_estimation}; the best case is obtained if $\max_i \nnorm{\wh{\Xi}_{i,:} - \Xi_{i,:}^*}_2^2 \lesssim \nnorm{\wh{\Xi} - \Xi^*}_F^2/k$, i.e., the error in Frobenius norm is spread out roughly evenly over $\Omega(k)$ rows.  



\section{Experiments}\label{sec:experiments}
In the sequel, we present empirical evidence supporting central aspects of our analysis, \tcb{and provide numerical comparisons to the alternative methods outlined at the end of $\S$2 as well as to an extension of the EM scheme in~\cite{Wu1998, Abid2018} for multiple response variables}. For simplicity, we confine ourselves to the case in which $\Theta^*$ is a permutation matrix, i.e., an element of~\eqref{eq:permutations}. Accordingly, the minimization in~\eqref{eq:Theta_oracle_p} is performed over the set of permutation matrices by means of the Auction Algorithm~\citep{Bertsekas1992}. We note that this modification does not affect our theoretical results. Specifically, the achievability result in Theorem~\ref{theo:correspondence} continues to hold because it asserts recovery over a superset of~\eqref{eq:permutations}. Similarly, the inachievability results continue to hold if $\Theta^*$ is required to be a permutation.\\

\noindent {\bfseries Synthetic data.}\\
\emph{Setup}. Data is generated according to the model 
\begin{equation*}
\M{y}_i = B^{*\T} \M{x}_{\theta^*(i)} + \sigma \eps_i, \; i=1,\ldots,n,
\end{equation*}
where the $\{ \M{x}_i \}_{i = 1}^n$ and $\{ \eps_{i} \}_{i = 1}^n$, are i.i.d.~from $N(0, I_d)$
and $N(0, I_m)$, respectively, $\theta^*$ is a random permutation that shuffles $\{1,\ldots,k\}$
uniformly at random, and is the identity map when restricted to the remaining indices, i.e,
$\theta^*(i) = i$ for $i > k$. The matrix $B^*$ is obtained by first generating a $d$-by-$d$ matrix
(i.e., $d = m$) with i.i.d.~$N(0,1)$-entries, then computing its singular value decomposition
$B^* = U S V^{\T}$, and replacing the diagonal entries $\{s_1, \ldots, s_d \}$ of $S$ according to 
$s_j \leftarrow j^{-q}$, $1 \leq j \leq d$ for $q \in \{0, 0.05, 0.1, 0.2, 0.5, 1, 2, 5\}$; finally,
$B^*$ is re-scaled such that $\nnorm{B^*}_F^2 = m$. This construction ensures that the stable
rank $\srank(B^*)$, which has a critical influence on the recovery of $\Theta^*$, varies between
$m = d$ (achieved for $q = 0$) and $1$ (achieved for $q \rightarrow \infty$). In addition, the
signal-to-noise ratio then results as $\textsf{SNR} = \sigma^{-2}$ with $\sigma \in \{0.01, 0.02, 0.05, 0.1, 0.2, 0.5, 1, 2\}$. Lastly,
the fraction of mismatches $k/n$ varies between $0.05$ and $0.4$ in steps of $0.05$ with $n \in \{200, 500, 1000\}$ and
$d/n \in \{0.03,0.06,0.12 \}$. For each configuration of $(n,d,k, q, \sigma)$, 100 independent replications
are performed. The following approaches are compared. 
\vskip1ex
\noindent {\bfseries \textsf{naive}, \textsf{oracle}}. Plain least squares and estimation of $B^*$ with knowledge of $\Theta^*$, respectively.  
\vskip1ex
\noindent {\bfseries \textsf{proposed}}. $B^*$ is estimated according to~\eqref{eq:grouplasso}   with the choice $\lambda = \lambda^{\star} = 4 \sigma \frac{1}{\sqrt{n \cdot m}}$ which is the lower bound on $\lambda$ suggested
  by Theorem~\ref{theo:parameter_estimation} when treating $\sqrt{4 \log(n)/m}$ simply as $1$.
\vskip1ex
\noindent {\bfseries \textsf{proposed+}}. The re-fitting approach~\eqref{eq:refitting} building on {\bfseries \textsf{proposed}}, cf.~also Proposition~\ref{prop:discovery_mismatches}. Assuming that $k$ is known, the set of mismatches $S_*$ is estimated by $\wh{S} = \{1 \leq i \leq n: \; \nnorm{\wh{\Xi}_{i,:}}_2 > t_{(n-k)}\}$,
  where $t_{(i)}, \, 1 \leq i \leq n$, denotes the $i$-th order statistic of the $\{ \nnorm{\wh{\Xi}_{i,:}}_2 \}_{i = 1}^n$.
\vskip1ex
\noindent {\bfseries \textsf{CRR}}. ``\underline{C}onsistent \underline{R}obust \underline{R}egression", following the title for  the approach~\eqref{eq:Bhatia} used in~\cite{Bhatia2017}. The number of mismatches $k$ is assumed to be known. 
\vskip1ex
\noindent {\bfseries \textsf{EM}}. The EM-scheme in~\cite{Wu1998, Abid2018} in which $\Theta^*$ is treated as missing data in conjunction with the use of the EM algorithm. Since the E-step involves intractable integration over the set of permutation matrices, MCMC is employed to approximate this step. In our implementation, the permutation is initialized as the identity, and the number of MCMC iterations per EM iteration is set to 10,000 given a "burn-in period" of 1,000.
\vskip1ex
\noindent {\bfseries \textsf{DS-reg}}. The approach~\eqref{eq:grouplasso_refined} that arises as a refinement of {\bfseries \textsf{proposed}}, and here involves optimization over the set of doubly stochastic matrices of size $n$. 
We consider $\lambda \in 2^{-p} \lambda^{\star}$, $p \in \{-1,0,\ldots,3 \}$, with $\lambda^{\star}$ as in the description of {\bfseries \textsf{proposed}} above, and choose $p$ replication by replication to minimize
the estimation error w.r.t.~$\nnorm{\cdot}_F$ of the resulting estimator of $B^*$.  
\vskip1ex
\noindent {\bfseries \textsf{DS-cons}}. The approach~\eqref{eq:grouplasso_refined_cons} with $k$ assumed to be known. 
\vskip1ex
\noindent {\bfseries \textsf{DS-reg+}, \textsf{DS-cons+}}. Re-fitting approaches associated with {\bfseries \textsf{DS-reg}} and {\bfseries \textsf{DS-cons}}. The set $S_*$ is estimated by $\wt{S} = \{1 \leq i \leq n: \; \wt{\Theta}_{ii} < \wt{t}_{(n-k)}\}$, where $\wt{\Theta}$ is the estimator of $\Theta^*$ from~\eqref{eq:grouplasso_refined} and~\eqref{eq:grouplasso_refined_cons}, respectively, and $\wt{t}_{(i)}, \, 1 \leq i \leq n$, denotes the $i$-th order statistic of $\{ \wt{\Theta}_{ii} \}_{i = 1}^n$.
\vskip2ex
\noindent Since solving the optimization problems associated with {\bfseries \textsf{DS-reg}} and {\bfseries \textsf{DS-cons}} entails
substantial additional efforts even with customized solvers (Appendix~\ref{app:congradient}) given $O(n^2)$ variables, we only consider a reduced set of configurations for $(n,d,k,q,\sigma)$ with 
$n \in \{ 200, 500 \}$, $d/n = 0.03$, and $q = 0$, while the ranges for $k/n$ and $\sigma$ remain unchanged. In addition, the number of replications per configuration is lowered to $20$.  
\vskip2ex
\noindent \emph{Results (I): Estimation of $B^*$.}  For better comparison across experimental configurations, we visualize the following ``standardized''  estimation error
\begin{equation}\label{eq:normalized_estimation}
  \sigma^{-1} m^{-1/2} \nnorm{B^{\text{est}} - B^*}_F  - \sqrt{d/n}, 
\end{equation}
where $B^{\text{est}}$ is a placeholder for the various estimators mentioned in the previous paragraph. Note that 
\eqref{eq:normalized_estimation} approximately equals zero in expectation for the oracle estimator equipped with $\Theta^*$, thus~\eqref{eq:normalized_estimation} can be interpreted as the excess error relative to that oracle. For the estimator $\wh{B}$ analyzed in Theorem~\ref{theo:parameter_estimation}, the quantity~\eqref{eq:normalized_estimation} is expected to be proportional to $\sqrt{k/n}$. Selected results are shown in Figure~\ref{fig:estimation_errors}, which displays averages of~\eqref{eq:normalized_estimation} for $n \in \{500, 1000 \}$ and 
$\sigma \in \{ 0.05, 0.1, 0.2 \}$; the number of different values for $\sigma$ considered in a single plot had
to be limited to ensure readability since for {\bfseries \textsf{naive}} and {\bfseries \textsf{EM}},~\eqref{eq:normalized_estimation} still depends substantially on $\sigma$. To account for that, shaded areas are 
used to represent the ranges of~\eqref{eq:normalized_estimation} for those two approaches; the upper and lower 
margins of the shaded areas represent the normalized estimation error for $\sigma = 0.05$ and $\sigma = 0.2$, respectively, while the dashed lines inside the shaded areas correspond to $\sigma = 0.1$. Accordingly, the performance of {\bfseries \textsf{naive}} and {\bfseries \textsf{EM}} (initialized by {\bfseries \textsf{naive}}) relative to~\eqref{eq:normalized_estimation} improves, which is 
unsurprising given that as $\sigma 	\sqrt{m} \nearrow  \nnorm{B^*}_F$ (recall that $\nnorm{B^*}_F = \sqrt{m}$), the error induced by mismatches is of the same
order as the noise in which case the gap between {\bfseries \textsf{naive}} and {\bfseries \textsf{oracle}} narrows. 
With the same reasoning, remedies for mismatches compared here are most effective if $\sigma \sqrt{m}/ \nnorm{B^*}_F$ is small: for example, {\bfseries \textsf{proposed}} achieves a roughly tenfold reduction in standardized estimation error over {\bfseries \textsf{naive}} for $\sigma = 0.05$; that margin reduces gradually with increasing $\sigma$. 
\begin{figure}[h]
\begin{center}
\begin{tabular}{ccc}
  \hspace*{-1.7ex}\includegraphics[width = 0.32\textwidth]{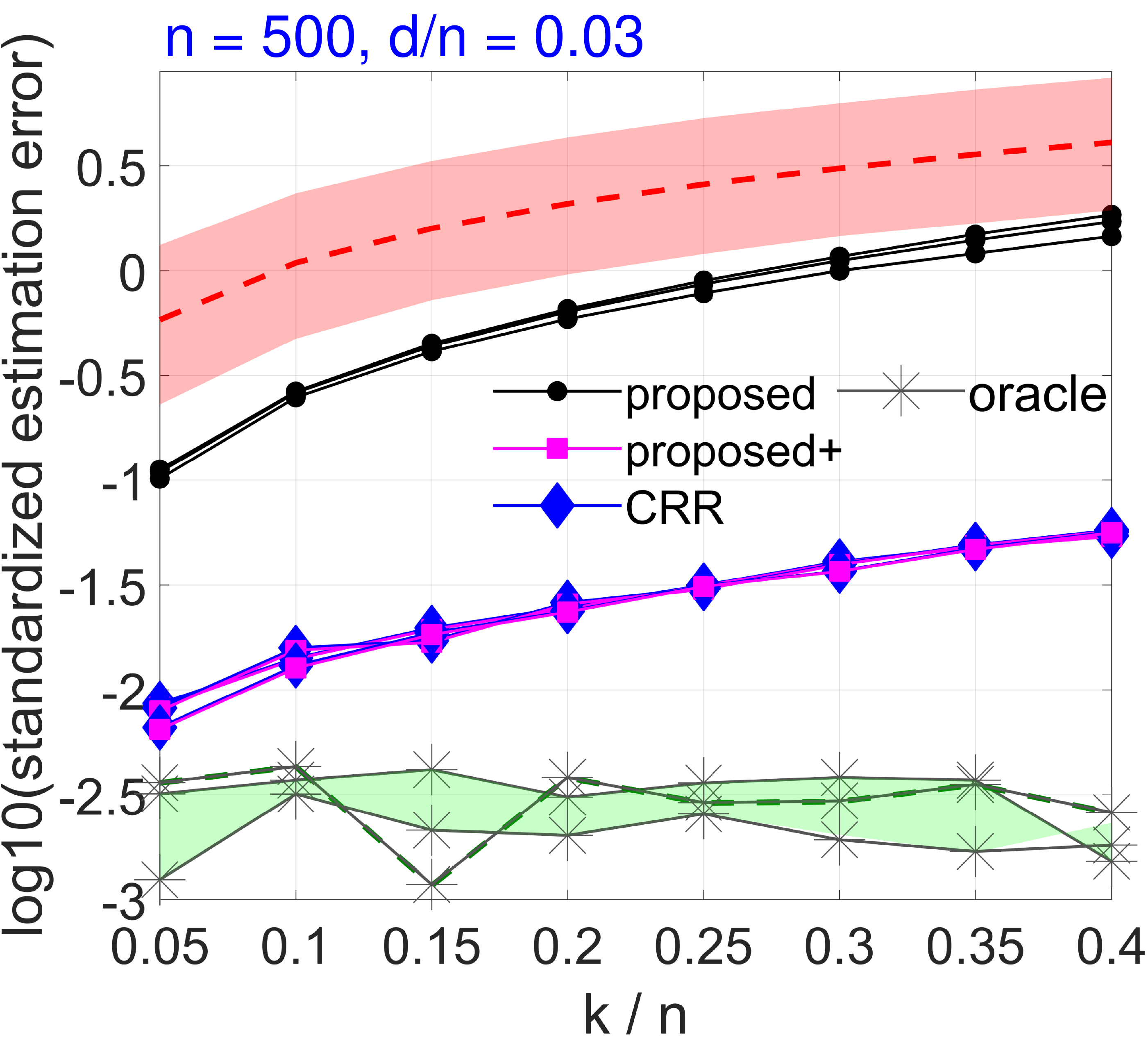}
  & \includegraphics[width = 0.32\textwidth]{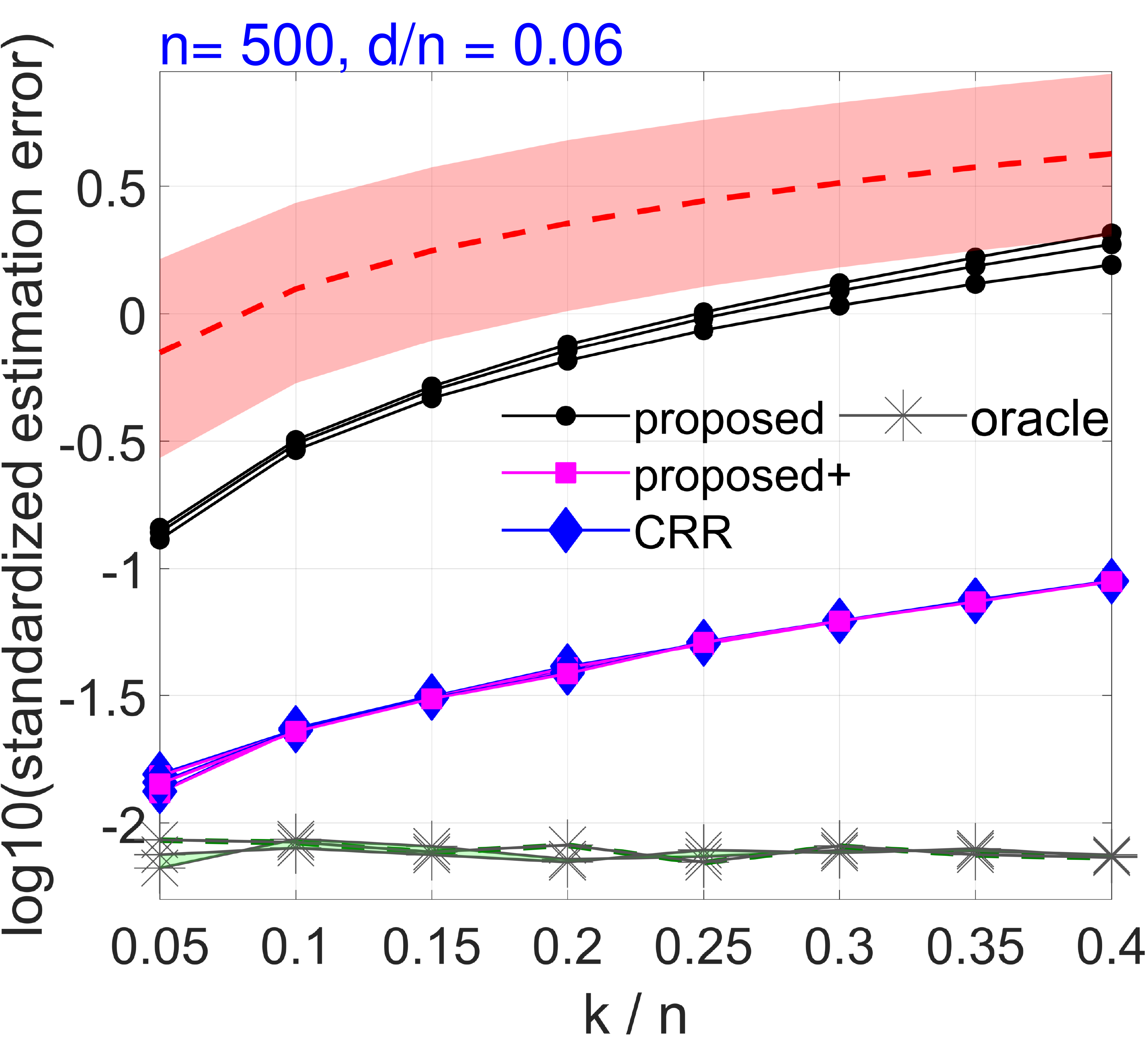} 
  & \includegraphics[width = 0.32\textwidth]{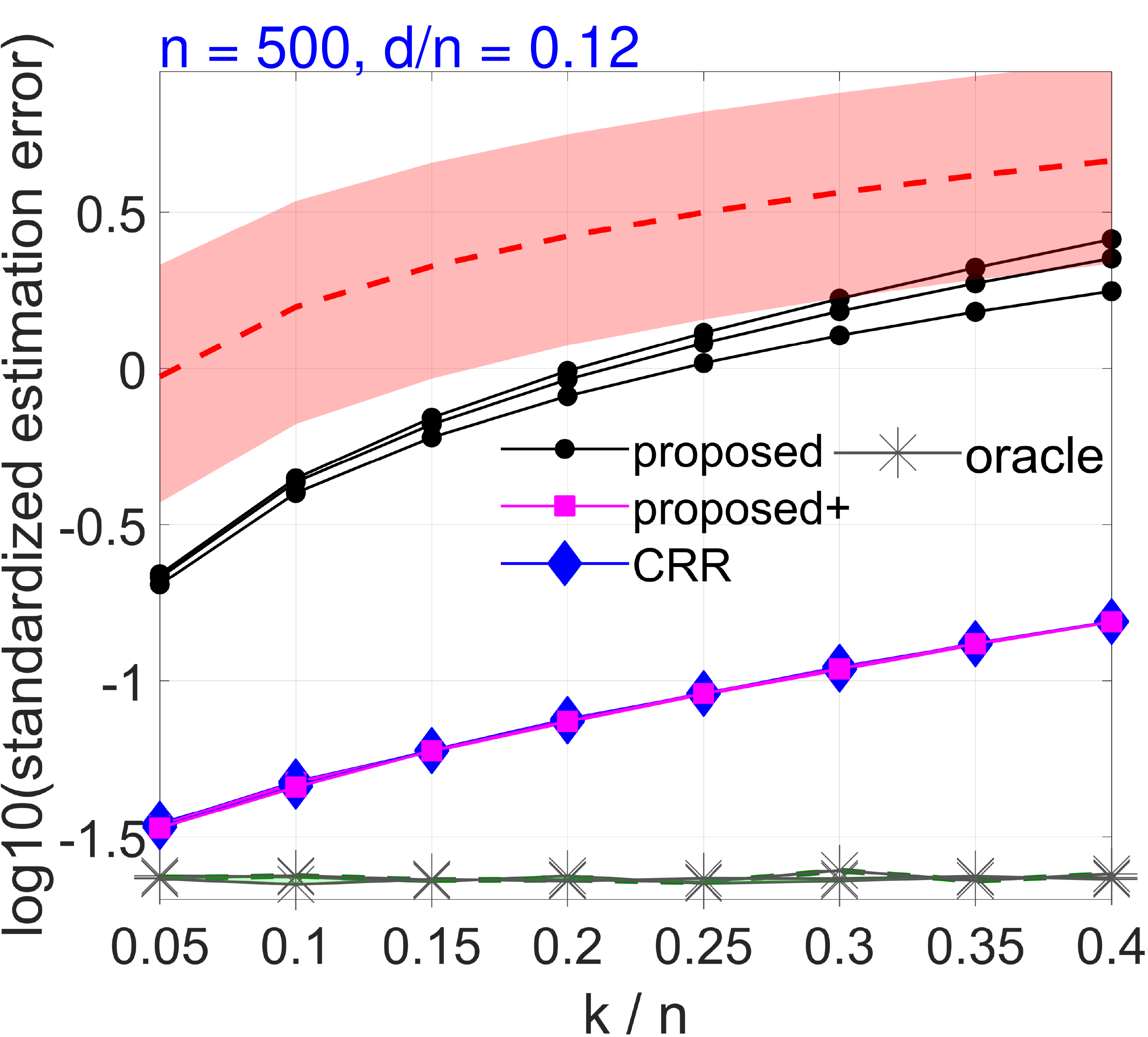}  \\[-.7ex]
   \hspace*{-1.7ex} \includegraphics[width = 0.32\textwidth]{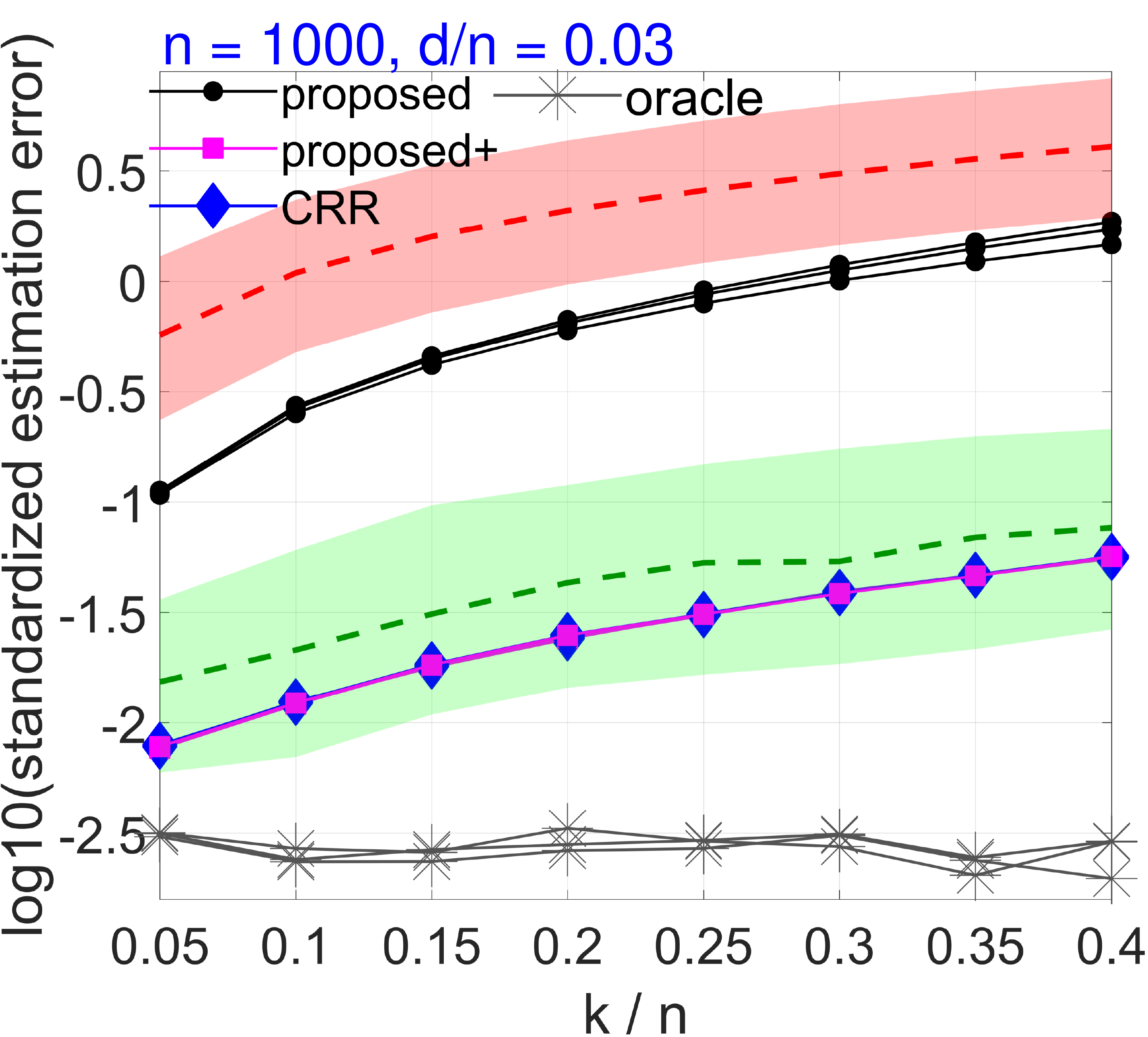}
  & \includegraphics[width = 0.32\textwidth]{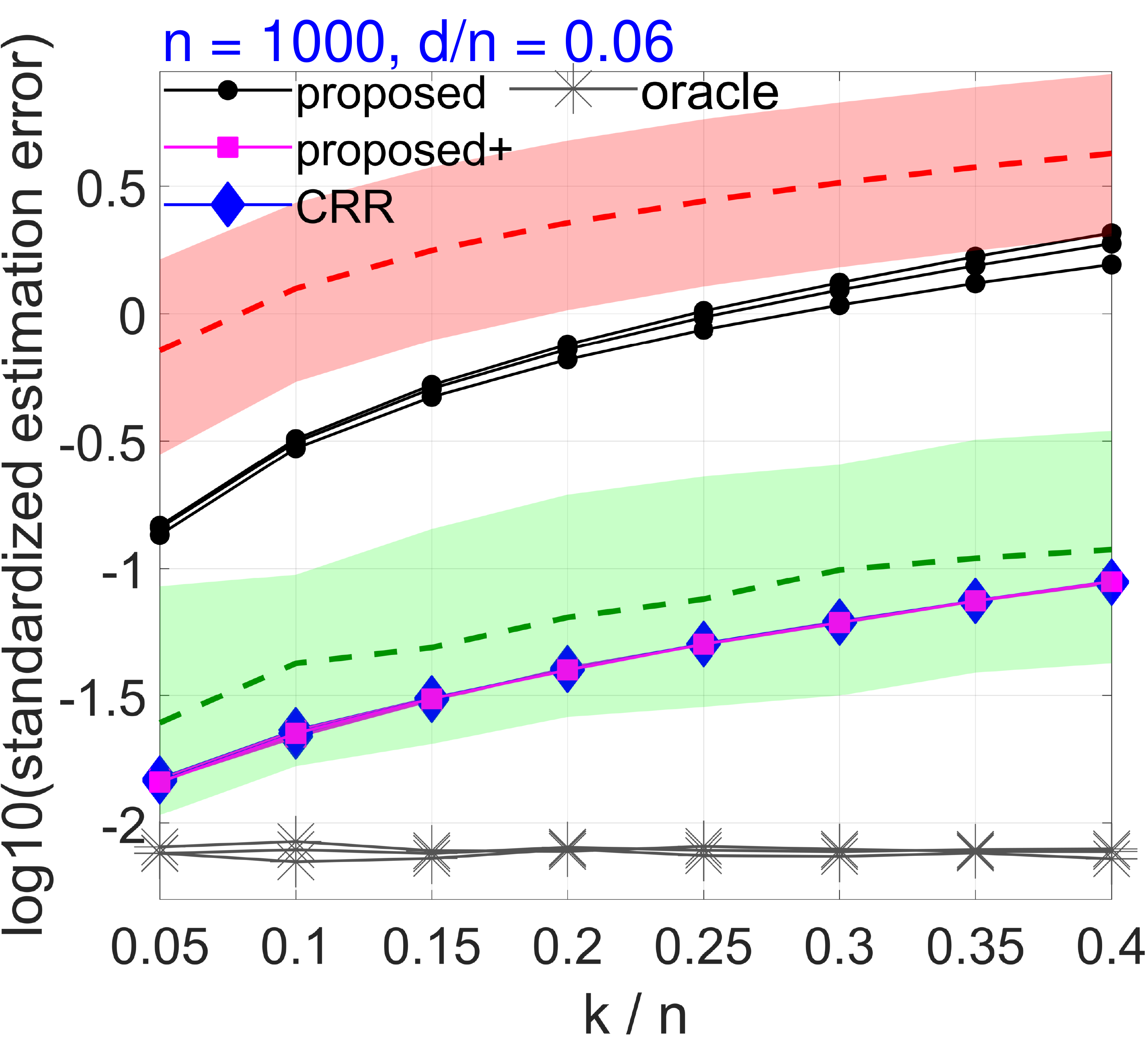} 
  & \includegraphics[width = 0.32\textwidth]{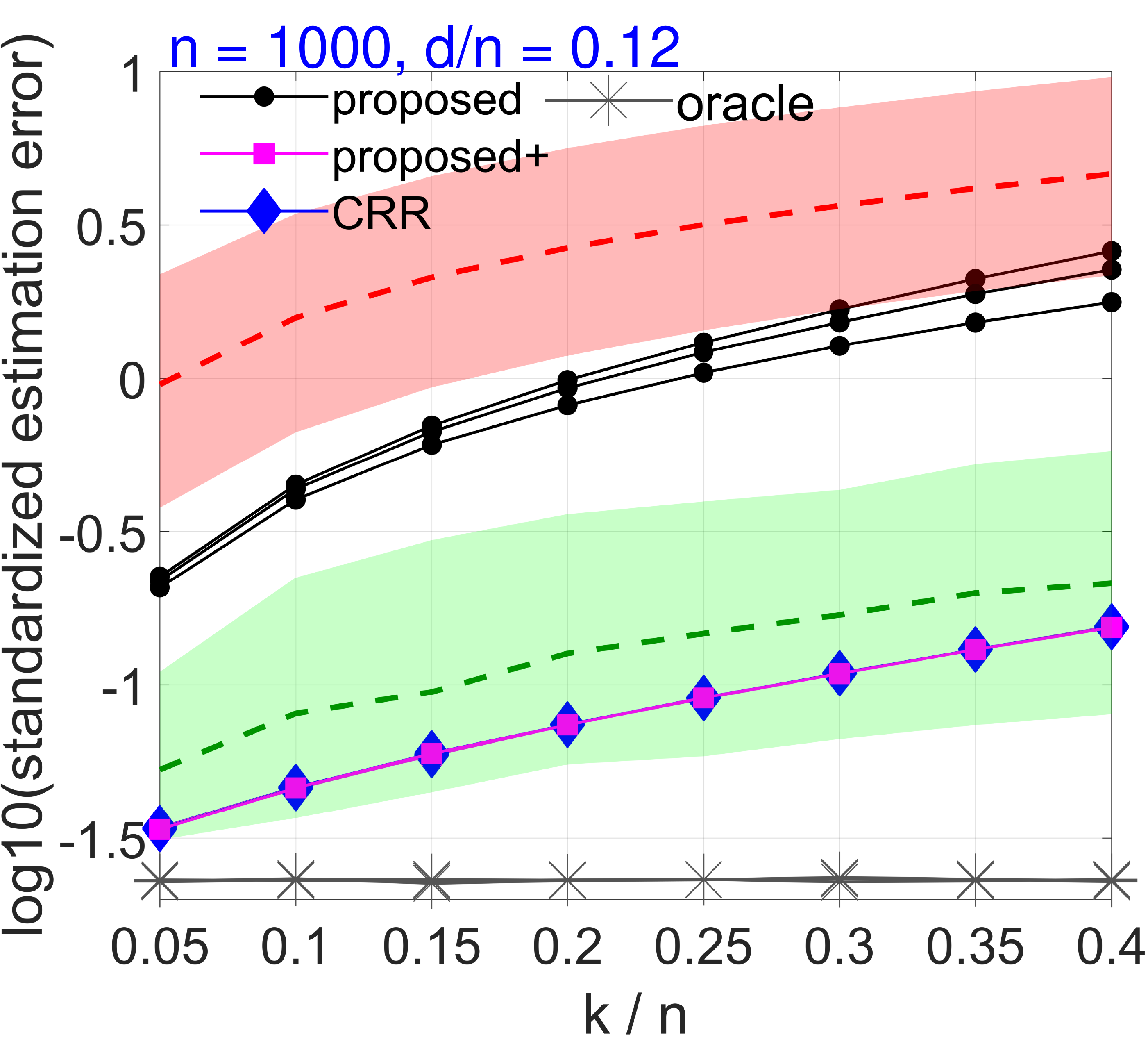} 
\end{tabular}
\end{center}
\vspace*{-5ex}
\caption{Average standardized estimation errors~\eqref{eq:normalized_estimation} on a $\log_{10}$-scale, with one curve for each $\sigma \in \{0.05, 0.1, 0.2 \}$. For {\bfseries \textsf{naive}} (in red) and {\bfseries \textsf{EM}} (in green), the resulting curves do not cluster together, and are hence captured by the upper ($\sigma = 0.05$) and lower ($\sigma = 0.2$) boundaries of the shaded areas plus a dashed line ($\sigma = 0.1$).}\label{fig:estimation_errors}
\end{figure}

Figure~\ref{fig:estimation_errors} also shows that refitting after applying {\bfseries \textsf{proposed}} and estimating $S_*$ considerably boosts performance. The performance of the resulting approach {\bfseries \textsf{proposed+}} is indistinguishable from {\bfseries \textsf{CRR}}. While {\bfseries \textsf{EM}} performs
on par with the oracle for $n = 500$ (and $n = 200$, not shown), the approach degrades with $n$. One likely explanation is that the challenges associated with the E-step become more severe with $n$: specifically, the MCMC approximation tends to be less reliable for larger values $n$. For the same reason, {\bfseries \textsf{EM}} is at least an order of magnitude slower than {\bfseries \textsf{proposed+}} and {\bfseries \textsf{CRR}}.

\begin{figure}[h!]
\begin{center}
\begin{tabular}{ccc}
\hspace*{-1.7ex} \includegraphics[width = 0.32\textwidth]{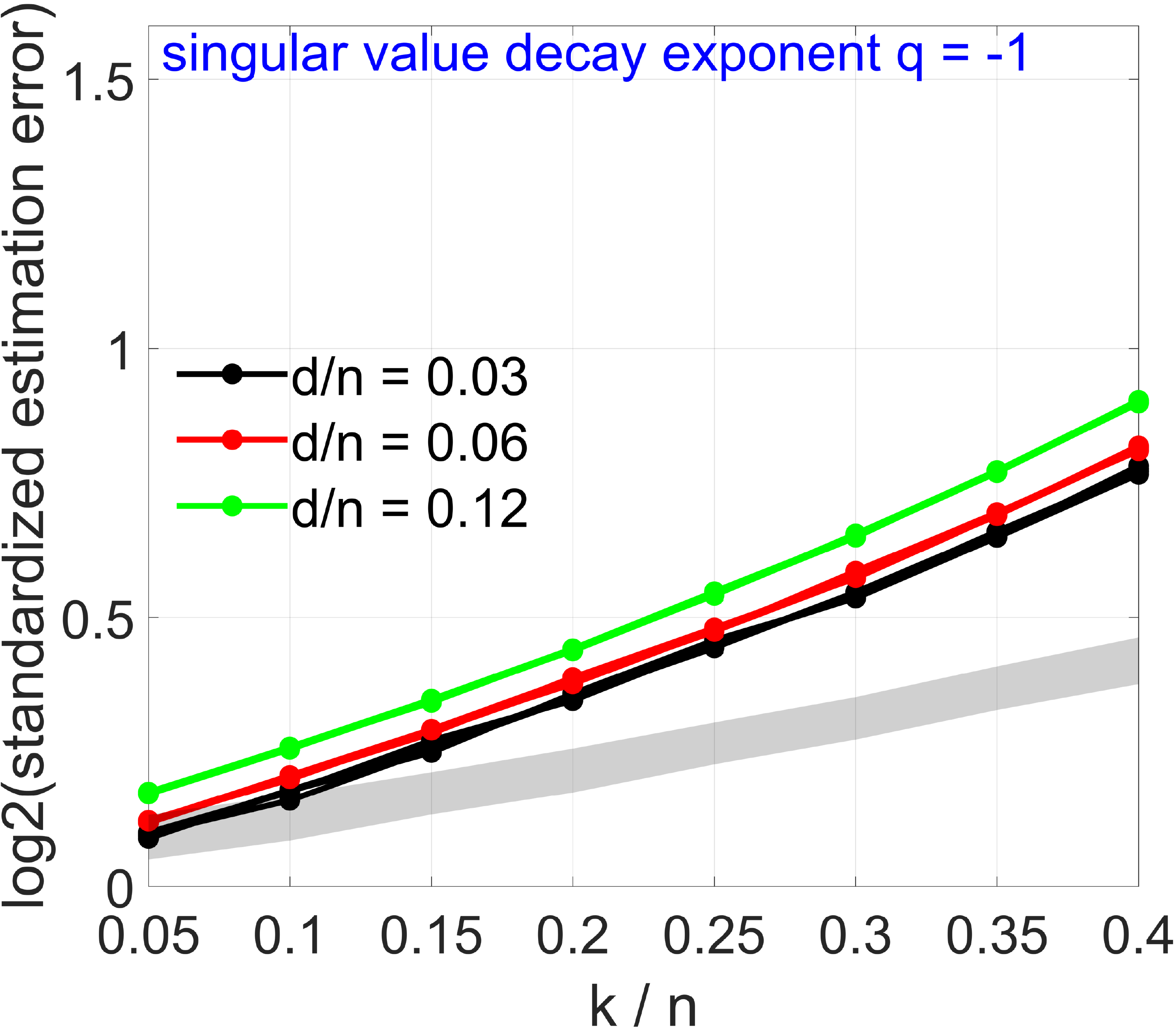}
  & \hspace*{-1.2ex} \includegraphics[width = 0.32\textwidth]{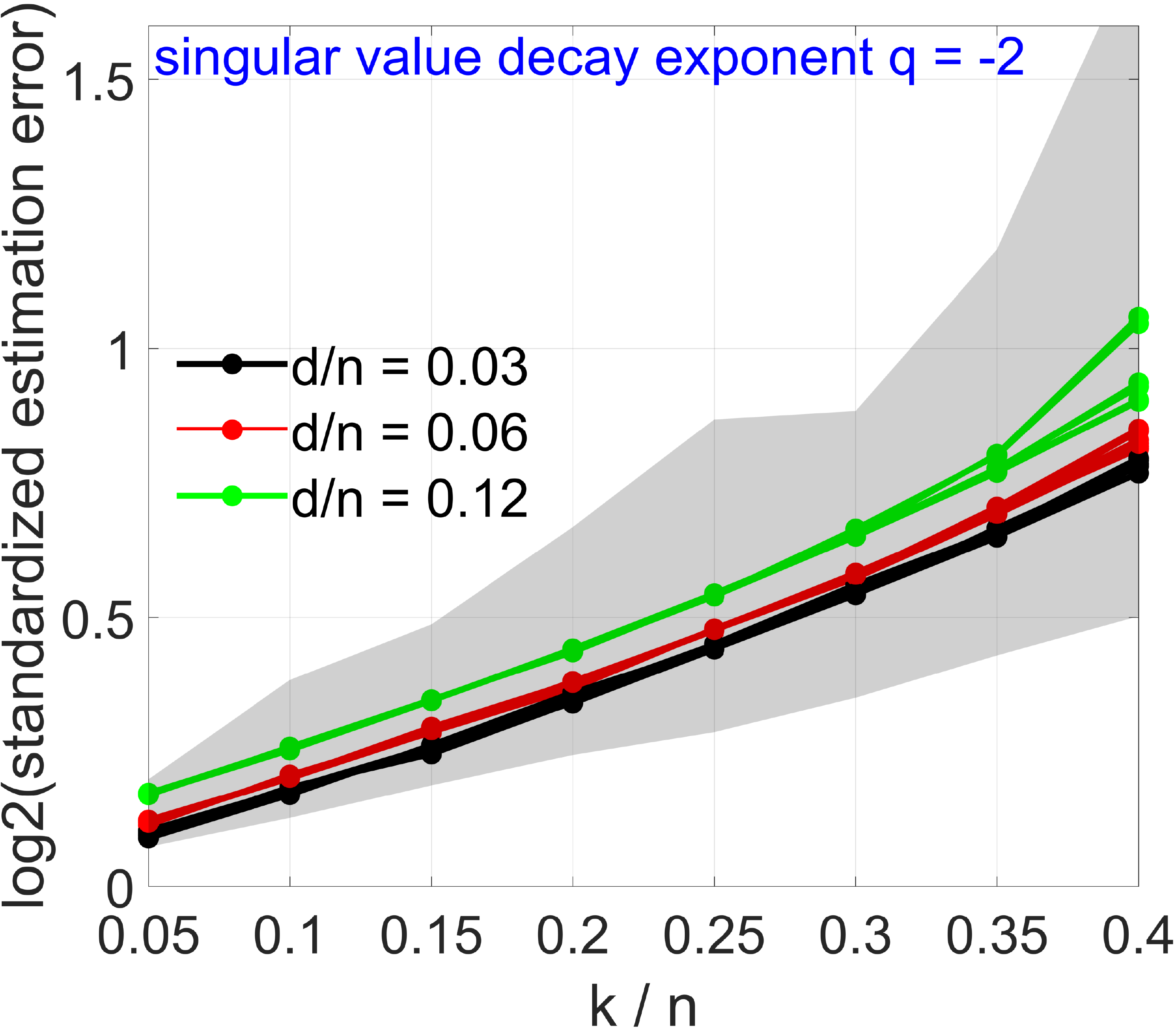} 
  & \hspace*{-1.2ex}\includegraphics[width = 0.32\textwidth]{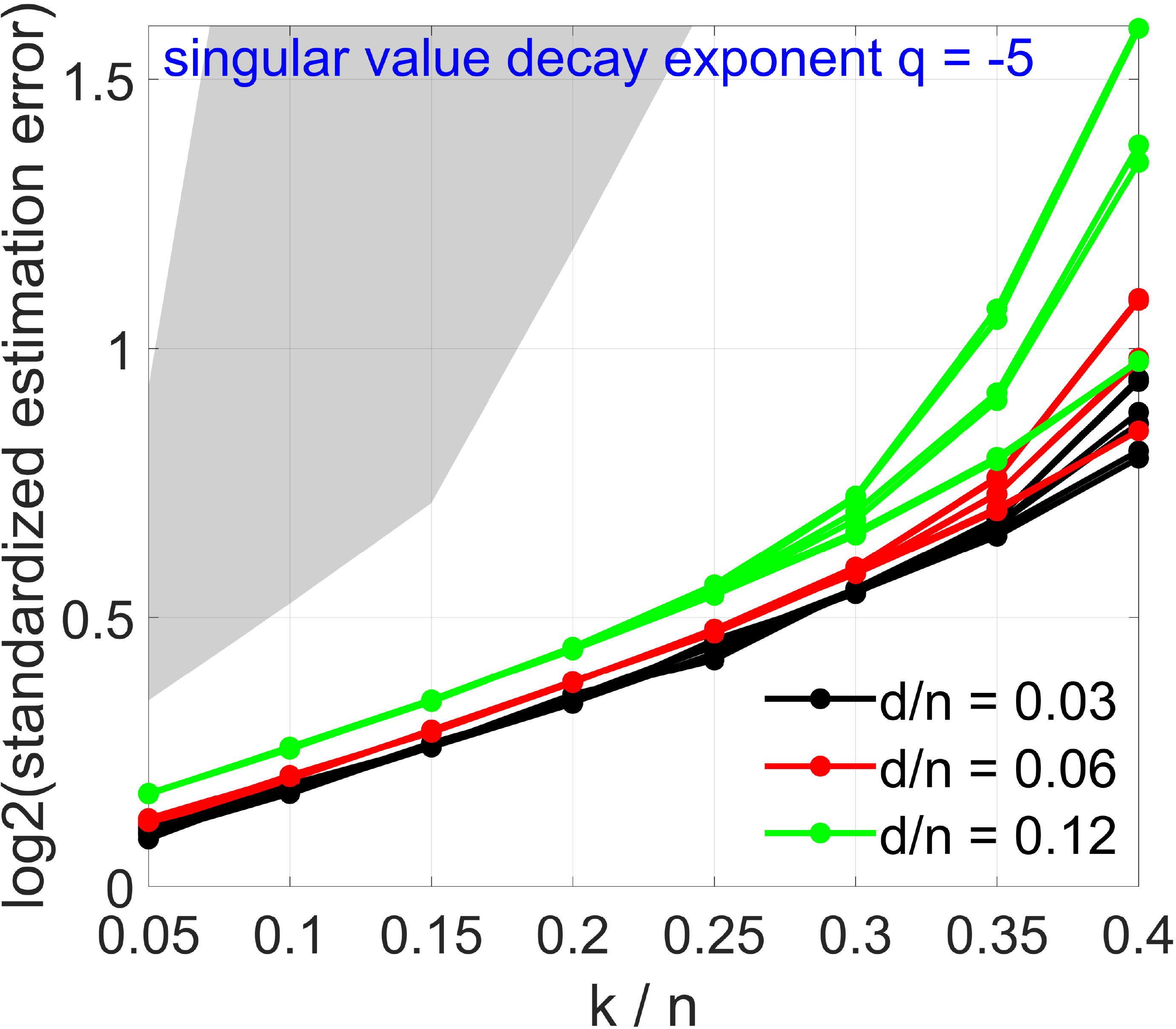}
\end{tabular}
\end{center}
\vspace*{-5ex}
\caption{Average standardized estimation errors $\frac{\sigma^{-1} m^{-1/2}  \nnorm{B^{\text{est}} - B^*}_F}{(d/(n-k))^{1/2}}$ ($\log_2$-scale) of the re-fitting approach {\bfseries \textsf{proposed+}} (lines) and {\bfseries \textsf{EM}} (shaded areas) for different rates of decay of the singular values of $B^*$ corresponding to decreasing $\text{srank}(B^*)$ from left to right. Curves for different combinations of $n$ and $\sigma$ appear in the same plots; due to poor clustering of those curves for {\bfseries \textsf{EM}} in conjunction with the chosen error normalization, their range is indicated by shaded areas for better readability.}\label{fig:estimation_errors_refit}  
\end{figure}

In Figure~\ref{fig:estimation_errors_refit}, the performance of {\bfseries \textsf{proposed+}} relative to {\bfseries \textsf{EM}} is investigated in more detail. In addition to poor scalability with $n$, the competitiveness of {\bfseries \textsf{EM}} also hinges on the stable rank of $B^*$ not to be too small. The sequence of three plots in 
Figure~\ref{fig:estimation_errors_refit} indicates a transition from superior to comparable and eventually not competitive performance of {\bfseries \textsf{EM}} as the singular values in $B^*$ decay more rapidly.   

Finally, Figure~\ref{fig:DS} provides a comparison to the approaches {\bfseries \textsf{DS-reg}} and {\bfseries \textsf{DS-cons}}. Despite the additional sophistication involved, the results only indicate minor improvements, 
which largely disappear when considering refitting. In particular, the observed gains in performance do not appear to justify the massive computational effort associated with the solution of the optimization problems underlying {\bfseries \textsf{DS-reg}} and {\bfseries \textsf{DS-cons}}.   

\begin{figure}[h!]
\begin{center}
\begin{tabular}{cc}
 \hspace*{.3ex} \includegraphics[width = 0.36\textwidth]{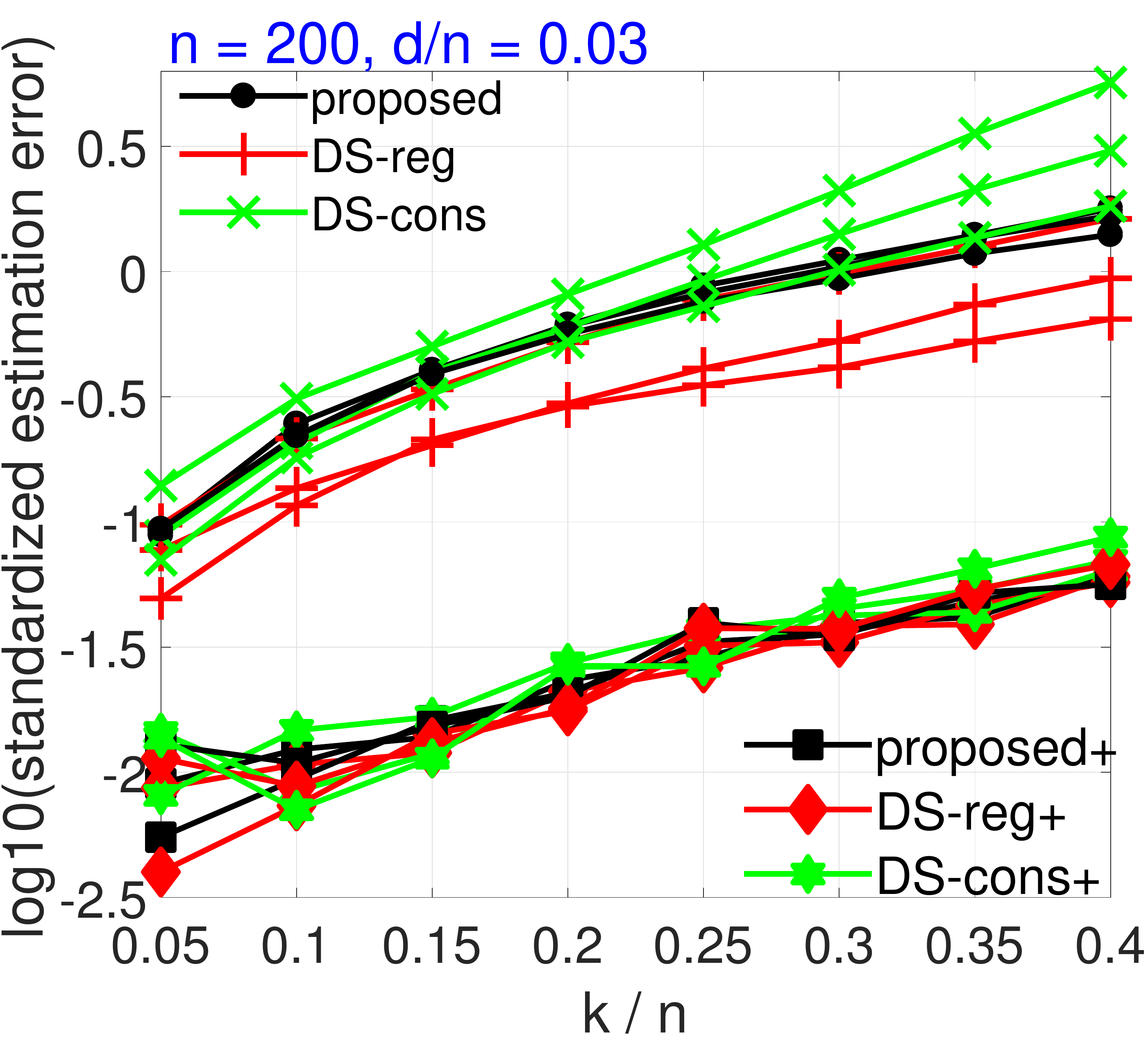}
  & \hspace*{8.4ex} \includegraphics[width = 0.36\textwidth]{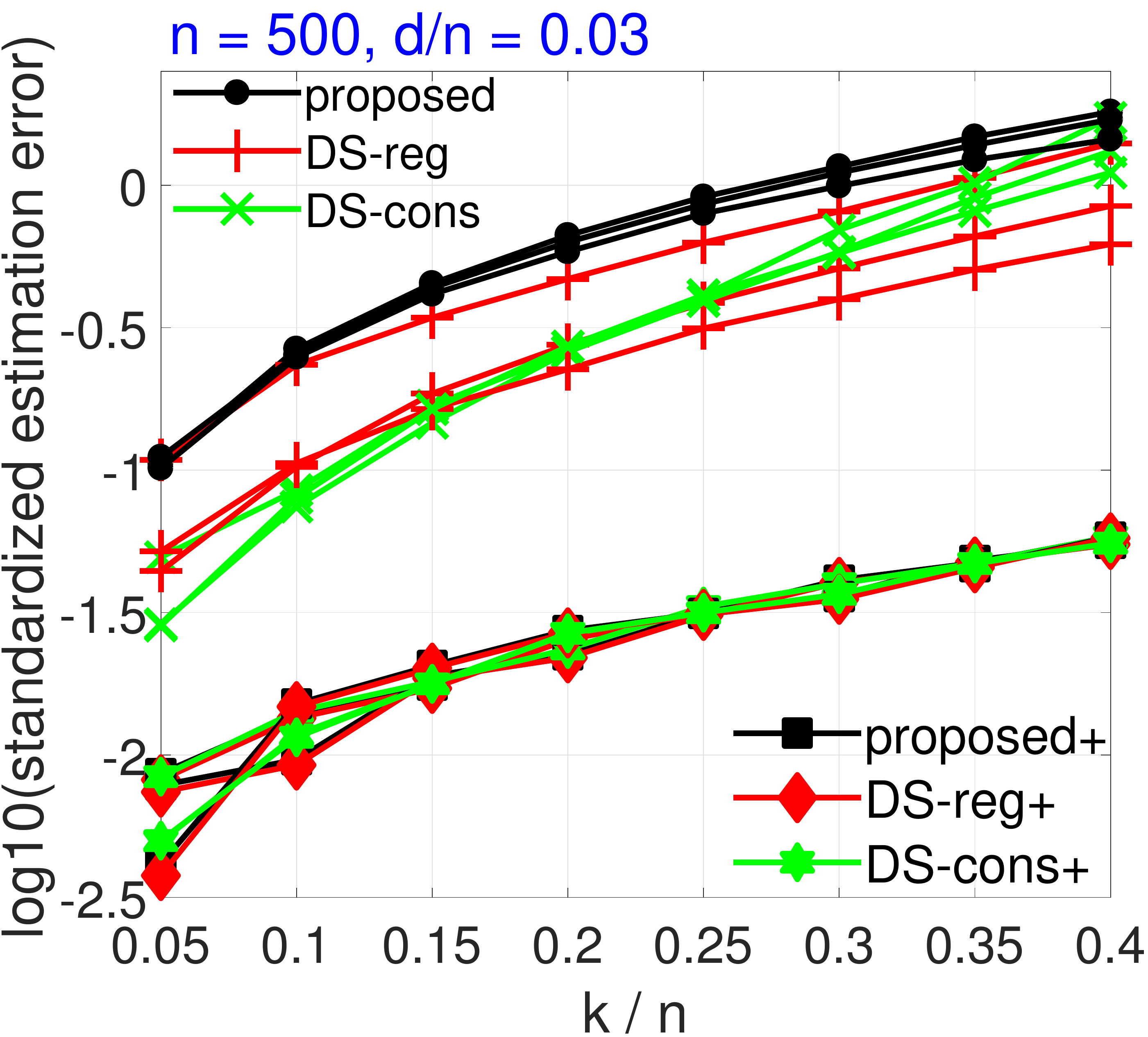}
\end{tabular}
\end{center}
\vspace*{-5ex}
\caption{Average standardized estimation errors~\eqref{eq:normalized_estimation} of {\bfseries \textsf{DS-reg}} and {\bfseries \textsf{DS-cons}} in comparison to {\bfseries \textsf{proposed}} along with their counterparts for refitting.}
\label{fig:DS}
\end{figure}

\vskip2ex
\noindent \emph{Results (II): Recovery of $\Theta^*$.}  We evaluate the normalized Hamming distance $\frac{1}{n} \su \mathbb{I}(\theta^*(i) \neq \wh{\theta}(i))$,
  where the matrix counterpart of $\wh{\theta}$ is given by $\wh{\Theta}$, i.e., the plug-in estimator~\eqref{eq:Theta_oracle_p} (modified to incorporate the constraint
  that $\Theta^*$ is a permutation) with $B^*$ replaced by $\wh{B}$ from~\eqref{eq:grouplasso}. In light of Theorem~\ref{theo:correspondence} and Lemmas~\ref{lem:gamma_min} \&~\ref{lem:gamma_min_const}, recovery of $\Theta^*$ is successful if $\gamma^2 \cdot \textsf{SNR} \asymp
  n^{-c/\text{srank}(B^*)} \cdot \textsf{SNR} $ is large enough. We therefore plot the normalized Hamming distance in dependency of
  the (log)``normalized'' SNR $-c/\text{srank}(B^*) \log(n) -2\log(\sigma)$, where the choice $c =0.7$ was found to ensure a reasonable alignment of the results
  across different experimental configurations. Figure~\ref{fig:permutationrecovery} indicates that recovery of $\Theta^*$ follows a phase transition: if the normalized SNR drops below a certain threshold, the normalized Hamming distance rises sharply. This observation is in alignment with the inachievability results in Theorem~\ref{theo:correspondence}. Interestingly, plug-in estimation (lower panel) does not lead to a significant degradation in performance compared to the situation in which $B^*$ is known (upper panel) even if the fraction of mismatches is noticeable ($k/n = 0.4$).

\begin{figure}[h!]
\begin{center}
\begin{tabular}{ccc}
 \hspace*{-1.88ex}  \includegraphics[width = 0.32\textwidth]{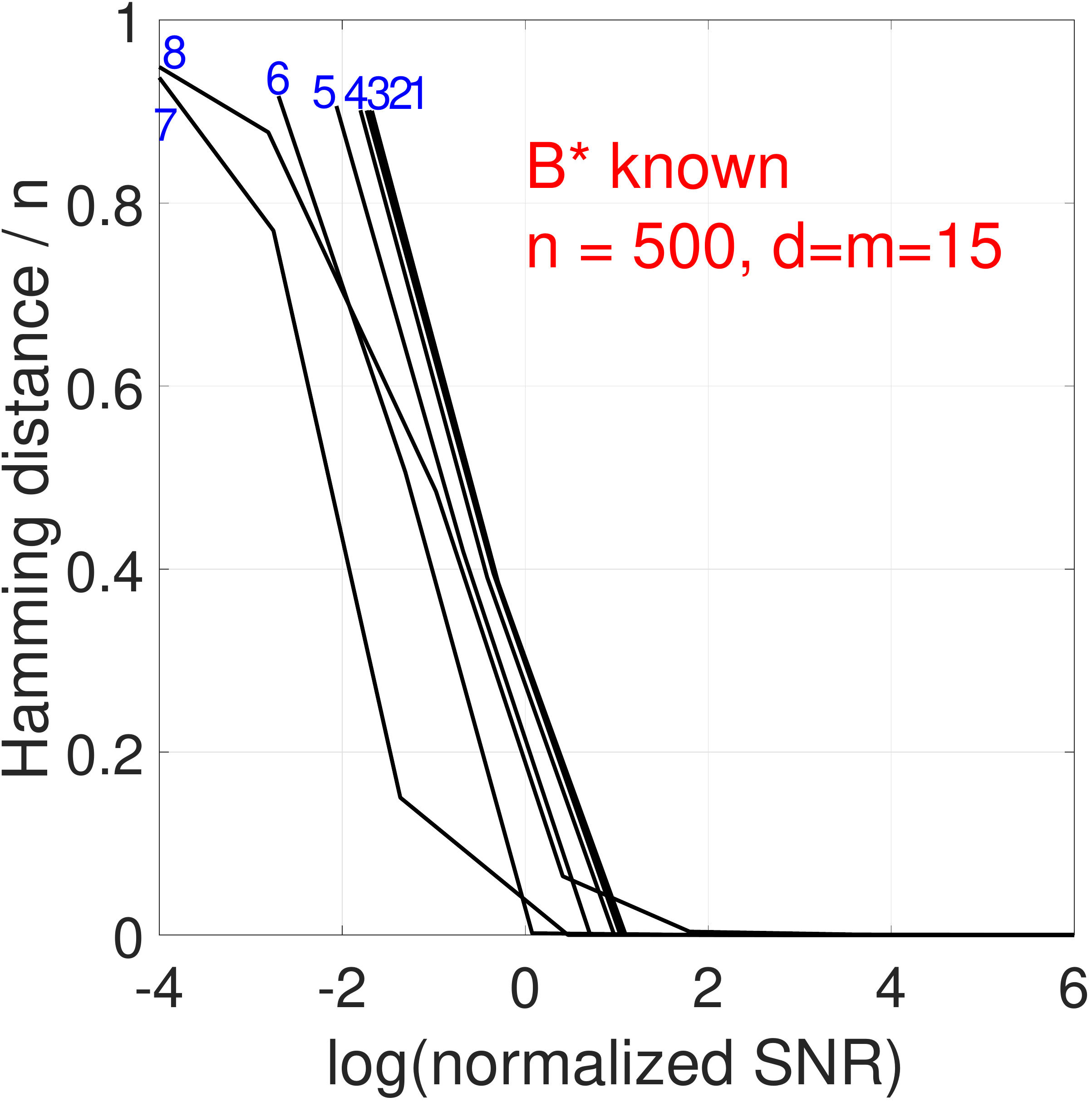}
  & \hspace*{-1.33ex} \includegraphics[width = 0.32\textwidth]{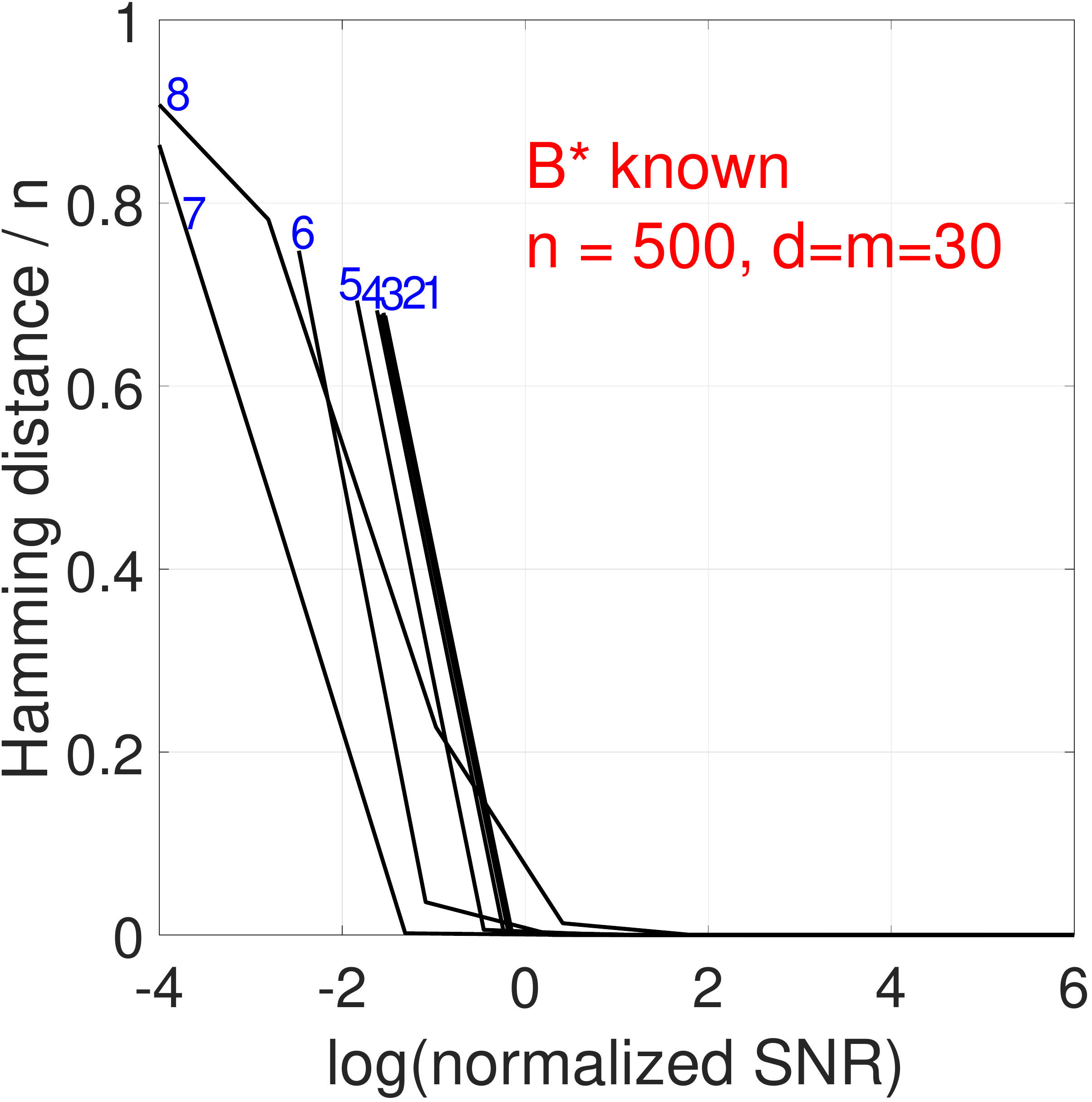} 
  & \hspace*{-1.33ex} \includegraphics[width = 0.32\textwidth]{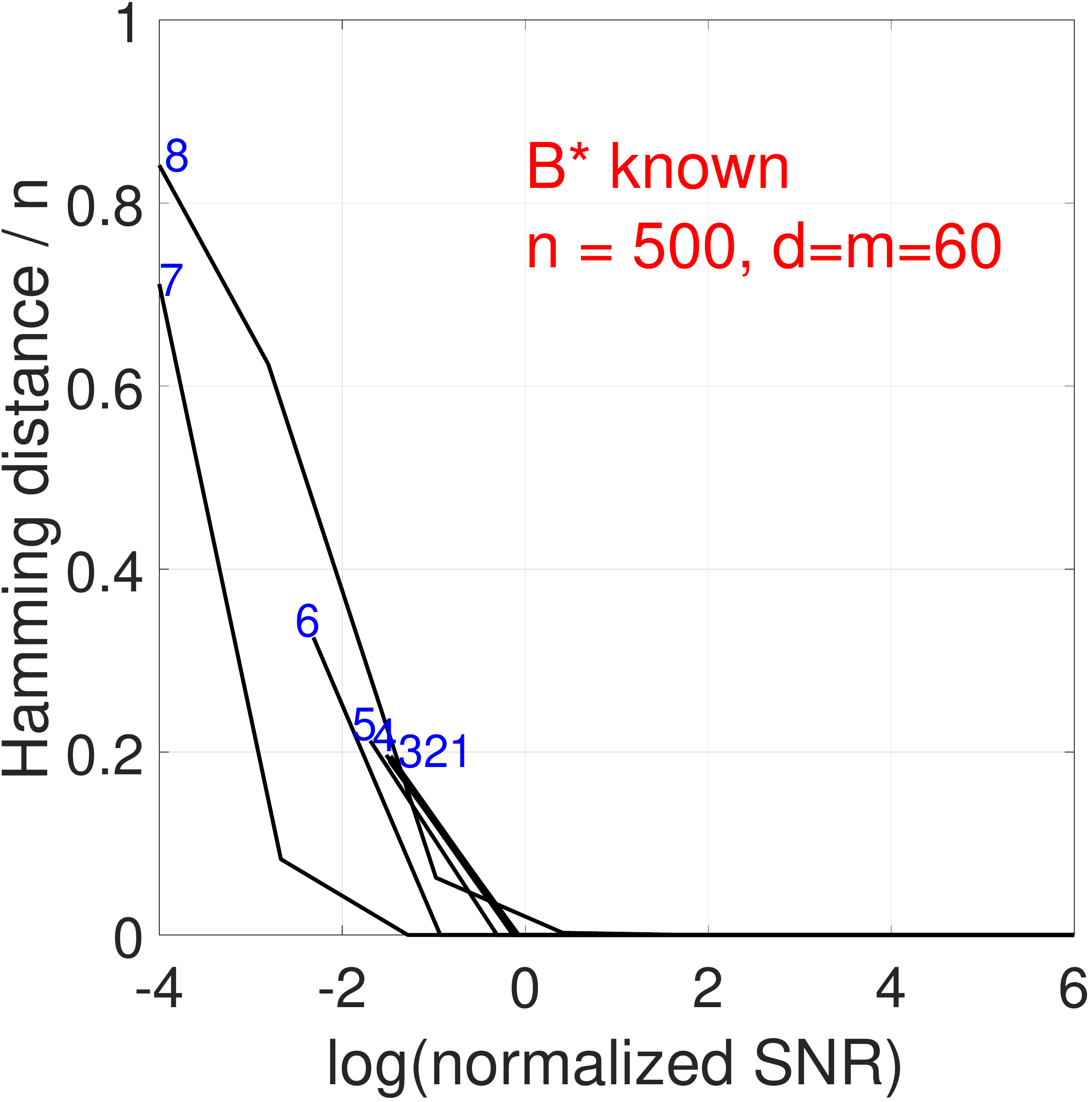}  \\[-.7ex]
  \hspace*{-1.85ex}  \includegraphics[width = 0.32\textwidth]{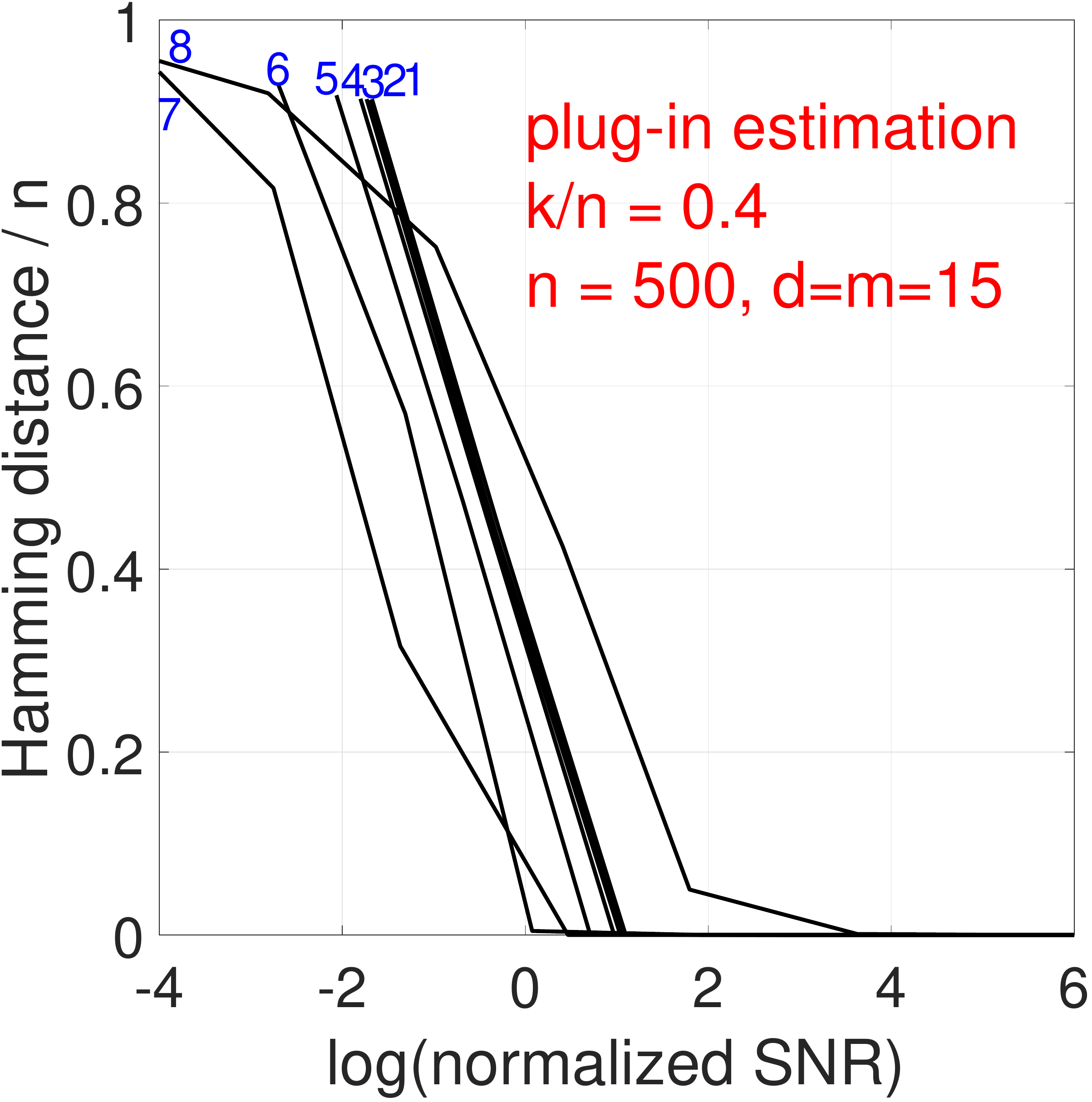}
  & \hspace*{-1.33ex} \includegraphics[width = 0.32\textwidth]{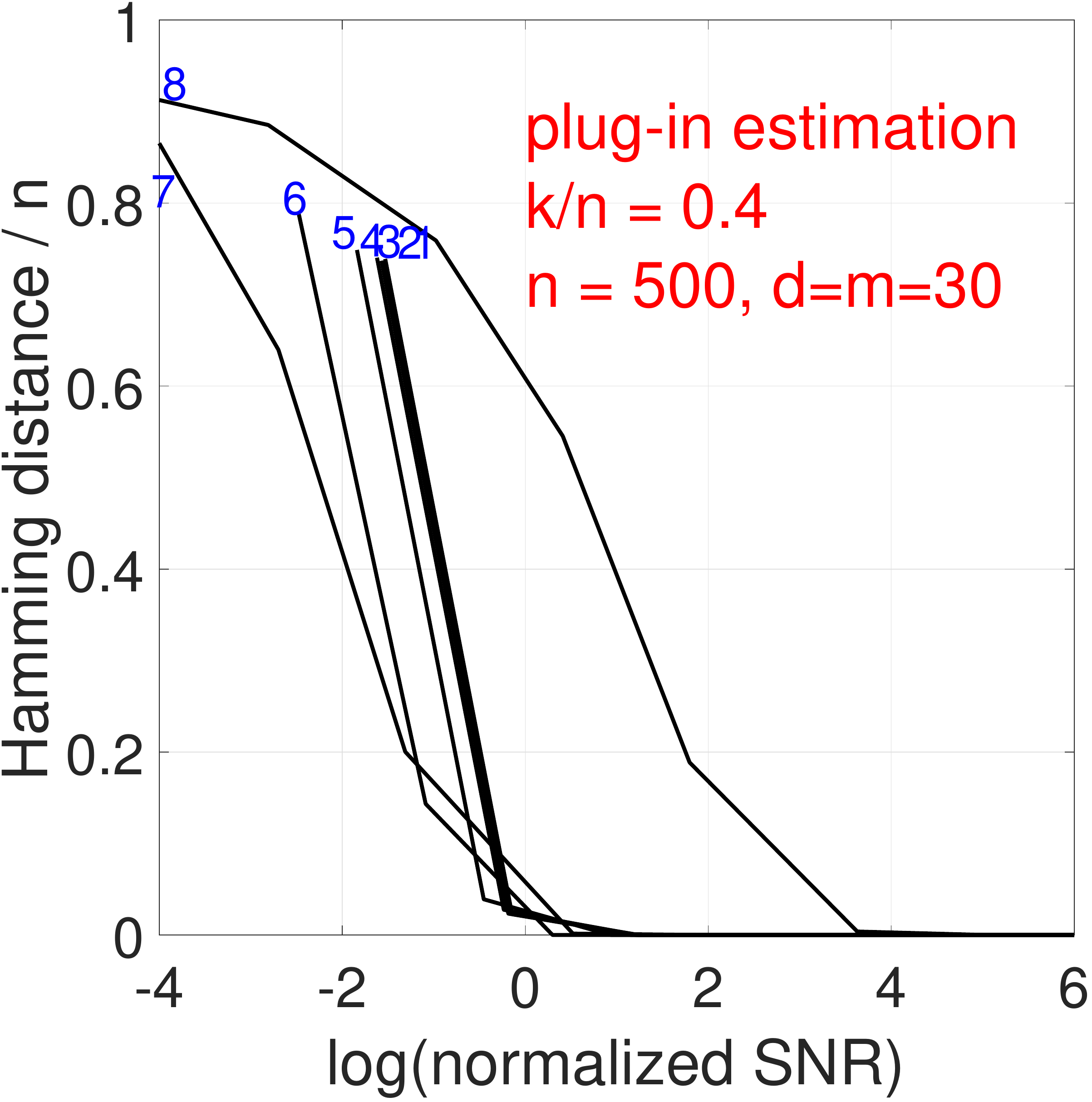} 
  & \hspace*{-1.33ex} \includegraphics[width = 0.32\textwidth]{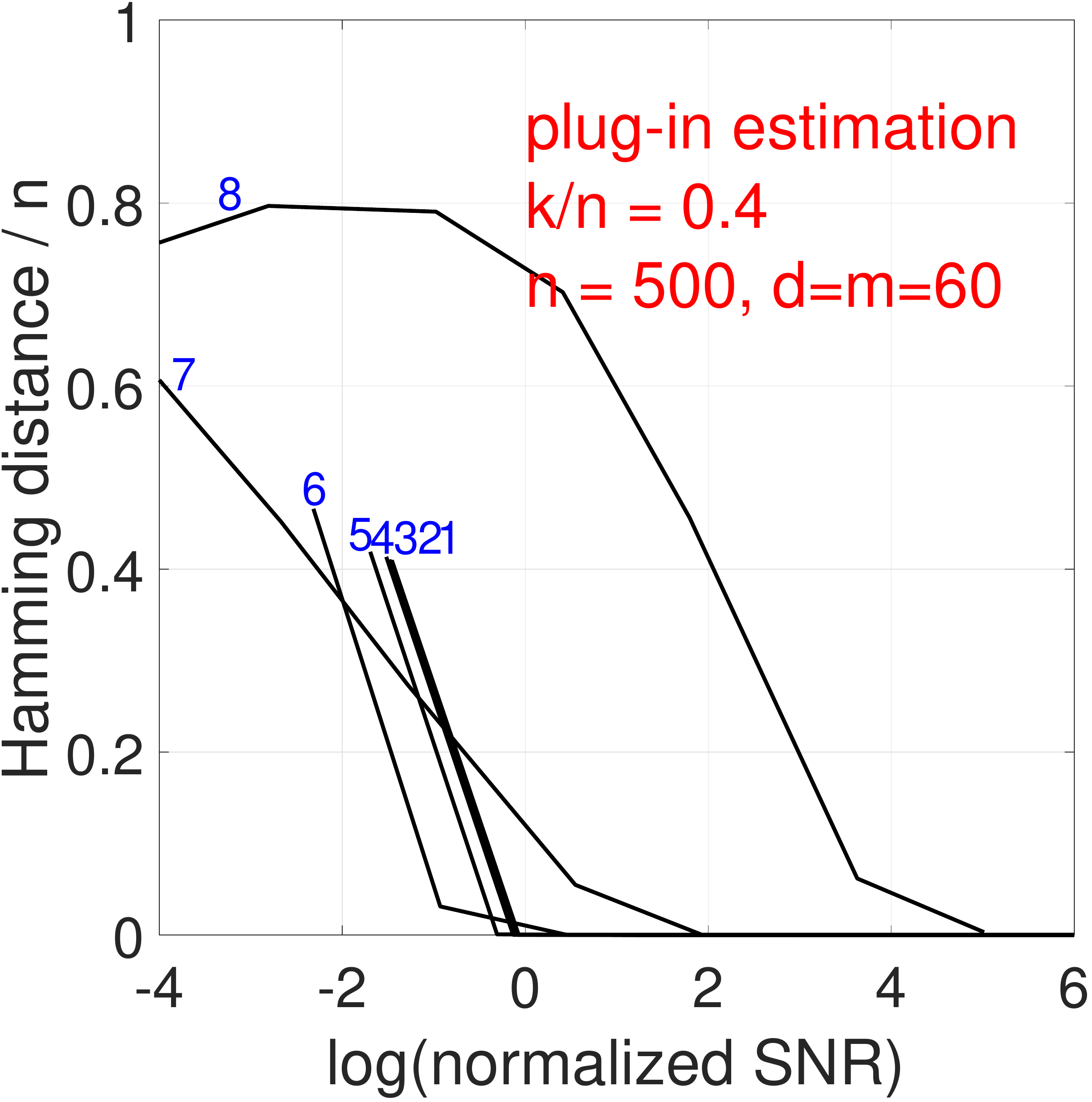} 
\end{tabular}
\end{center}
\caption{Average Hamming distance (scaled by $1/n$) between $\wh{\Theta}(B^*)$ and $\Theta^*$ (top row) and
  between $\wh{\Theta}(\wh{B})$ and $\Theta^*$ (bottom row) vs.~the (log) normalized SNR $=-c/\text{srank}(B^*) \log(n) -2\log(\sigma)$. The numbering indicates different values of the parameter $q$ controlling $\text{srank}(B^*)$, with higher numbers for larger $q$ (smaller $\text{srank}(B^*)$). The better the curves align, the more accurate the predicted dependence on the normalized SNR.}\label{fig:permutationrecovery}
\end{figure}
\renewcommand{\thefootnote}{\fnsymbol{footnote}}

\begin{table}[h!]
\caption{Overview on the data sets considered in this paragraph. $R^2$ here refers to the coefficient of determination in the absence of shuffling.\vspace{-0.2in}}\label{tab:datasets}
\begin{center}
{\small   \begin{tabular}{llllll}
\hline
   \textbf{Full Name}            & \textbf{Short Name}     & $n$ & $d$ & $m$ & $R^2$\\  \hline
  SARCOS robot arm~\citep{GPML}     &\textsf{sarcos} & 44,484    & 10    & 6 & 0.76     \\
  Flight Ticket Prices~\citep{mulan}  &\textsf{ftp}    & 335    & 30    &  6      & 0.89 \\
  Supply Chain Management~\citep{mulan}&\textsf{scm}    & 8,966   &  35   & 16     & 0.58\\
 \hline
\end{tabular}}
\end{center}
\end{table}

\noindent {\bfseries Real data.} We consider three benchmark data sets for multivariate regression as tabulated in Table~\ref{tab:datasets}. The data sets  are preprocessed versions of their original counterparts. The columns of the matrices
$X$ and $Y$ were centered, and $X$ was subsequently reduced to an adequate number of principal components
since due to (almost) linearly independent predictors the oracle least squares estimator (here assigned the role of $B^*$) would (essentially) not be defined. For \textsf{sarcos}, one of the response variables was removed to improve goodness
of fit, and hence to observe a better contrast in performance with an increasing fraction of mismatches. Likewise, two outliers with Cook's distance $>0.7$ were removed from \textsf{ftp}. We randomly permute varying fractions
(between $0.05$ and $0.4$) of the rows of $Y$, and investigate to what extent the proposed approach is
able to restore the goodness-of-fit (in terms of the coefficient of determination $R^2$\footnote{Here and in the sequel, the reported $R^2$ refers to the $R^2$ on the original data (i.e., before shuffling) given an estimator $B^{\text{est}}$ obtained from the shuffled data (cf.~caption of Figure~\ref{fig:realdata}).}) and the
regression coefficients of the least squares estimator in the complete absence of mismatches that here
takes the role of $B^*$. The performance of the proposed approach is compared to naive least squares based
on the permuted data. For each data set, we consider 20 independent random permutations for each value of
$k/n$. Performance with regard to permutation recovery is assessed via $\nnorm{(\wh{\Theta}(B^{\text{est}}) - \Theta^*) Y}_F/\nnorm{(I_n - \Theta^*) Y}_F$, i.e., via the relative reduction in error induced by random shuffling. This is a somewhat less stringent metric than the Hamming distance reported for synthetic data. The change in metric is motivated by the fact that exact permutation recovery cannot
be expected for the data sets under consideration given that separability in terms of~\eqref{eq:gamma} relative to the noise level is poor. Approach~\eqref{eq:grouplasso} is run with the choice $\lambda = M \cdot \frac{\wh{\sigma}_0}{\sqrt{n \cdot m}}$ for
$M \in \{0.25,0.5, 1, 2\}$ and $\wh{\sigma}_0$ denoting the root mean square error of the least squares estimator
in the absence of shuffling. We consider the same list of competitors and associated settings as for the synthetic data experiments, apart from the omission of {\bfseries \textsf{DS-reg}} and {\bfseries \textsf{DS-cons}} given the aforementioned scalability issues. 

\begin{figure}[h!]
\begin{center}
\begin{tabular}{ccc}
\hspace*{-1.88ex} \includegraphics[width = 0.32\textwidth]{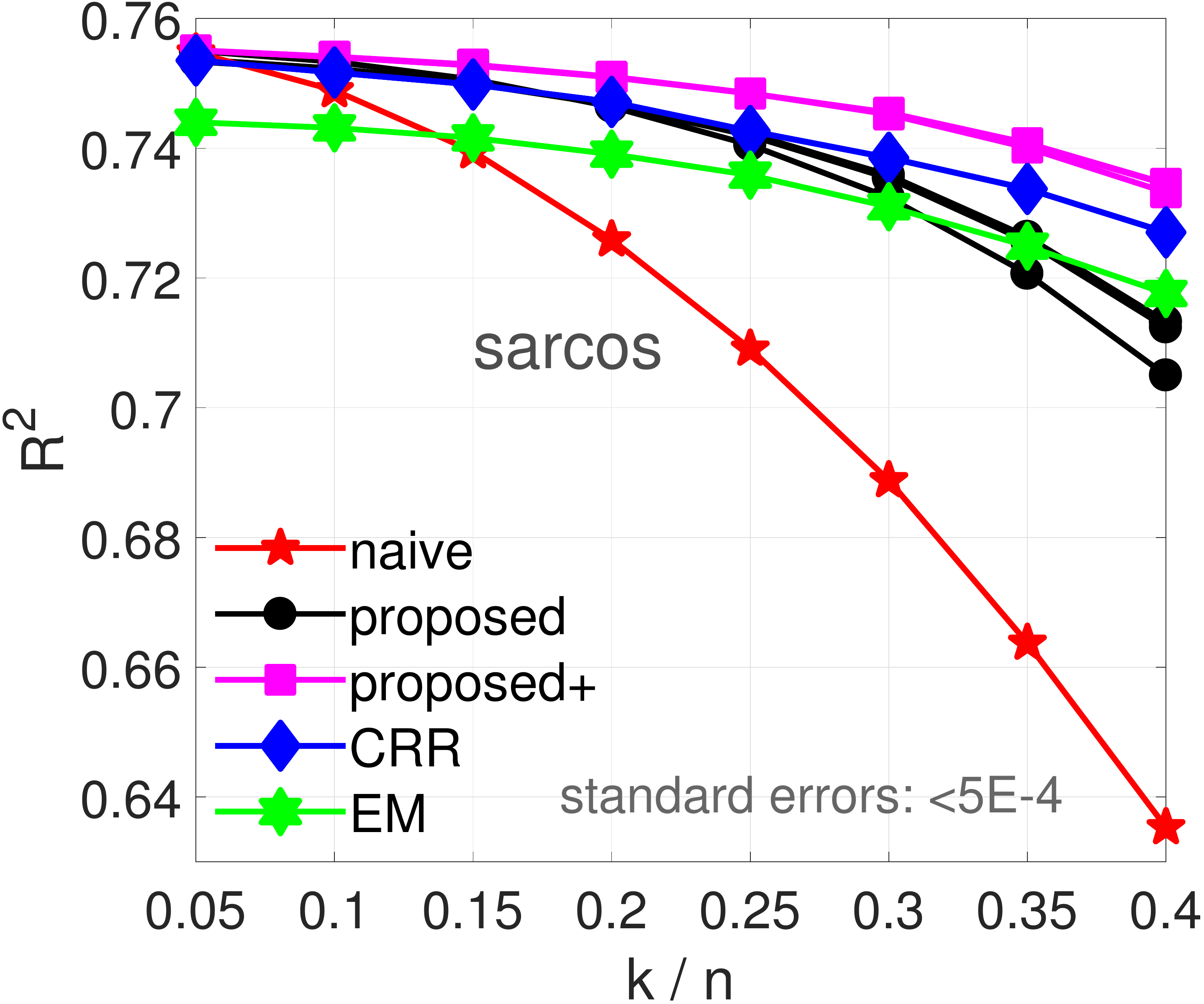}
  \hspace*{-1.33ex}   & \includegraphics[width = 0.32\textwidth]{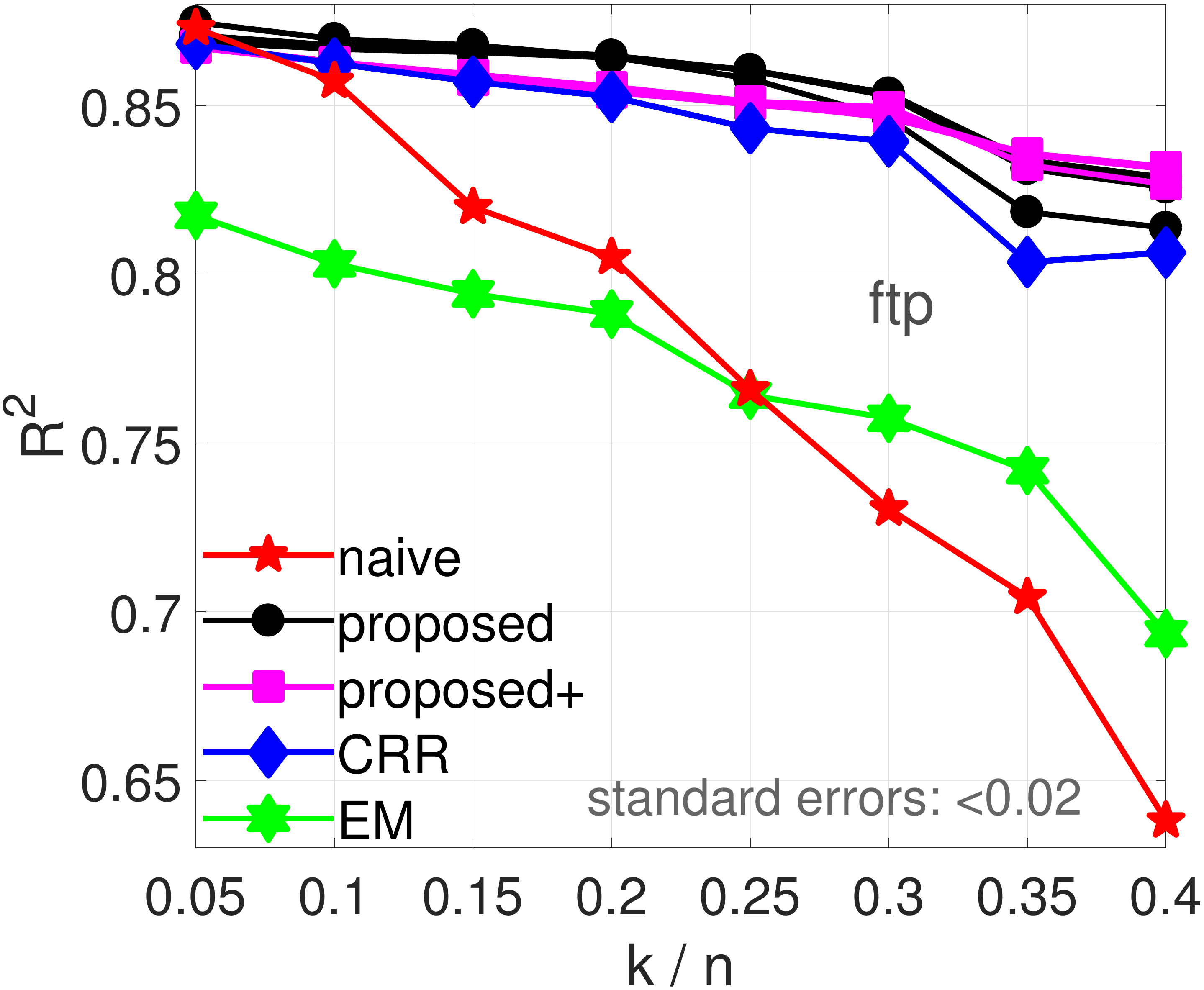} 
\hspace*{-1.33ex}   & \includegraphics[width = 0.32\textwidth]{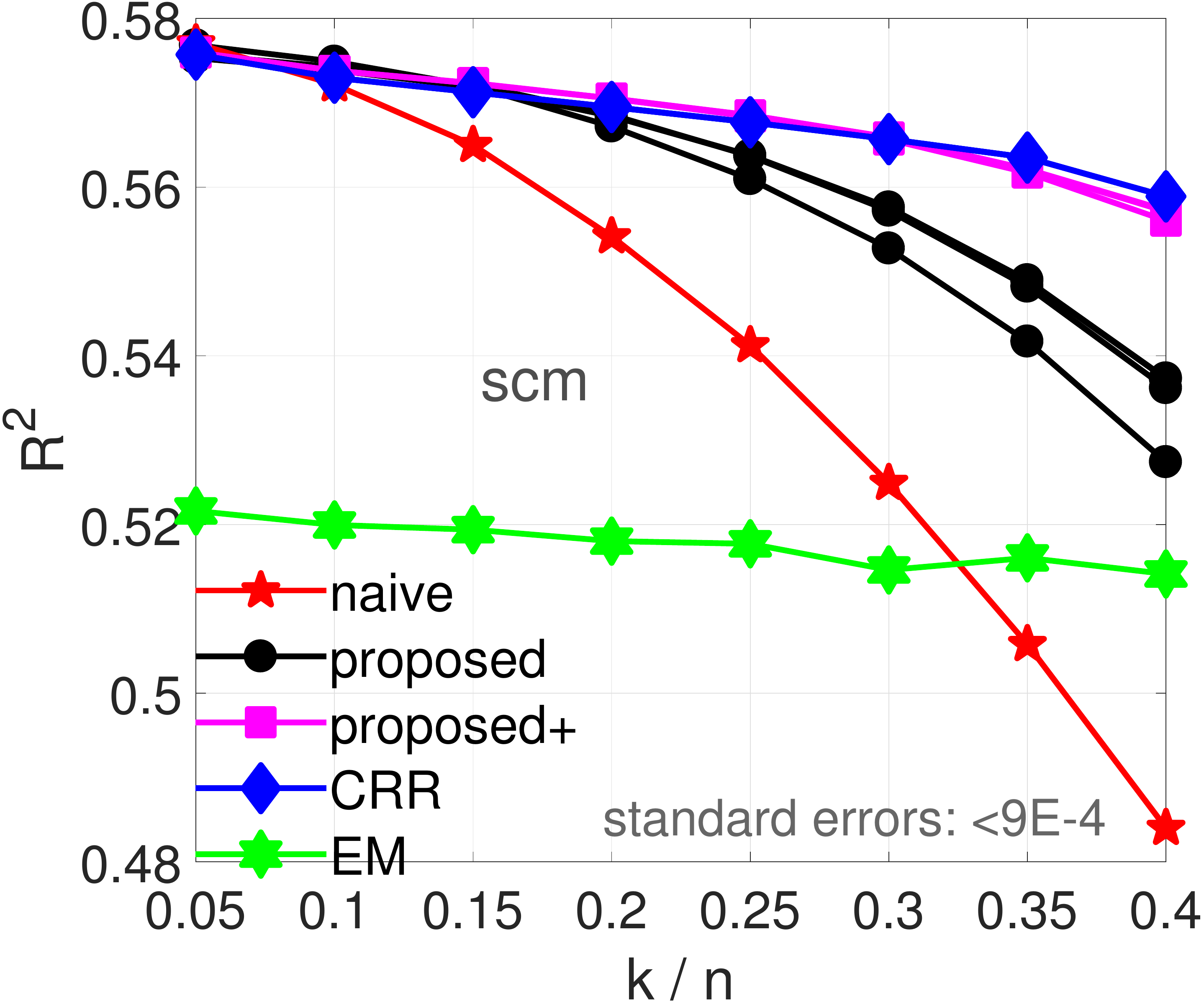}  \\[-.7ex]
\hspace*{-1.88ex}   \includegraphics[width = 0.32\textwidth]{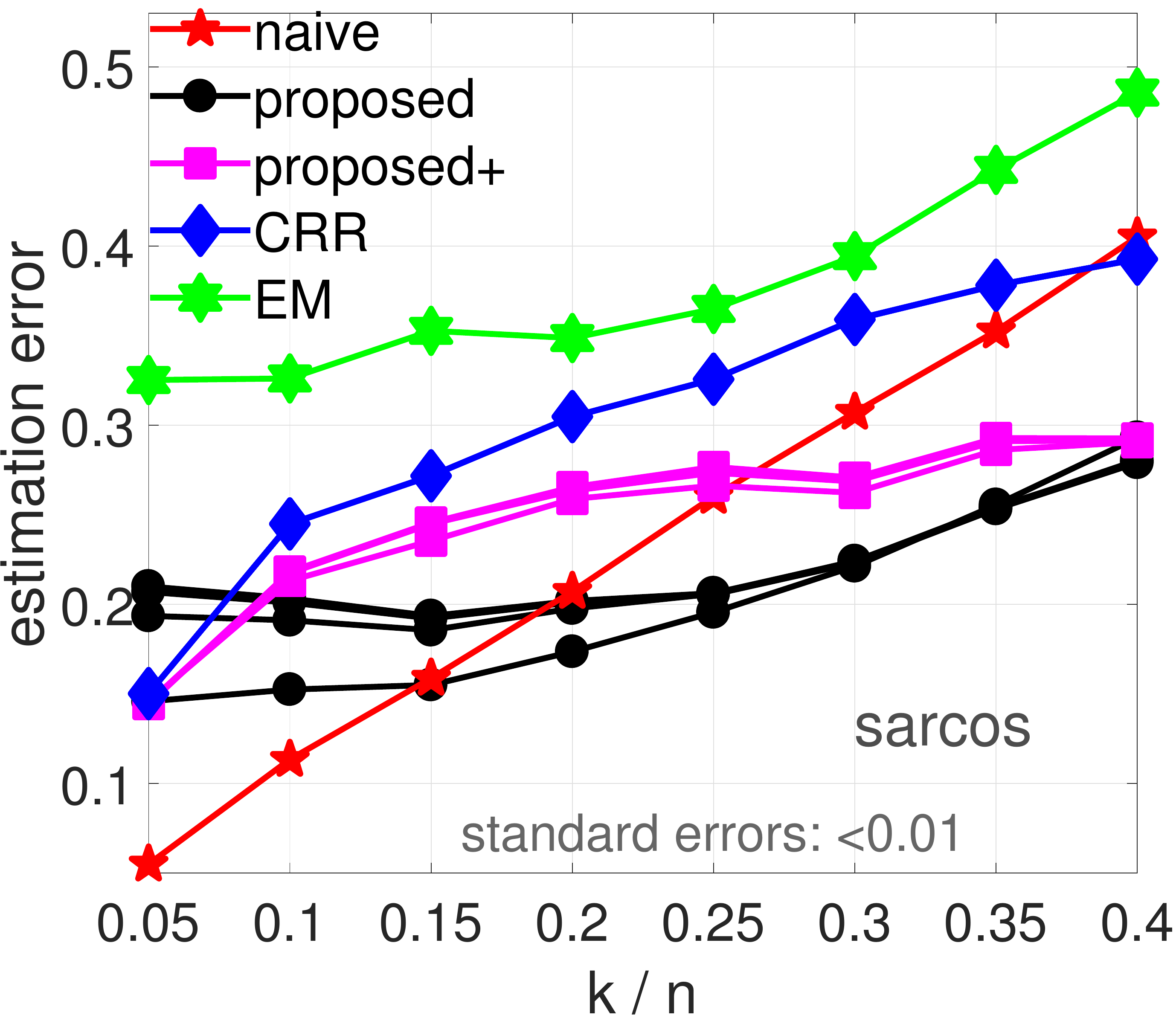}
\hspace*{-1.33ex}  & \includegraphics[width = 0.32\textwidth]{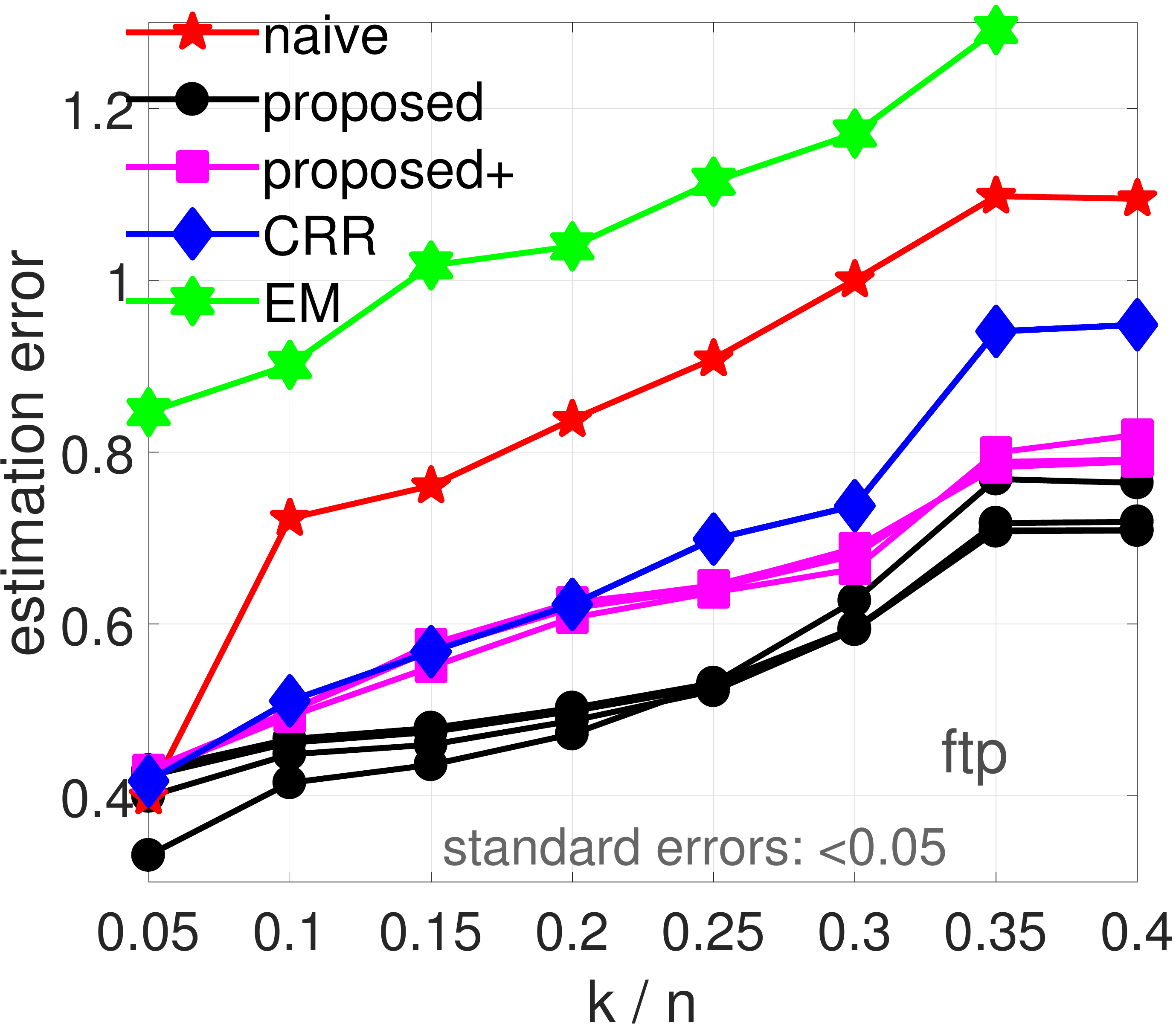} 
\hspace*{-1.33ex}  & \includegraphics[width = 0.32\textwidth]{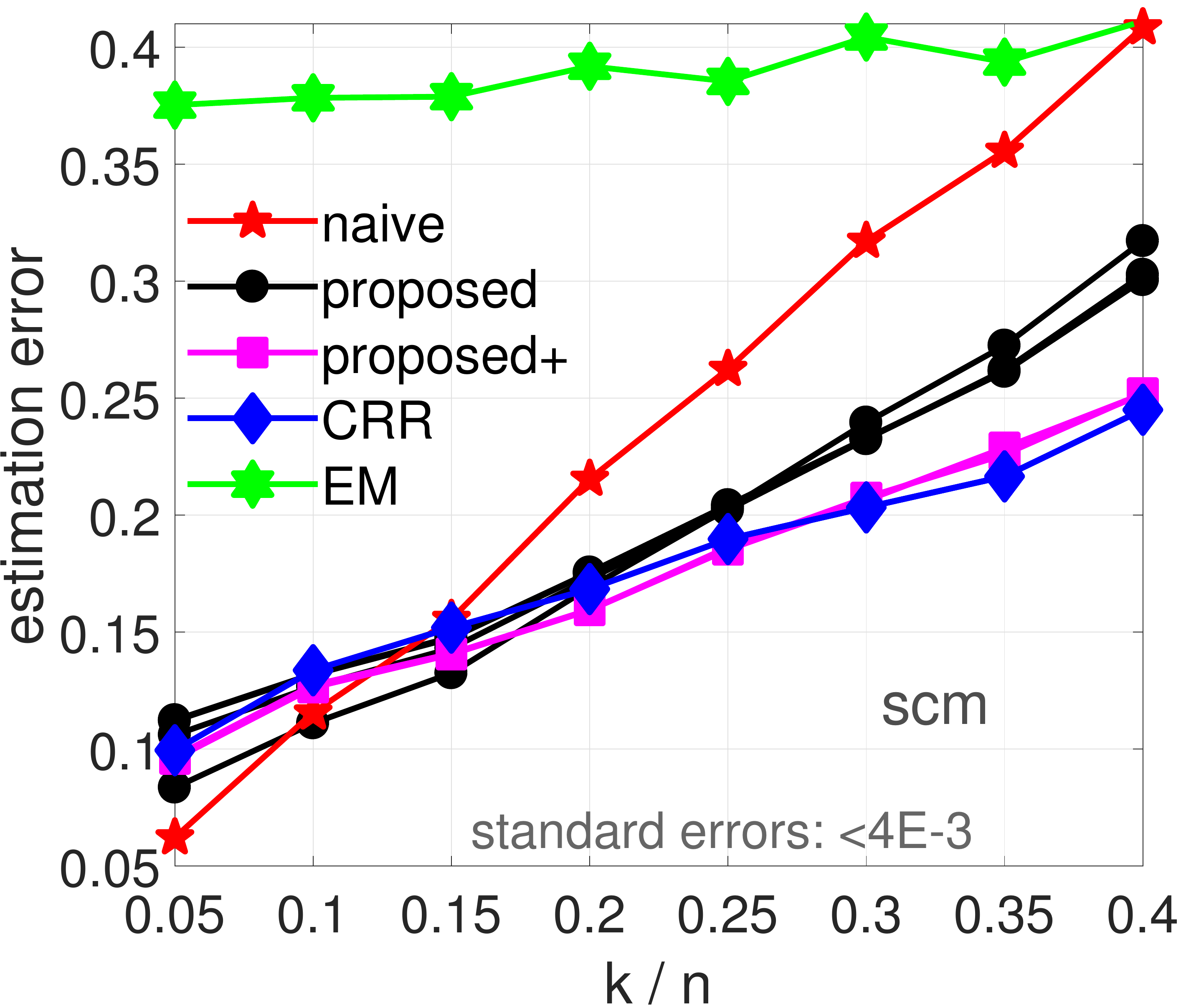} \\[-.7ex]
\hspace*{-1.88ex}    \includegraphics[width = 0.32\textwidth]{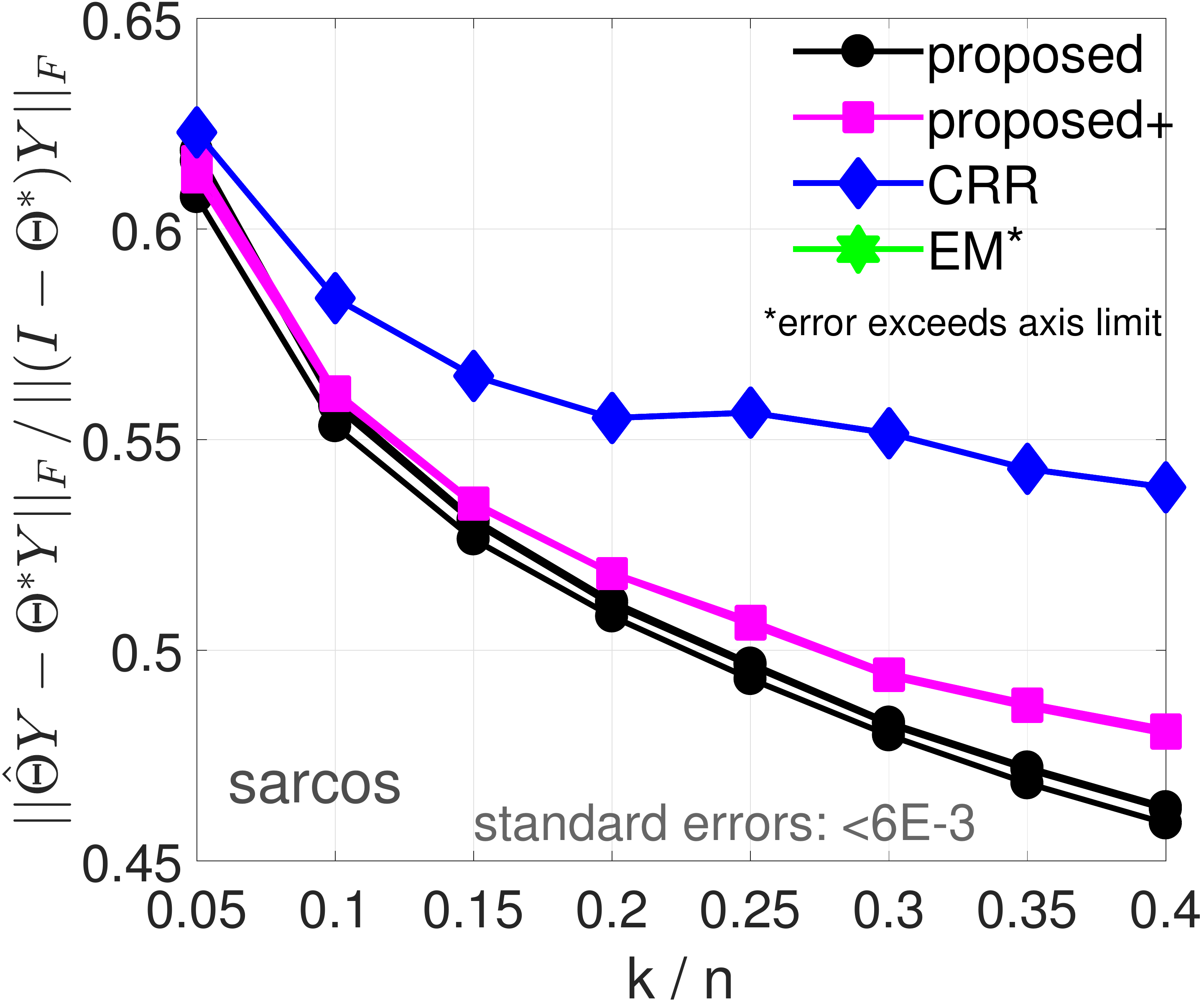}
\hspace*{-1.33ex}  & \includegraphics[width = 0.32\textwidth]{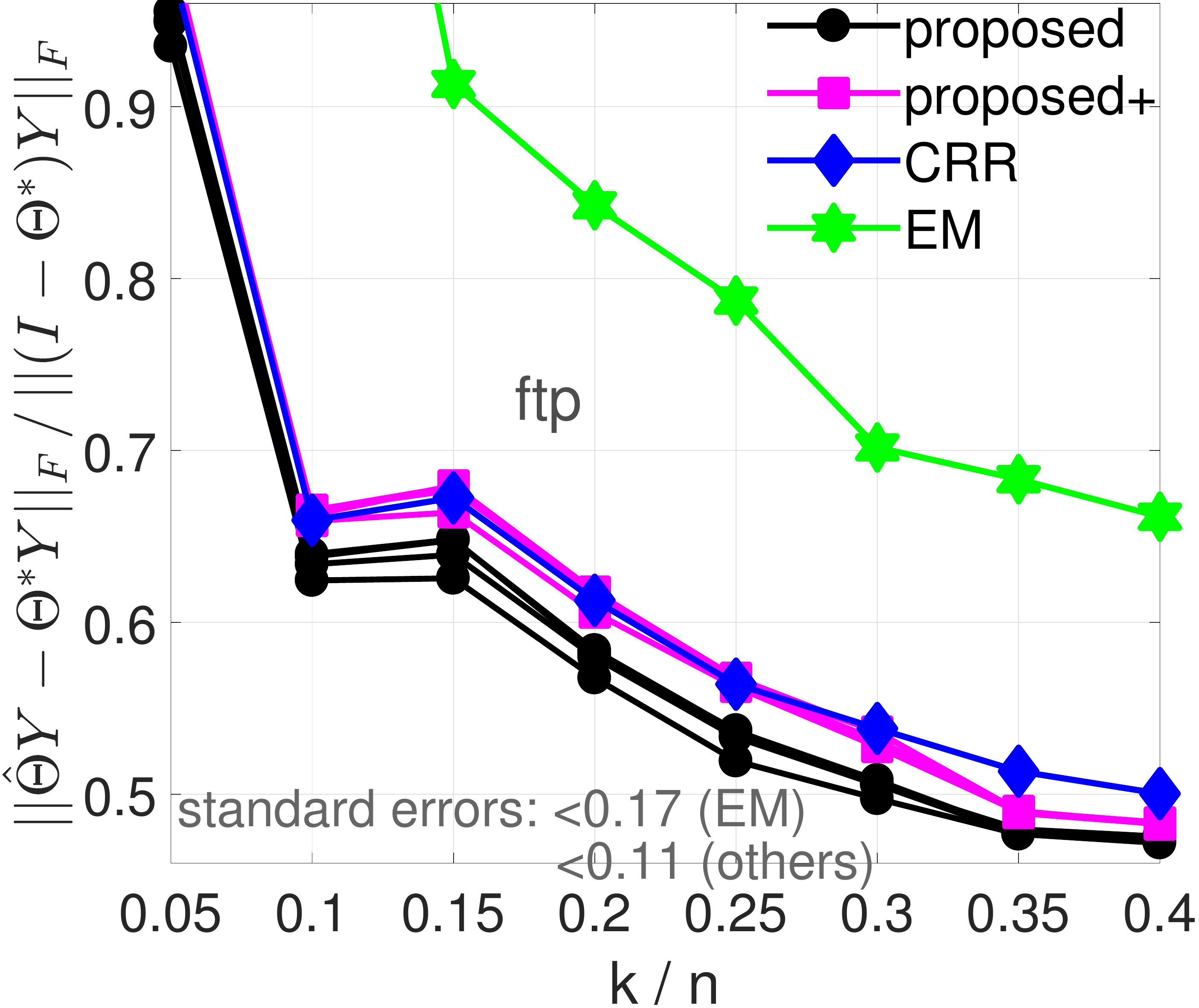} 
\hspace*{-1.33ex}  & \includegraphics[width = 0.32\textwidth]{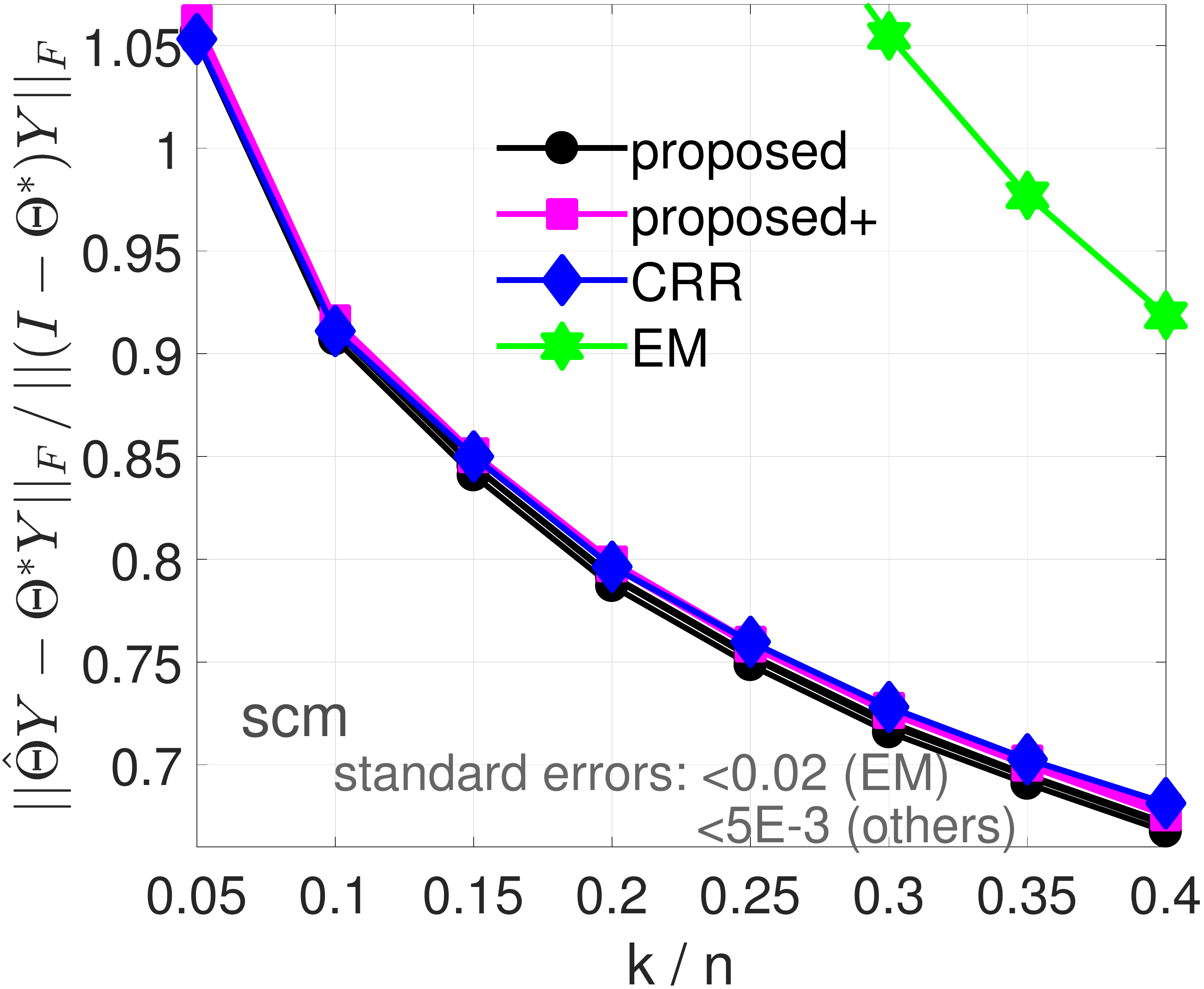} 
\end{tabular}
\end{center}
\vspace*{-4ex}
\caption{Top: Goodness of fit in terms of the coefficient of determination $R^2 = \nnorm{Y - XB^{\text{est}}}_F^2 / \nnorm{Y}_F^2$. Middle: Relative estimation errors $\nnorm{B^{\text{est}} - B^*}_F/\nnorm{B^*}_F$, where $B^*$ here refers to the oracle least squares estimator equipped with knowledge of $\Theta^*$. Bottom: Performance in approximate recovery of $\Theta^*$ evaluated in terms of $\nnorm{(\wh{\Theta}(B^{\text{est}}) - \Theta^*) Y}_F/\nnorm{(I_n - \Theta^*) Y}_F$. Each of the black lines corresponds to one specific
value of the multiplier $M$ in $\lambda = M \wh{\sigma}_0/\sqrt{n \cdot m}$.}\label{fig:realdata}
\end{figure}
As can be seen from Figure~\ref{fig:realdata}, the results are not sensitive to
the choice of the multiplier $M$. The proposed approach consistently improves over naive least squares once the
fraction of mismatches exceeds $0.2$, and yields more pronounced improvements as that fraction increases. Two-stage estimation of
$\Theta^*$ yields noticeable reductions of the error $\nnorm{(I_n - \Theta^*) Y}_F$ induced by shuffling. Approaches {\bfseries \textsf{proposed+}} and {\bfseries \textsf{CRR}} (equipped with knowledge of $k$), yield only occasional and rather minor improvements over {\bfseries \textsf{proposed}}. Interestingly, {\bfseries \textsf{EM}} exhibits poor performance even for moderate $n$ (data set \textsf{ftp}), often falling short of {\bfseries \textsf{naive}} in sharp contrast to the results observed for the synthetic data. This raises the question whether competitive performance of {\bfseries \textsf{EM}} is tied to specific properties of Gaussian design.      
\vskip2ex
\noindent {\bfseries Case study.} We here illustrate the use of the proposed approach and its competitors in data integration scenarios based on a setting designed to mimic the analysis of data obtained from multiple sensors in an asynchronous fashion. The specific example presented in the sequel is based on the Multi-Site Beijing Air Quality data set~\citep{BeijingAir} which contains measurements of various air pollutants and climate parameters recorded at an hourly rate from March 1st, 2013 to February 28th, 2017. For demonstration purposes, we confine ourselves to complete records from the site Nongzhanguan for the years 2016 and 2017 ($n = 9,726$). A linear regression model is fitted in which the response variables are given by the square roots of the air concentrations of the pollutants PM2.5, PM10, SO2, NO2, O3 ($m = 5$) and the predictor variables are given by temperature, dew point temperature, air pressure, precipitation, wind speed, CO concentration, and all associated quadratic terms plus intercept ($d = 28$). This model achieves an $R^2 \approx0.725$.

At the next stage, we suppose that the response and predictor variables are collected by two different sensors, with temperature and air pressure collected by both sensors. In order to recreate the situation of mismatch error in record linkage that commonly results from the use of inexact or erroneous identifiers~\citep{Christen2012}, the two sets of measurements are merged based on incomplete time stamps (day and hour are missing) and inaccurate temperature and air pressure measurements (rounded to integers). Requiring that linked records must agree on this
combination of four matching variables implies that the merged file is of the form $[\Theta^* X \;\; Y]$, where
$\Theta^*$ is a permutation matrix that can be arranged in block diagonal structure with the blocks corresponding to groups of measurements having the same combination of matching variables. It is assumed that the data analyst has no knowledge about the linkage process, in particular about the use of matching variables and the resulting block structure of $\Theta^*$; this setting is typically referred to as ``secondary analysis'' in the record linkage literature~\citep{Chambers2019improved}.

Only $1,379$ out of $n = 9,726$ observations yield singleton blocks, i.e., they are uniquely identifiable based on the matching variables, while all other observations belong to blocks of size two up to $20$. To fix $\Theta^* = \text{bdiag}(\Theta_{(1)}^*, \ldots, \Theta_{(K)}^*)$, we set $\Theta_{(l)}^* = \argmax_{\Theta} \nnorm{Y_{(l)} - \Theta Y_{(l)}}_F^2$\footnote{This optimization problem reduces to a linear assignment problem.} where $Y_{(l)}$ denotes the rows of $Y$ corresponding to the $l$-th block, $1 \leq l \leq K = 3,625$, and the $\argmax$ is over all permutations associated with the respective block. While the resulting nominal fraction of mismatches $|\{i: \, \Theta_{ii}^* \neq 1 \} |/n \approx0.63$ does not appear to fit the sparse regime, the majority of mismatches do not introduce substantial contamination in the sense that $\nnorm{Y_{i,:} - Y_{\theta^*(i), :}}_F$ is within the noise level; to a good extent, this can be attributed to the fact that the responses tend to be more similar within blocks than across blocks.

The same regression model as above is fitted based on the merged records $[\Theta^*X \;\; Y]$. Naive least squares regression leads to a noticeable drop of the $R^2 \approx0.66$ and a root mean squared error (RMSE) of $431.4$ relative to the original (i.e., based on $[X \; Y]$) regression parameter estimate $B^*$. Application of the approach~\eqref{eq:grouplasso} with the choice $\lambda = \frac{\wh{\sigma}}{\sqrt{n \cdot m}}$, where $\wh{\sigma}$ can be taken as the root mean squared prediction error of either the original or the naive least squares fit, lifts the $R^2$ to $0.70$ and reduces the RMSE for the regression parameter to $318.1$. Following the proposed two-stage method, we use the resulting estimator $\wh{B}$ to
correct mismatches by solving the following optimization problem:

\begin{align}
  \min_{\Pi \in \mc{P}} \nnorm{Y - \Pi (\Theta^* X) \wh{B}}_F^2 \quad\;\; \text{subject to \;} &\Pi_{ii} = 1 \; \text{if} \; \, \nnorm{Y_{i,:} - \Theta^*_{i,:} X \wh{B}}_F \leq \sqrt{2m}\wh{\sigma} \label{eq:rematch_casestudy}\\
                                                                                               &\Pi_{ij} = 0 \; \text{if} \; \, \nnorm{Y_{i,:} - \Theta^*_{i,:} X \wh{B}}_F \leq \nnorm{Y_{i,:} - \Theta^*_{j,:} X \wh{B}}_F, \notag 
\end{align}
for $1 \leq i,j \leq n$, where $\mc{P}$ denotes the set of all permutation matrices~\eqref{eq:permutations}. Note that perfect recovery corresponds to $\Pi = (\Theta^*)^{-1}$. The additional constraints are imposed as a means to achieve
sparsity of $\Pi$ in the sense of small Hamming distance to the identity: the first constraint sets diagonal elements to one for which the discrepancy between observed and fitted values is within a factor of $\sqrt{2}$ of the noise level, and the second constraint excludes pairings that do not lead to improvements in terms of fit.

Given the minimizer $\wh{\Pi}$ of~\eqref{eq:rematch_casestudy}, it is worth attempting a re-fit of the regression model based on data
$[\wh{\Pi} (\Theta^* X) \;\; Y]$. As shown in the top panel of Figure~\ref{fig:casestudy}, the solution $\wh{\Pi}$ is able to reduce mismatch error to an extent
that is comparable to the error of the original regression model. Moreover, the bottom panel of Figure~\ref{fig:casestudy} shows that the fitted values
of the re-fit agree considerably better with the fitted values based on $[X \; Y]$ relative to the fitted values of naive least squares (plot of the first principal component is meaningful here since here $\text{srank}(Y) \approx 1$). Accordingly,
the $R^2$ of the refit increases to $0.715$ close to the original $0.725$.

In addition, we consider the competitors {\bfseries \textsf{CRR}} and {\bfseries \textsf{EM}} as alternatives. {\bfseries \textsf{CRR}} achieves slightly better performance than~\eqref{eq:grouplasso} with
an oracular choice of its tuning parameter (sparsity level $k$); choosing the latter so as to minimize the $R^2$ at $0.717$ yields the choice
$k/n =0.19$ while an $R^2$ of $0.71$ or higher is achieved within the entire range $k/n \in [0.09, 0.32]$. The "effective" fraction of mismatches
is expected to be contained in that interval. By contrast, the performance of {\bfseries \textsf{EM}} is rather poor, with an additional drop of the $R^2$ compared
to naive least squares. At the same time, the $R^2$ achieved by {\bfseries \textsf{EM}} on the mismatched data is close to $0.8$ (i.e., much larger than $0.725$), which indicates substantial overfitting. A numerical summary of the performance of the approaches compared here can be found in Table~\ref{tab:casestudy}.      
 
\begin{table}
\caption{{\bfseries \textsf{oracle}}: least squares fit based on the original data $[X \; Y]$; {\bfseries \textsf{prop}}: short for {\bfseries \textsf{proposed}}; {\bfseries \textsf{prop}}-$\wh{\Pi}$, {\bfseries \textsf{CRR}}-$\wh{\Pi}$: least squares refit after solving~\eqref{eq:rematch_casestudy} with $\wh{B}$ obtained according to~\eqref{eq:grouplasso} and~\eqref{eq:Bhatia}, respectively. The second
  and third row contain the RMSE in estimating $B^*$ including intercepts ($a$) and not including intercepts
  ($b$). Note that the combination of both tends to provide a more accurate picture: {\bfseries \textsf{EM}} achieves a decent value for $(a)$ despite poor performance based on $R^2$ and confirmed by $(b)$.\vspace{-0.2in}}\label{tab:casestudy}
\begin{center}
 \begin{tabular}{|c|c|c|c|c|c|c|c|c|}
            \hline & & & & & & & &\\[-2ex]
                 & {\bfseries \textsf{oracle}} & {\bfseries \textsf{naive}} & {\bfseries \textsf{prop}} & {\bfseries \textsf{prop+}} & {\bfseries \textsf{CRR}} & {\bfseries \textsf{EM}} & {\bfseries \textsf{prop}}-$\wh{\Pi}$ & {\bfseries \textsf{CRR}}-$\wh{\Pi}$ \\ \hline
  $R^2$          &0.725         & 0.66   &0.70        & 0.712        &.717     & 0.625 &0.715    &0.715     \\ \hline
  $B^*$-RMSE$^a$ &  0        &  431.4     & 318.1         & 295.81    & 259.1   &  280.6  &  298.9    & 304.8 \\ \hline
            $B^*$-RMSE$^b$ & 0        &  4.11     &  3.94      & 3.98  & 3.42    &  5.97  &  3.67    & 3.58  \\
            \hline
  \end{tabular}
\end{center}
\end{table}

\begin{figure}
\begin{center}
  \begin{tabular}{|c|c|c|c|c|c|c|}
    \hline & & & & & \\[-2ex]
    RMSE  & $(Y, XB^*)$ & $(Y, \Theta^{*-1} Y)$ &  $(\wh{\Pi} Y, \Theta^{*-1} Y)$ & $(\wh{\Pi} Y, \Theta^{*-1} Y)$ & $(\wh{\Pi} Y, \Theta^{*-1} Y)$ \\
           &             &                      &  {\bfseries \textsf{proposed}}             &  {\bfseries \textsf{CRR}}            &    {\bfseries \textsf{EM}}
    \\ \hline 
           & 1.8         & 2.53                 &     1.89                           & 1.86                          &  2.13\\
    \hline
  \end{tabular}
  \vskip2.2ex
  before correction~\eqref{eq:rematch_casestudy} $\qquad \qquad$ after correction~\eqref{eq:rematch_casestudy}  \\
  \includegraphics[width = 0.36\textwidth]{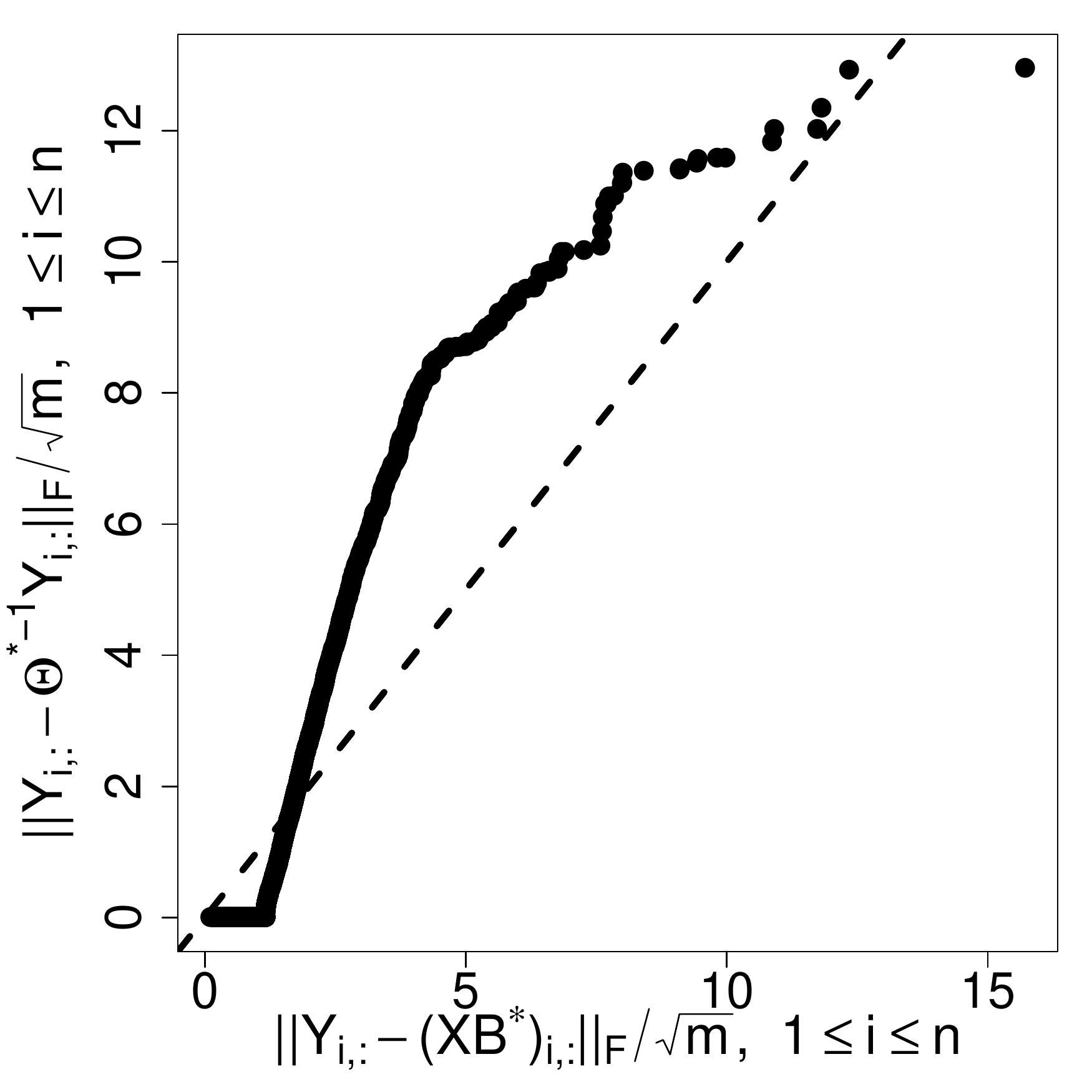}
  \includegraphics[width = 0.36\textwidth]{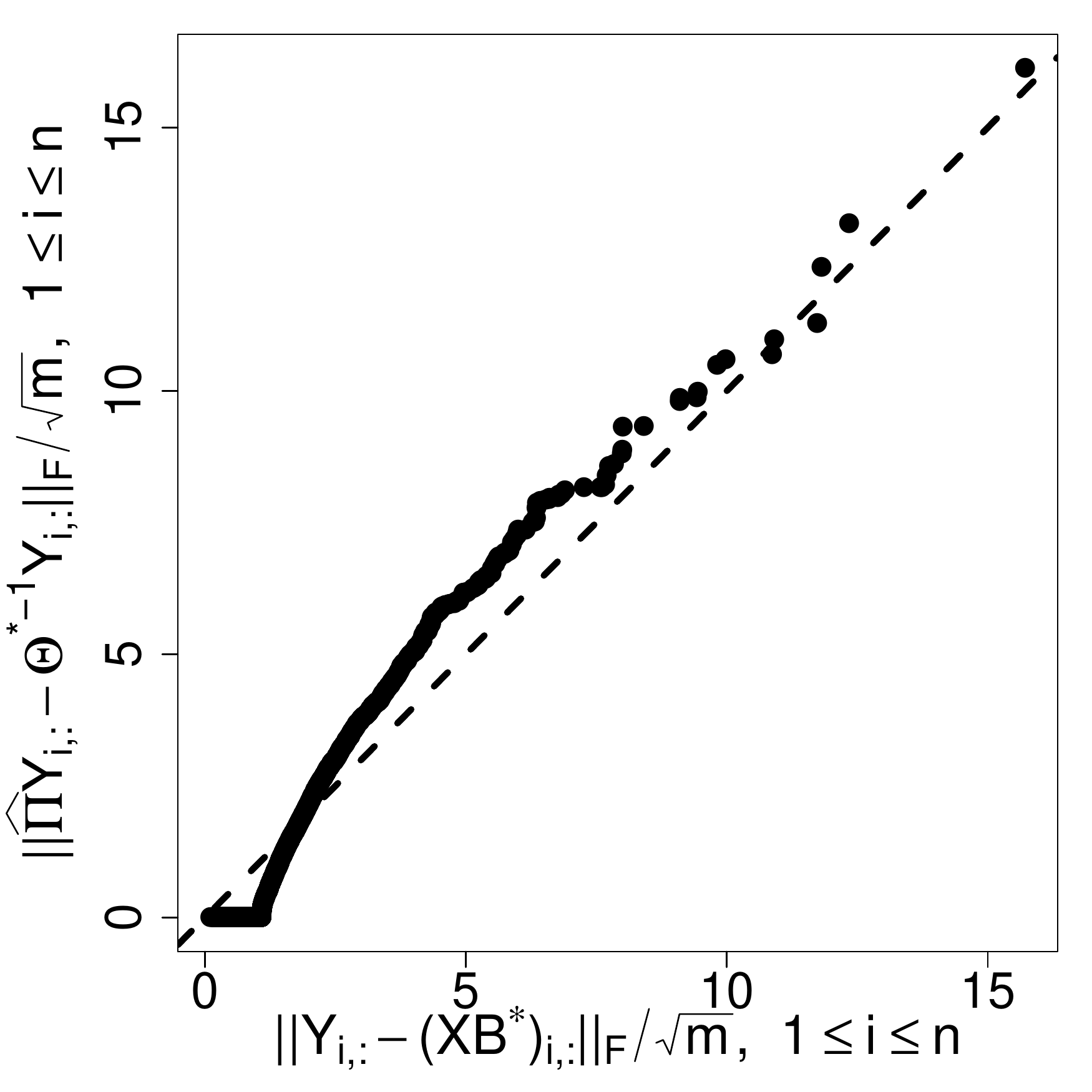} \\[1ex]
  before correction~\eqref{eq:rematch_casestudy} $\qquad \qquad$ after correction~\eqref{eq:rematch_casestudy}  \\
  \includegraphics[width = 0.36\textwidth]{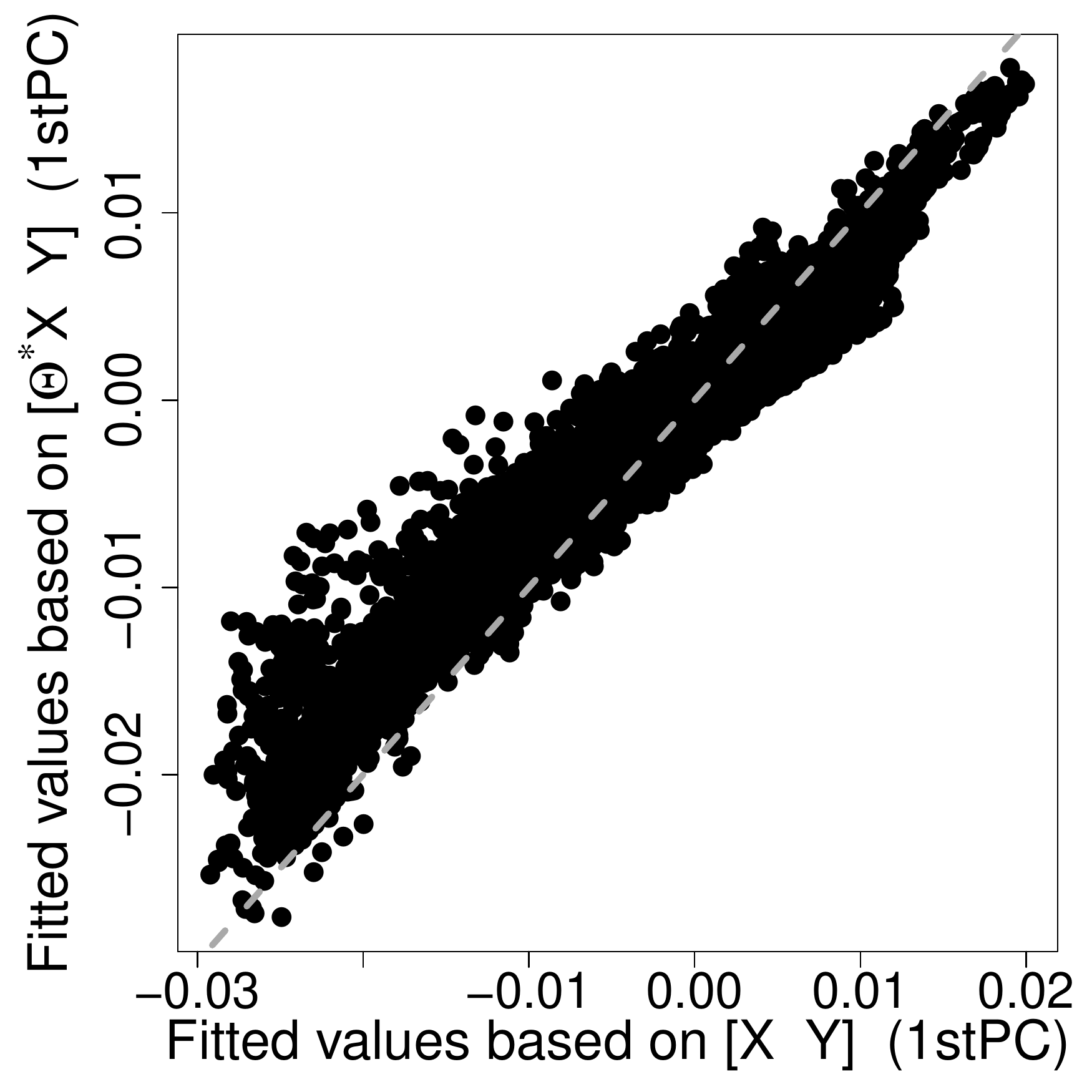}
  \includegraphics[width = 0.36\textwidth]{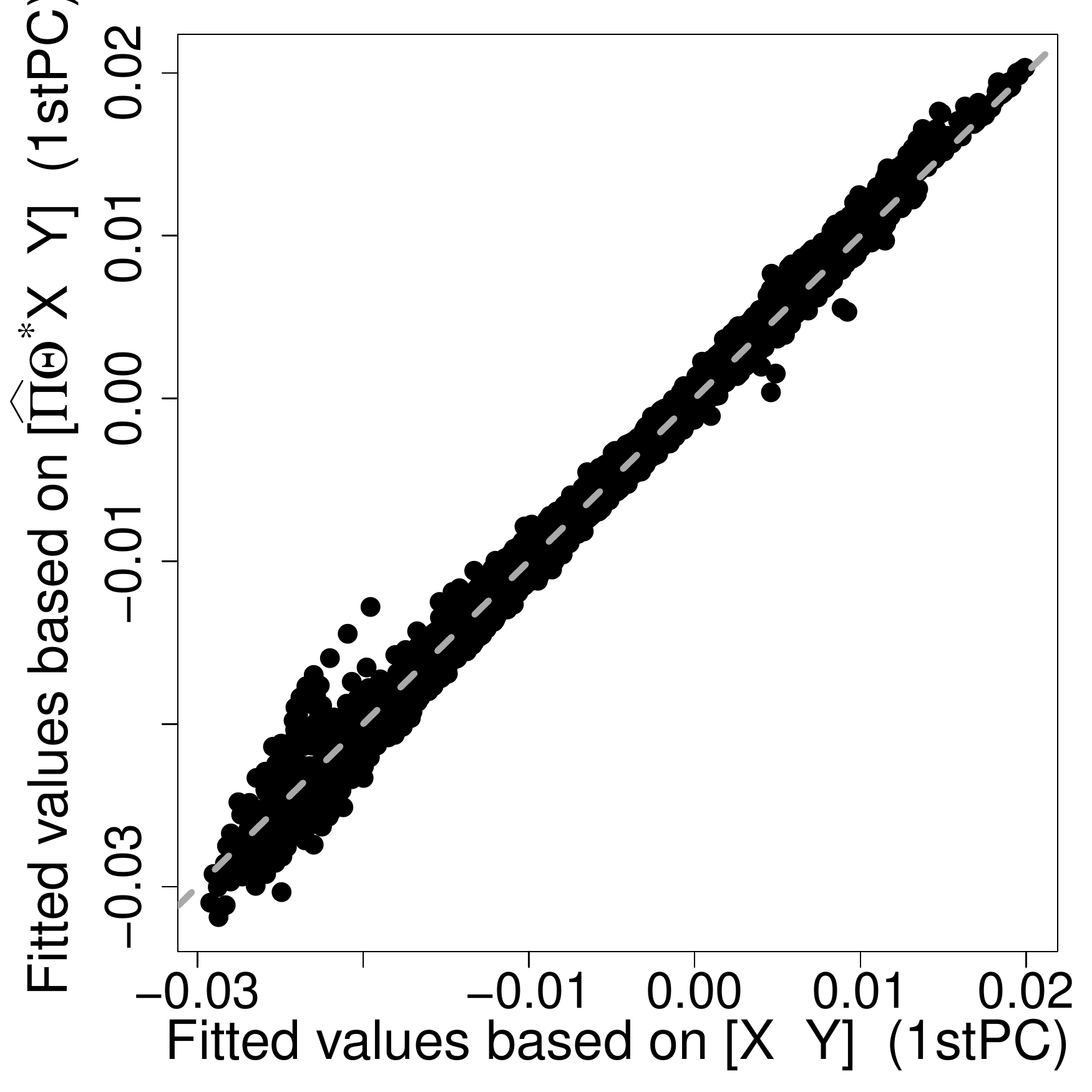}
\end{center}
\vspace*{-4ex}
\caption{Table at the top: RMSEs of various quantities $(A,B)$, i.e., $\nnorm{A - B}_F/\sqrt{n \cdot m}$. The first entry equals the RMSE of the original least squares fit, the second entry equals the mismatch error introduced by
  $\Theta^*$, and the remaining entries show the reduction based on~\eqref{eq:rematch_casestudy} in combination
  with three methods for obtaining $\wh{B}$. Top plots: Mismach error vs.~residual error, before (left) and after
  correction based on~\eqref{eq:rematch_casestudy} with $\wh{B}$ from~\eqref{eq:grouplasso} (right). Bottom plots:
  the fitted values based on $[\Theta^* X \;\; Y]$ vs.~fitted values based on $[X \; Y]$ (left), and the fitted values based on $[\Theta^* X \;\; Y]$ vs.~fitted values based on $[\wh{\Pi}(\Theta^* X) \;\; Y]$ (right). ``Fitted values" here refer to the projection on the leading eigenvector
(first principal component)~of~$X B^*$.}\label{fig:casestudy}
\end{figure}
\renewcommand{\thefootnote}{\arabic{footnote}}
\section{Conclusion}\label{sec:conclusion}
In this paper, we have presented a computationally appealing two-stage approach to multivariate linear regression
in the presence of a small to moderate number of mismatches. The proposed approach can be used to safeguard
against a potentially dramatic increase in the estimation error that can be incurred when ignoring the possibility
of mismatches, as demonstrated in terms of statistical analysis and supported by a series of empirical results. Moreover,
under certain conditions involving ``separability'' of pairs of data points and the signal-to-noise ratio, it is shown
that the true correspondence between those pairs can be perfectly recovered. A key result in this paper asserts that the availability of multiple, linearly independent response variables (as measured by the stable rank of the regression coefficients) considerably simplifies the problem as it increases separability.

A limitation of the proposed approach is that it imposes a stringent limit on the allowed fraction of mismatches. In fact,
as long as a sufficiently large superset of correctly matched data (of size $\Omega(n)$) can be identified, the regression
parameter can still be estimated at the usual rate. Accordingly, the given problem does not appear hopeless even for significantly larger fraction of mismatches, say, up to $1 - \delta$ for $\delta$ bounded away from zero. Closing this gap is a worthwhile endeavor for future research. A second direction of future work concerns extension of the setup beyond classical linear models, specifically more flexibility regarding the range of the response variables (binary, mixed discrete/continuous etc.).

\bibliography{references_M}
\bibliographystyle{jmlr2e_org}

\appendix

\section{Proof of Theorem~\ref{theo:parameter_estimation}}

\noindent (I) \emph{Bound on} $\nnorm{\Xi^* - \wh{\Xi}}_F$.
\vskip1ex
\noindent A crucial observation is that the joint optimization problem~\eqref{eq:grouplasso} in
$B$ and $\Xi$ can be decomposed into two optimization problems involving only $B$ and $\Xi$, respectively, as
stated in the following Lemma.
\begin{lemmaApp}\label{lem:decomposition}
Consider optimization problem~\eqref{eq:grouplasso} with solution $(\wh{B}, \wh{\Xi})$ and denote by $\pre_X^{\perp}$ the projection on the orthogonal complement of $\text{\emph{range}}(X)$. Then, if $n \geq d$, with probability one
\begin{align}
&\wh{\Xi} \in \mathfrak{X}, \quad \mathfrak{X} \coloneq \argmin_{\Xi} \frac{1}{2n \cdot m} \nnorm{\pre_X^{\perp} (Y - \sqrt{n}  \Xi)}_2^2 + \lambda \su \nnorm{\Xi_{i,:}}_2, \label{eq:opt_Xi} \\
&\wh{B} \in \bigg\{ \left(\frac{X^{\T} X}{n} \right)^{-1} \frac{X^{\T} (Y - \sqrt{n} \wh{\Xi})}{n}, \; \, \wh{\Xi} \in \mathfrak{X} \bigg \} \label{eq:opt_B}. 
\end{align}
\end{lemmaApp}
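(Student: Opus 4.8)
The plan is to profile out the regression parameter $B$ from the joint objective \eqref{eq:grouplasso} by partial minimization, thereby reducing the problem to one in $\Xi$ alone. For any fixed $\Xi$, the group-lasso penalty does not involve $B$, so minimizing \eqref{eq:grouplasso} over $B$ amounts to the ordinary least squares problem $\min_B \nnorm{(Y - \sqrt{n}\Xi) - XB}_F^2$ with adjusted response $Y - \sqrt{n}\Xi$. First I would argue that, since $X$ has i.i.d.~Gaussian (hence absolutely continuous) rows and $n \geq d$, the Gram matrix $X^{\T} X$ is invertible with probability one; this is precisely where the ``probability one'' qualifier enters, and it is the only probabilistic ingredient in the argument. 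On this event the inner least squares minimizer is unique and equal to $\wh{B}(\Xi) = (X^{\T} X)^{-1} X^{\T}(Y - \sqrt{n}\Xi)$, which, upon inserting $\wh{\Xi}$, is exactly the expression in \eqref{eq:opt_B}.

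Next I would substitute $\wh{B}(\Xi)$ back into the squared-error term. Because $X \wh{B}(\Xi)$ is the orthogonal projection of $Y - \sqrt{n}\Xi$ onto $\text{range}(X)$, the residual equals $(I - X(X^{\T}X)^{-1}X^{\T})(Y - \sqrt{n}\Xi) = \pre_X^{\perp}(Y - \sqrt{n}\Xi)$, so the minimal data-fit term is $\nnorm{\pre_X^{\perp}(Y - \sqrt{n}\Xi)}_F^2$. Consequently the profiled objective is exactly $\frac{1}{2nm}\nnorm{\pre_X^{\perp}(Y - \sqrt{n}\Xi)}_F^2 + \lambda \su \nnorm{\Xi_{i,:}}_2$, i.e.~the objective defining $\mathfrak{X}$ in \eqref{eq:opt_Xi}.

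To conclude, I would invoke the standard equivalence for partial minimization: a pair $(\wh{B}, \wh{\Xi})$ minimizes the joint objective if and only if $\wh{\Xi}$ minimizes the profiled objective and $\wh{B} = \wh{B}(\wh{\Xi})$. The ``only if'' direction uses uniqueness of the inner solution to force $\wh{B} = \wh{B}(\wh{\Xi})$, after which the joint value at any feasible $\Xi$ coincides with the profiled value, so the minimizing $\Xi$ lies in $\mathfrak{X}$; the ``if'' direction is immediate. This delivers both \eqref{eq:opt_Xi} and \eqref{eq:opt_B}. I do not expect any genuine obstacle: the argument is a routine profiling computation, and the only point requiring care is the set-valued bookkeeping. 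The group-lasso term need not render the minimizing $\Xi$ unique, which is why $\mathfrak{X}$ is stated as an $\argmin$ set and $\wh{B}$ as the corresponding collection indexed by $\wh{\Xi} \in \mathfrak{X}$; keeping the $B$-minimization \emph{unique} (via invertibility of $X^{\T}X$) while allowing $\Xi$ to be non-unique is the one subtlety to handle explicitly.
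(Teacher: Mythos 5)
Your proposal is correct: the paper itself omits the proof of this lemma, deferring to the analogous single-response result (Lemma 1 of \cite{SlawskiBenDavid2017}), and your profiling argument---unique inner least-squares minimization over $B$ on the probability-one event that $X^{\T}X$ is invertible, substitution yielding the residual $\pre_X^{\perp}(Y-\sqrt{n}\Xi)$, and the standard partial-minimization equivalence---is exactly the intended route. You also correctly isolate the one subtlety (uniqueness in $B$ versus possible non-uniqueness in $\Xi$), so nothing is missing.
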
    

\noindent The proof is along the lines of the proof of Lemma 1 in~\cite{SlawskiBenDavid2017}, and is hence omitted. Note that
$\texttt{P}_{X}^{\perp} Y =  \texttt{P}_{X}^{\perp} (\sqrt{n} \Xi^* +  \sigma \wt{E})$ with $\wt{E} = \mc{S} E$. The optimization problem in~\eqref{eq:opt_Xi} thus becomes
\begin{equation}\label{eq:opt_Xi_2}
\min_{\Xi} \frac{1}{2n \cdot m} \nnorm{\texttt{P}_{X}^{\perp} (\sqrt{n} \Xi^* + \sigma \wt{E} - \sqrt{n}  \Xi)}_2^2 + \lambda \su \nnorm{\Xi_{i,:}}_2
\end{equation}
In the sequel, we study an equivalent vectorized problem. Accordingly, we define
\begin{align}\label{eq:vectorization}
\begin{split}  
&  \xi^* = [(\Xi_{:,1}^*)^{\T}; \ldots ; (\Xi_{:,m}^*)^{\T}] \in \R^{n \cdot m}, \quad \wt{e} = [\wt{E}_{:,1}^{\T}; \ldots ; \wt{E}_{:,m}^{\T}] \\[.75ex]
&\texttt{P}_{X}^{\perp\otimes} = I_m \otimes \texttt{P}_X^{\perp} = {\footnotesize \begin{pmatrix}
                                   \texttt{P}_X^{\perp}  & 0 & \ldots & 0  \\
                                     0  & \texttt{P}_X^{\perp} & \ddots & \vdots  \\
                                     \vdots    & \ddots  & \ddots & 0 \\
                                      0   & \ldots & 0 & \texttt{P}_X^{\perp} 
                                    \end{pmatrix}},
\end{split}                                 
\end{align}
with $\otimes$ denoting the Kronecker product, the subscripts $_{:,j}$ refer to the $j$-th column, $j = 1,\ldots,m$, and ``$;$'' here means row-wise concatenation. Moreover, for any $v \in \R^{n \cdot m}$, we let
\begin{equation*}
v^{[i]} = (v_j)_{j \in G_i}, \; i=1,\ldots,n,  \quad G_i = \{i, i + n, \ldots, i + (m-1) \cdot n \}. 
\end{equation*}
With this in place, the $(2,q)$-norm with respect to $G_1, \ldots, G_n$ is defined by 
\begin{align}\label{eq:mixednorms}
  &\nnorm{v}_{2,q} \coloneq \left( \sum_{i = 1}^n \nnorm{v^{[i]}}_2^q \right)^{1/q}, \;\, 1 \leq q < \infty, \quad \text{and} \;\, \nnorm{v}_{2,\infty} \coloneq \max_{1 \leq i \leq n} \nnorm{v^{[i]}}_2,  \\
  & \nnorm{v}_{2,0} \coloneq \sum_{i = 1}^n \mathbb{I}(\nnorm{v^{[i]}}_2 > 0),  
\end{align}
where the latter is not a norm; it counts the number of non-zero groups of
components, with each of the $\{ G_i \}_{i = 1}^n$ forming a group. Note that $\nnorm{\xi^*}_{2,0} \leq k$ with support
\begin{equation*}
S_* = \{1 \leq i \leq n: \, \Theta_{ii}^* \neq 1 \} = \{1 \leq i \leq n: \, \nnorm{\xi^{*[i]}}_2 > 0 \}. 
\end{equation*}
We also observe that for all $v, w \in \R^{n \cdot m}$
\begin{equation}\label{eq:mixednorms_prop}
\nnorm{v}_{2,2} = \nnorm{v}_2, \;\,\quad  |\scp{v}{w}| = \left| \sum_{i = 1}^n v^{[i]\T} w^{[i]} \right| \leq  \sum_{i = 1}^n \nnorm{v^{[i]}} \nnorm{w^{[i]}}_2 \leq  \nnorm{v}_{2,1} \nnorm{w}_{2,\infty} 
\end{equation}
by the inequalities of Cauchy-Schwarz and H\"older.

After these preparations, we are in position to state another Lemma. First note that optimization problem~\eqref{eq:opt_Xi_2}
can be expressed in vectorized form as
\begin{equation}\label{eq:optxi_vec}
\min_{\xi} \frac{1}{2 n \cdot m} \nnorm{\texttt{P}_{X}^{\perp\otimes}  (\sqrt{n} \xi^* +  \sigma \wt{e})   -  \texttt{P}_{X}^{\perp\otimes} \xi \sqrt{n}}_2^2 + \lambda \su \nnorm{\xi^{[i]}}_2,   
\end{equation}
Letting $\wh{\delta} = \xi^* - \wh{\xi}$, where $\wh{\xi}$ is a minimizer of~\eqref{eq:optxi_vec}, we have the following basic
inequality
\begin{equation}\label{eq:basicinequality}
  \frac{1}{2 n \cdot m} \nnorm{\texttt{P}_{X}^{\perp\otimes}  \sqrt{n} \wh{\delta}}_2^2 + \lambda \su \nnorm{\wh{\xi}^{[i]}}_2 \leq \frac{1}{\sqrt{n} \cdot m} |\nscp{\texttt{P}_{X}^{\perp\otimes} \wh{\delta} }{\sigma \wt{e}}| +  \lambda \sum_{i \in S_*} \nnorm{\xi^{*[i]}}_2,
\end{equation}
which is obtained by evaluating~\eqref{eq:optxi_vec} at $\xi = 0$, expanding squares and re-arranging. 
\begin{lemmaApp}\label{lem:conecondition} Consider $\wh{\delta}$ in~\eqref{eq:basicinequality} and Let $\lambda_0$ be a number such that
  \begin{equation}\label{eq:lambda0}
   \frac{1}{\sqrt{n} \cdot m} \nnorm{\texttt{\emph{P}}_{X}^{\perp\otimes}  \sigma \wt{e}}_{2, \infty} \leq \lambda_0. 
    \end{equation}
    Then for any $\lambda \geq 2 \lambda_0$, it holds that either $\wh{\delta} = 0$ or $\wh{\delta}/\nnorm{\wh{\delta}}_2 \in 2 \,\text{\emph{conv}}(B_0(k')) \cap \mathbb{S}^{n \cdot m - 1}$, where for $r \geq 0$, $B_0(r) = \{v \in \R^{n \cdot m}: \; \nnorm{v}_{2,0} \leq r, \; \nnorm{v}_2 \leq 1 \}$ according to~\eqref{eq:mixednorms} and
    $k' = \left(1 + \frac{\lambda + \lambda_0}{\lambda - \lambda_0} \right)^2 k \leq 16k$. 
  \end{lemmaApp}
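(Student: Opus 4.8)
The plan is to run the standard basic-inequality argument for block (group) sparse regularization and then convert the resulting cone condition into membership in the convex hull of group-sparse vectors. Write $S_*$ for the support of $\xi^*$, so $|S_*| \le k$, and abbreviate $a = \nnorm{\wh{\delta}_{S_*}}_{2,1} = \sum_{i \in S_*} \nnorm{\wh{\delta}^{[i]}}_2$ and $b = \nnorm{\wh{\delta}_{S_*^c}}_{2,1} = \sum_{i \in S_*^c} \nnorm{\wh{\delta}^{[i]}}_2$. First I would discard the nonnegative quadratic term on the left of~\eqref{eq:basicinequality} and split the penalty $\lambda \su \nnorm{\wh{\xi}^{[i]}}_2$ along $S_*$ and $S_*^c$. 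Since $\xi^*$ vanishes off $S_*$, one has $\nnorm{\wh{\xi}^{[i]}}_2 = \nnorm{\wh{\delta}^{[i]}}_2$ for $i \in S_*^c$, while for $i \in S_*$ the reverse triangle inequality applied to $\wh{\xi}^{[i]} = \xi^{*[i]} - \wh{\delta}^{[i]}$ gives $\nnorm{\wh{\xi}^{[i]}}_2 \ge \nnorm{\xi^{*[i]}}_2 - \nnorm{\wh{\delta}^{[i]}}_2$. Substituting these bounds and cancelling the common term $\lambda \sum_{i \in S_*} \nnorm{\xi^{*[i]}}_2$ leaves $\lambda(b - a) \le \frac{1}{\sqrt{n} \cdot m}\, |\nscp{\texttt{P}_{X}^{\perp\otimes}\wh{\delta}}{\sigma\wt{e}}|$.

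Next I would control the right-hand side. Because $\texttt{P}_{X}^{\perp\otimes}$ is a self-adjoint projection, move it onto the noise term and apply the block-Hölder inequality in~\eqref{eq:mixednorms_prop} together with the defining property~\eqref{eq:lambda0} of $\lambda_0$, obtaining $\frac{1}{\sqrt{n} \cdot m}|\nscp{\texttt{P}_{X}^{\perp\otimes}\wh{\delta}}{\sigma\wt{e}}| = \frac{1}{\sqrt{n} \cdot m}|\nscp{\wh{\delta}}{\texttt{P}_{X}^{\perp\otimes}\sigma\wt{e}}| \le \lambda_0 \nnorm{\wh{\delta}}_{2,1} = \lambda_0 (a+b)$. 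Combining with the previous display gives $\lambda(b-a) \le \lambda_0(a+b)$, i.e. the cone condition $b \le \frac{\lambda + \lambda_0}{\lambda - \lambda_0}\, a$. Hence $\nnorm{\wh{\delta}}_{2,1} = a + b \le \big(1 + \frac{\lambda+\lambda_0}{\lambda-\lambda_0}\big) a$, and a Cauchy--Schwarz step over the at most $k$ active groups, $a \le \sqrt{k}\,\nnorm{\wh{\delta}}_2$ (using $\nnorm{\wh{\delta}}_{2,2} = \nnorm{\wh{\delta}}_2$), yields $\nnorm{\wh{\delta}}_{2,1} \le \sqrt{k'}\,\nnorm{\wh{\delta}}_2$ with $k' = \big(1 + \frac{\lambda+\lambda_0}{\lambda-\lambda_0}\big)^2 k$. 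For $\lambda \ge 2\lambda_0$ the ratio $\frac{\lambda+\lambda_0}{\lambda-\lambda_0} = 1 + \frac{2\lambda_0}{\lambda-\lambda_0}$ is decreasing in $\lambda$, hence bounded by $3$, so $k' \le 16k$.

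The remaining, and least routine, step is to translate the scale-invariant inequality $\nnorm{\wh{\delta}}_{2,1} \le \sqrt{k'}\,\nnorm{\wh{\delta}}_2$ into the geometric conclusion $\wh{\delta}/\nnorm{\wh{\delta}}_2 \in 2\,\text{conv}(B_0(k'))$ whenever $\wh{\delta} \neq 0$; I expect this to be the main obstacle, as everything preceding it is routine Lasso-type manipulation. I would establish it by the standard peeling (shelling) argument applied to $w = \wh{\delta}/\nnorm{\wh{\delta}}_2$, which satisfies $\nnorm{w}_2 = 1$ and $\nnorm{w}_{2,1} \le \sqrt{k'}$. Order the groups by decreasing $\ell_2$-norm, partition them into consecutive blocks $I_0, I_1, I_2, \ldots$ of $k'$ groups each, and write $w = \sum_j w_{I_j}$, where each normalized piece $w_{I_j}/\nnorm{w_{I_j}}_2$ is $k'$-group-sparse with unit norm and hence lies in $B_0(k')$. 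The ordering yields $\nnorm{w^{[i]}}_2 \le \frac{1}{k'}\nnorm{w_{I_{j-1}}}_{2,1}$ for every $i \in I_j$, so that $\nnorm{w_{I_j}}_2 \le \frac{1}{\sqrt{k'}}\nnorm{w_{I_{j-1}}}_{2,1}$. Summing over $j \ge 1$ telescopes to $\sum_{j \ge 1}\nnorm{w_{I_j}}_2 \le \frac{1}{\sqrt{k'}}\nnorm{w}_{2,1} \le 1$, while $\nnorm{w_{I_0}}_2 \le \nnorm{w}_2 = 1$.

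Thus $\sum_{j \ge 0}\nnorm{w_{I_j}}_2 \le 2$, which exhibits $w$ as $2$ times a convex combination of elements of $B_0(k')$, i.e. $w \in 2\,\text{conv}(B_0(k'))$; since moreover $\nnorm{w}_2 = 1$, this places $w$ in $2\,\text{conv}(B_0(k')) \cap \mathbb{S}^{n\cdot m - 1}$, as claimed (integer rounding of the block size $k'$ being a benign routine detail). The dichotomy $\wh{\delta} = 0$ versus $\wh{\delta}/\nnorm{\wh{\delta}}_2 \in 2\,\text{conv}(B_0(k'))\cap\mathbb{S}^{n\cdot m - 1}$ follows, completing the argument.
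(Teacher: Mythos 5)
Your proposal is correct and takes essentially the same route as the paper: the identical basic-inequality/triangle-inequality manipulation yielding the cone condition $\sum_{i \in S_*^c}\nnorm{\wh{\delta}^{[i]}}_2 \le \frac{\lambda+\lambda_0}{\lambda-\lambda_0}\sum_{i \in S_*}\nnorm{\wh{\delta}^{[i]}}_2$, the same H\"older step via~\eqref{eq:mixednorms_prop} and~\eqref{eq:lambda0}, and the same Plan--Vershynin shelling argument for the convex-hull inclusion, which the paper simply factors out as the separate Lemma~\ref{lem:sparsity_conv}. The only cosmetic difference is that you inline that shelling lemma (and handle $k=0$ implicitly rather than as an explicit case), so there is nothing to correct.
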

  \begin{proof}
As an immediate consequence of~\eqref{eq:basicinequality} and the triangle inequality, we obtain that 
\begin{equation*}
\lambda \sum_{i \in S_*^c} \nnorm{\wh{\delta}^{[i]}}_2 \leq \frac{1}{\sqrt{n} \cdot m} |\nscp{\texttt{P}_{X}^{\perp\otimes}  \wh{\delta} }{\sigma \wt{e}}| +  \lambda \sum_{i \in S_*} \nnorm{\wh{\delta}^{[i]}}_2 \leq \lambda_0 \nnorm{\wh{\delta}}_{2,1}    +  \lambda \sum_{i \in S_*} \nnorm{\wh{\delta}^{[i]}}_2,  
\end{equation*}
where the second inequality is a result of~\eqref{eq:mixednorms_prop} and~\eqref{eq:lambda0}. If $k = 0$,
$S_* = \emptyset$, we must have $\wh{\delta} = \wh{\xi} = \xi^* = 0$ as the above inequality would be violated otherwise, and the claim of the lemma follows. On the other hand, if $k \geq 1$, combination of the left and right hand side
of the above chain of inequalities yields
\begin{align}
&\lambda \sum_{i \in S_*^c} \nnorm{\wh{\delta}^{[i]}}_2 \leq  \lambda_0 \nnorm{\wh{\delta}}_{2,1} +  \lambda \sum_{i \in S_*} \nnorm{\wh{\delta}^{[i]}}_2 = \lambda_0 \left(\sum_{i \in S_*} \nnorm{\wh{\delta}^{[i]}}_2    +  \sum_{i \in S_*^c} \nnorm{\wh{\delta}^{[i]}}_2 \right) + \lambda \sum_{i \in S_*} \nnorm{\wh{\delta}^{[i]}}_2 \notag\\ 
  \Rightarrow & \sum_{i \in S_*^c} \nnorm{\wh{\delta}^{[i]}}_2 \leq \frac{\lambda + \lambda_0}{\lambda - \lambda_0} \sum_{i \in S_*} \nnorm{\wh{\delta}^{[i]}}_2 \notag\\
\Rightarrow & \nnorm{\wh{\delta}}_{2,1} \leq  \left(1 + \frac{\lambda + \lambda_0}{\lambda - \lambda_0} \right) \sum_{i \in S_*} \nnorm{\wh{\delta}^{[i]}}_2 \leq \left(1 + \frac{\lambda + \lambda_0}{\lambda - \lambda_0} \right) \sqrt{k} \nnorm{\wh{\delta}}_2  \label{eq:conebound}   
\end{align}
The assertion then follows from Lemma~\ref{lem:sparsity_conv} provided in a separate section below.  
\end{proof}
\noindent As in the above Lemma, under event~\eqref{eq:lambda0}, inequality~\eqref{eq:basicinequality} implies
\begin{equation}\label{eq:basicinequality_cons}
  \frac{1}{2 n \cdot m} \nnorm{\texttt{P}_{X}^{\perp\otimes}  \sqrt{n} \wh{\delta}}_2^2 +  \leq  \left(\lambda_0 \left(1 + \frac{\lambda + \lambda_0}{\lambda - \lambda_0} \right) + \lambda \right) \sqrt{k} \nnorm{\wh{\delta}}_2 = \lambda \left(\frac{\lambda + \lambda_0}{\lambda - \lambda_0} \right) \sqrt{k} \nnorm{\wh{\delta}}_2
\end{equation}
by following the steps leading to~\eqref{eq:conebound}. We now lower bound the l.h.s.~of~\eqref{eq:basicinequality_cons}. Let $\Lambda = \{(\lambda_s)_{s = 1}^N \subset \R_+: \; N \in \{1,2,\ldots,\}, \; \sum_{s = 0}^{N} \lambda_s \leq 2\}$. In light of Lemma~\ref{lem:conecondition}, we have 
\begin{align*}
\frac{1}{n} \nnorm{\texttt{P}_{X}^{\perp\otimes} \sqrt{n} \wh{\delta}}_2^2 &
 \geq \nnorm{\wh{\delta}}_2^2 \;\,\min_{\substack{\{\lambda_{s} \} \in \Lambda, \, \{ v_{s} \} \subset B_0(k'), \\ \sum_{s} \lambda_{s} v_{s} \in \mathbb{S}^{n \cdot m - 1}}} \; \nnorm{\texttt{P}_{X}^{\perp\otimes} \textstyle \sum_{s} \lambda_{s} v_{s}}_2^2
\end{align*}
Structuring each $v_{s}$ into sub-vectors $v_{s}^{(l)} \in \R^n$, $l=1,\ldots,m$, we obtain 
\begin{align*}
\min_{\substack{\{\lambda_{s} \} \in \Lambda, \, \{ v_{s} \} \subset B_0(k'), \\ \sum_{s} \lambda_{s} v_{s} \in \mathbb{S}^{n \cdot m - 1}}} \; \nnorm{\texttt{P}_{X}^{\perp\otimes} \textstyle \sum_{s} \lambda_{s} v_{s}}_2^2 &=
\min_{\substack{\{\lambda_{s} \} \in \Lambda, \, \{ v_{s} \} \subset B_0(k'), \\ \sum_{s} \lambda_{s} v_{s} \in \mathbb{S}^{n \cdot m - 1}}} \; \sum_{l = 1}^m \nnorm{\texttt{P}_{X}^{\perp} \textstyle \sum_{s} \lambda_{s} v_{s}^{(l)}}_2^2
\end{align*}
Since each  $v_{s}$ is $k'$-group sparse according to the partitioning defined by $\{ G_i \}_{i = 1}^n$, each 
$v_{s}^{(l)}$ is at most $k'$-sparse in the ordinary sense, i.e., having at most $k'$ non-zero entries. Letting 
$\mc{B}_{0}(k') = \{v \in \R^n: \nnorm{v}_0 \leq k' \}$ denote the usual $k'$-sparsity ball in $\R^n$, we have 
\begin{align}\notag
  &\min_{\substack{\{\lambda_{s} \} \in \Lambda, \, \{ v_{s} \} \subset B_0(k'), \\ \sum_{s} \lambda_{s} v_{s} \in \mathbb{S}^{n \cdot m - 1}}} \; \sum_{l = 1}^m \nnorm{\texttt{P}_{X}^{\perp} \textstyle \sum_{s} \lambda_{s} v_{s}^{(l)}}_2^2\\ \notag
=& \min_{\substack{\{\lambda_{s} \} \in \Lambda, \, \{ v_{s}^{(l)} \} \subset \mc{B}_0(k'), \\ \{ \sum_{s} \lambda_{s} v_{s}^{(l)} \} \subset \mathbb{S}^{n - 1} \\ \{\gamma^{(l)} \} \subset \R_+, \, \sum_{l = 1}^m \{\gamma^{(l)}\}^2 = 1}} \; \sum_{l = 1}^m \nnorm{\texttt{P}_{X}^{\perp} \, \gamma^{(l)} \textstyle \sum_{s} \lambda_{s} v_{s}^{(l)}}_2^2 \notag\\
=& \min_{\{\gamma^{(l)} \} \subset \R_+, \, \sum_{l = 1}^m \{\gamma^{(l)}\}^2 = 1} \sum_{l = 1}^m \{ \gamma^{(l)} \}^2 \, \times  \min_{u \in 2 \text{conv}(\mc{B}_{0}(k')) \cap \mathbb{S}^{n - 1}} \nnorm{\texttt{P}_{X}^{\perp} u}_2^2 \notag\\
=& \min_{u \in 2 \text{conv}(\mc{B}_{0}(k')) \cap \mathbb{S}^{n - 1}} \nnorm{\texttt{P}_{X}^{\perp} u}_2^2 \notag\\
=&\text{dist}^2(2 \text{conv}(\mc{B}_{0}(k')) \cap \mathbb{S}^{n - 1}, \text{range}(X)) \label{eq:lowerbound_lhs} 
\end{align}
In order to lower bound this squared distance, we apply Gordon's Theorem (cf.~Lemma~\ref{lem:escape} below) with $K = 2 \text{conv}(\mc{B}_{0}(k')) \cap \mathbb{S}^{n - 1}$ and $V = \text{range}(X)$ noting
that the latter random subspace follows a uniform distribution on the Grassmannian $\textsf{G}(n, d)$, thus
we identify $ p = n$, $p - q = d \Leftrightarrow q = n-d$. It is well-known that $\nu_r = \sqrt{r^2 /(r+1)} = (1 - O(1/\sqrt{r})) \sqrt{r} \sim \sqrt{r}$ as $r \rightarrow \infty$; to simplify our argument, we henceforth replace $\nu_r$ by $\sqrt{r}$. Translated
to the setting under consideration, the condition $w(K) < (1 - \epss) \nu_q - \epss \nu_p$ in Lemma~\ref{lem:escape} reads
\begin{equation}\label{eq:cond_gordon_trans}
\frac{1}{1 - \epss} \, w(2 \text{conv}(\mc{B}_{0}(k')) \cap \mathbb{S}^{n - 1}) <  \sqrt{n-d} - \frac{\epss}{1 - \epss} \sqrt{n}. 
\end{equation}
Invoking the assumption $d/n \leq 1/4$, the r.h.s.~of~\eqref{eq:cond_gordon_trans} evaluates as $(\sqrt{3}/2 - \frac{\epss}{1 - \epss}) \sqrt{n}$. Regarding the l.h.s.~of~\eqref{eq:cond_gordon_trans}, it follows
from standard results (cf.~\cite{PlanVershynin2013a}, Lemma 2.3) that the Gaussian width $w(2 \, \text{conv}(\mc{B}_{0}(k')) \cap \mathbb{S}^{n - 1}) \leq 7 \sqrt{k' \log(e n / k')}$. It thus follows that for any $\epss \in (0,1/3)$, there exists $c_{\epss}, c_{\epss}' > 0$ so that
if
\begin{equation*}
k \leq c_{\epss} \cdot n / \log(n/k)
\end{equation*}
inequality~\eqref{eq:cond_gordon_trans} is satisfied, so that with probability at least $1 - 3.5 \cdot \exp(-c_{\epss}' n)$,
\eqref{eq:lowerbound_lhs} is lower bounded by $\epss^2$. Combining~\eqref{eq:basicinequality_cons} and this lower bound on
\eqref{eq:lowerbound_lhs}, we conclude that
\begin{equation*}
m^{-1/2} \nnorm{\wh{\Xi} - \Xi^*}_F  = m^{-1/2} \nnorm{\wh{\delta}}_2 \leq \epss^{-2} \cdot 2 \lambda \sqrt{m} \cdot \frac{\lambda + \lambda_0}{\lambda - \lambda_0} \sqrt{k}.  
\end{equation*}
The lemma below elaborates on the choice of $\lambda_0$, which completes the proof of the bound on $m^{-1/2} \nnorm{\wh{\Xi} - \Xi^*}_F$. 
\newpage

\begin{lemmaApp} With probability at least $1 - 2/n$, it holds that
\begin{equation*}
  \frac{1}{\sqrt{n} \cdot m} \nnorm{\texttt{\emph{P}}_{X}^{\perp\otimes}  \sigma \wt{e}}_{2, \infty} \leq \lambda_0 \; \; \; \text{with} \; \lambda_0 = \frac{\mu_{n,d} \, \sigma}{\sqrt{n \cdot m}} \left( 1 + \sqrt{\frac{4 \log n}{m}}\right), \; \mu_{n,d} \coloneq \Big(\textstyle\frac{n-d}{n} + \textstyle\sqrt{24 \frac{\log n}{n}} \Big) \wedge 1. 
\end{equation*}
\end{lemmaApp}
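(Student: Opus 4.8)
The plan is to reduce the $(2,\infty)$-norm to the largest Euclidean norm among the rows of a Gaussian matrix, and then to control that maximum by combining a $\chi^2$ tail bound with a concentration estimate for the diagonal of the residual projection $\texttt{P}_X^{\perp}$. I would first exploit the block structure $\texttt{P}_X^{\perp\otimes} = I_m \otimes \texttt{P}_X^{\perp}$: its $l$-th diagonal block acts on the $l$-th column $\wt{E}_{:,l}$, so writing $W \coloneq \texttt{P}_X^{\perp} \wt{E}$ (recall $\wt{E} = \mc{S} E$), the $i$-th group $(\texttt{P}_X^{\perp\otimes} \wt{e})^{[i]}$ collects the $i$-th entry of each block and is therefore exactly the $i$-th row $W_{i,:}$. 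Consequently $\frac{1}{\sqrt{n}\, m} \nnorm{\texttt{P}_X^{\perp\otimes} \sigma \wt{e}}_{2,\infty} = \frac{\sigma}{\sqrt{n}\, m} \max_{1 \leq i \leq n} \nnorm{W_{i,:}}_2$, and it remains to bound the largest row norm of $W$.

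Next I would condition on $X$, so that $\texttt{P}_X^{\perp}$ is fixed. Since the columns of $\wt{E} = \mc{S} E$ are independent with independent $N(0,\mathfrak{s}_j)$ entries, each row $W_{i,:}$ is a centered Gaussian vector in $\R^m$ with covariance $\big(\sum_j (\texttt{P}_X^{\perp})_{ij}^2 \, \mathfrak{s}_j\big) I_m$, whose scalar variance is at most $(\texttt{P}_X^{\perp})_{ii} \invcoloneq P_{ii}$ by idempotence of the projection (the mask $\mc{S}$ only shrinks it). Thus $\nnorm{W_{i,:}}_2$ is stochastically dominated by $\sqrt{P_{ii}}$ times a $\chi_m$ variable, and a Laurent--Massart $\chi^2$ tail bound with deviation parameter $t = 2\log n$ gives $\p\big(\nnorm{W_{i,:}}_2 > \sqrt{P_{ii}}(\sqrt{m} + 2\sqrt{\log n})\big) \leq n^{-2}$. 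A union bound over the $n$ rows then yields, with probability at least $1 - 1/n$, the estimate $\max_i \nnorm{W_{i,:}}_2 \leq \sqrt{\max_i P_{ii}}\,(\sqrt{m} + 2\sqrt{\log n})$.

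Finally I would control $\max_i P_{ii} = \max_i (1 - h_{ii})$ through the leverage scores $h_{ii} = \M{x}_i^{\T} (X^{\T} X)^{-1} \M{x}_i$. For the isotropic Gaussian design these have an exact $\mathrm{Beta}(d/2,(n-d)/2)$ law, so $P_{ii} \sim \mathrm{Beta}((n-d)/2, d/2)$ with mean $(n-d)/n$; a Beta tail estimate together with a union bound over $i$ gives $\max_i P_{ii} \leq \big(\frac{n-d}{n} + \sqrt{24 \frac{\log n}{n}}\big) \wedge 1 = \mu_{n,d}$ with probability at least $1 - 1/n$. Intersecting the two events (total probability at least $1 - 2/n$, matching the statement) and simplifying $\sqrt{m}\,(1 + \sqrt{4\log n/m}) = \sqrt{m} + 2\sqrt{\log n}$ reproduces the bound by $\lambda_0$ after multiplying through by $\frac{\sigma}{\sqrt{n}\, m}$. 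I note that the factor the row-norm estimate produces naturally is $(\max_i P_{ii})^{1/2}$; recording a bound directly on $\max_i P_{ii}$ in the definition of $\mu_{n,d}$ is harmless downstream, where only $\mu_{n,d} = \Theta(1)$ matters, and the two powers coincide in the regime $d \lesssim \sqrt{n \log n}$ where the truncation $\wedge\, 1$ is active.

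The main obstacle is this last step: the sharp, explicit high-probability bound on $\max_i P_{ii}$ — equivalently the lower tail of the minimum leverage score — with the stated constant $\sqrt{24}$. Invoking the exact Beta law of the leverage scores under Gaussian design is the cleanest route, since it avoids separately controlling $\nnorm{(X^{\T} X / n)^{-1}}_2$ and $\min_i \nnorm{\M{x}_i}_2^2$ and then combining them, which would be considerably lossier in the constants.
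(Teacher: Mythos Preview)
Your proposal is correct and follows essentially the same route as the paper: reduce the $(2,\infty)$-norm to $\max_i \nnorm{E^{\T}\mc{S}^{\T}\texttt{P}_X^{\perp}\mathfrak{e}_i}_2$, condition on $X$ to get isotropic Gaussians with variance $\nnorm{\mc{S}\texttt{P}_X^{\perp}\mathfrak{e}_i}_2^2 \leq (\texttt{P}_X^{\perp})_{ii}$, apply a Gaussian-norm tail bound plus union bound, and then control $\max_i (\texttt{P}_X^{\perp})_{ii}$ --- the only cosmetic difference being that the paper invokes the Johnson--Lindenstrauss--type concentration for uniform random projections from~\cite{DasGupta2003} rather than the exact Beta law of the leverage scores. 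Your observation about $\sqrt{\mu_{n,d}}$ versus $\mu_{n,d}$ is well taken and applies equally to the paper's own derivation; as you note, it is immaterial for the downstream analysis.
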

\begin{proof}
\begin{equation*}  
  \frac{1}{\sqrt{n} \cdot m} \nnorm{\texttt{P}_{X}^{\perp\otimes} \sigma \wt{e}}_{2,\infty} = \frac{\sigma}{\sqrt{n} \cdot m} \max_{1 \leq i \leq n} \nnorm{E^{\T} \mc{S}^{\T} \texttt{P}_{X}^{\perp} \mathfrak{e}_i}_2,
\end{equation*}
where  $\{ \mathfrak{e}_i \}_{i = 1}^n$ is the canonical basis of $\R^n$.   
Observe that conditional on $\texttt{P}_{X}^{\perp}$, $E^{\T}  \mc{S}^{\T} \texttt{P}_{X}^{\perp} \mathfrak{e}_i$ is
a zero mean-Gaussian random vector with covariance matrix $\nnorm{\mc{S} \,\texttt{P}_{X}^{\perp} \mathfrak{e}_i}_2^2 \cdot I_m $, $1 \leq i \leq n$. Since $\nnorm{\mc{S}}_2 \leq 1$ and since $\texttt{P}_{X}^{\perp}$ is a random projection in the sense of~\cite{DasGupta2003}, it follows from results therein that for all $\mu > 0$
\begin{equation*}
\p\left(\max_{1 \leq i \leq n} \nnorm{\mc{S} \, \texttt{P}_{X}^{\perp} \mathfrak{e}_i}_2^2 \geq \frac{n - d}{n} (1 + \mu) \wedge 1 \right) \leq n \exp\left(-(n-d) \frac{\eta^2}{12} \right).
\end{equation*}
In particular, with the choice $\mu = \textstyle\sqrt{24 \frac{\log n}{n  - d}} \invcoloneq c_1$, 
\begin{equation*}
\p\left(\max_{1 \leq i \leq n} \nnorm{\mc{S} \, \texttt{P}_{X}^{\perp} \mathfrak{e}_i}_2^2 \geq  \mu_{n,d} \right) 
\leq 1/n, \quad \mu_{n,d} \coloneq \Big(\textstyle\frac{n-d}{n} + \textstyle\sqrt{24 \frac{\log n}{n}} \Big) \wedge 1 
\end{equation*} 
Combining this result with Lemma~\ref{lem:gaussiannorm} with $r = m$, $L = n$, $\max_{1 \leq \ell \leq L} \sigma_{\ell} = \mu_{n,d}$, we have
\begin{equation*}
\nnorm{\texttt{P}_{X}^{\perp\otimes} \sigma \wt{e}}_{2,\infty} \leq \mu_{n,d} \, \sigma \{\sqrt{m} + 2 \sqrt{\log n} \}     
\end{equation*}
with probability at least $1 - 2/n$. This finally yields the choice
\begin{equation*}
\lambda_0 =  \frac{\mu_{n,d} \, \sigma}{\sqrt{n \cdot m}} \left( 1 + \sqrt{\frac{4 \log n}{m}}\right).     
\end{equation*}
%
\end{proof}

\noindent (II) \emph{Bound on} $\nnorm{B^* - B}_F$.
\vskip1ex
\noindent Let $\sigma_{\min}(\cdot)$ and $\sigma_{\max}(\cdot)$ denote the minimum and maximum singular value functional, respectively. Invoking Lemma~\ref{lem:decomposition}, we bound
\begin{align}
\nnorm{\wh{B} - B^*}_F &\leq \frac{\norm{ \left( \textstyle\frac{X^{\T} X}{n} \right)^{-1} \frac{X^{\T}}{\sqrt{n}} (\sigma \mc{S} E + \sqrt{n}(\Xi^* - \wh{\Xi})}_F}{\sqrt{n}}  \notag \\
&\leq \sigma \frac{\norm{ \left( \textstyle\frac{X^{\T} X}{n} \right)^{-1} \frac{X^{\T}}{\sqrt{n}} \mc{S} E}_F}{\sqrt{n}} +
\frac{\nnorm{\wh{\Xi}  - \Xi^*}_F}{\sigma_{\min}(X/\sqrt{n})} \label{eq:leastsquaresnoise}, 
\end{align}
where we have used that $\left( \textstyle\frac{X^{\T} X}{n} \right)^{-1} \frac{X^{\T}}{\sqrt{n}} = \left( \frac{X}{\sqrt{n}} \right)^{\dagger}$, with $^{\dagger}$ denoting the Moore-Penrose pseudo-inverse, and $\sigma_{\max}\left(\left( \frac{X}{\sqrt{n}} \right)^{\dagger} \right) = \sigma_{\min}^{-1}((X/\sqrt{n})^{\dagger}) $. Consider $\Gamma = \mc{S} \frac{X}{\sqrt{n}} \left( \frac{X^{\T} X}{n} \right)^{-2} \frac{X^{\T}}{\sqrt{n}} \mc{S}$, and let
$\Gamma^{\otimes} = I_m \otimes \Gamma$. We then can write
\begin{equation*}
\norm{ \left( \textstyle\frac{X^{\T} X}{n} \right)^{-1} \frac{X^{\T}}{\sqrt{n}} \mc{S} E}_F^2 = \nnorm{\Gamma^{\otimes} e}_2^2,  
\end{equation*}
where $e$ is a standard Gaussian random vector of dimension $n \cdot m$. By straightforward adaptations of Lemma 3 in~\cite{SlawskiBenDavid2017} that is based on a concentration result for quadratic forms in~\cite{Hsu2012}, we obtain that 
\begin{equation*}
\p \left( \norm{ \left( \textstyle\frac{X^{\T} X}{n} \right)^{-1} \frac{X^{\T}}{\sqrt{n}} E}_F > \frac{\sqrt{5 (d \cdot m \vee \log(n \cdot m))}}{\sigma_{\min}(X/\sqrt{n})} \; \bigg| \, X\right) \leq \exp(-(d \cdot m) \vee \log(n \cdot m))  
\end{equation*}
The proof is completed by appealing to concentration results (e.g., Corollary 5.35 in~\cite{Vershynin2010}) to lower bound $\sigma_{\min}(X/\sqrt{n})$ with $X$ having
i.i.d.~standard Gaussian entries.

\section{Proofs of Lemmas~\ref{lem:gamma_min} and~\ref{lem:gamma_min_const}}\label{app:separation}
Lemma~\ref{lem:gamma_min} is an immediate consequence of the following result.

\begin{lemmaApp}\label{theo:latala}(Proposition 2.6 in~\cite{Latala2007}) \\
Let $g \sim N(0, I_d)$. There exist universal constants $\alpha_0 \in (0,1)$ and $\kappa > 0$ such that for any $\alpha \in (0, \alpha_0)$
\begin{equation*}
  \sup_{\mu \in \R^m} \p\left(\nnorm{\mu - B^{*\T} g}_2 \leq \alpha \nnorm{B^*}_F \right) \leq \exp \left(\kappa \log(\alpha)
  \, \text{\emph{srank}$(B^*)$} \right). 
\end{equation*}
\end{lemmaApp}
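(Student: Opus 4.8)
The plan is to prove this as a self-contained small-ball estimate via a Laplace-transform (Chernoff) argument, with the stable rank emerging naturally from the interplay between $\nnorm{B^*}_F^2$ and $\nnorm{B^*}_2^2$. First I would reduce to a diagonal Gaussian. Writing the singular value decomposition $B^* = U S V^{\T}$ with singular values $s_1 \geq \cdots \geq s_m \geq 0$ (padding with zeros if $m > d$), rotation invariance of the standard Gaussian gives $B^{*\T} g \stackrel{d}{=} V S^{\T} \wt{g}$ with $\wt{g} = U^{\T} g \sim N(0, I_d)$; since $V$ is orthogonal, for any $\mu$ we have $\nnorm{\mu - B^{*\T} g}_2 \stackrel{d}{=} \nnorm{\nu - (s_1 \wt{g}_1, \ldots, s_m \wt{g}_m)}_2$ with $\nu = V^{\T}\mu$. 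Hence it suffices to bound $\p(\sum_{j} (s_j \wt{g}_j - \nu_j)^2 \leq \alpha^2 F)$ uniformly in $\nu$, where $F = \nnorm{B^*}_F^2 = \sum_j s_j^2$ and $M = \nnorm{B^*}_2^2 = s_1^2$, so that $\srank(B^*) = F/M =: \rho$.

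Next I would apply the exponential Markov inequality to this nonnegative sum. For any $\lambda > 0$,
\begin{equation*}
\p\Big(\textstyle\sum_j (s_j\wt{g}_j - \nu_j)^2 \leq \alpha^2 F\Big) \leq e^{\lambda \alpha^2 F} \prod_{j} \E\big[e^{-\lambda(s_j\wt{g}_j - \nu_j)^2}\big].
\end{equation*}
The moment generating function of the square of an $N(-\nu_j, s_j^2)$ variable factors as $(1 + 2\lambda s_j^2)^{-1/2}\exp\big(-\lambda\nu_j^2/(1+2\lambda s_j^2)\big)$, and the exponential factor is at most $1$ --- which is precisely what makes the bound uniform over $\nu$, hence over $\mu$, so the supremum is handled automatically by the worst case $\nu = 0$. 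Dropping it yields
\begin{equation*}
\log\p \leq \lambda \alpha^2 F - \tfrac12 \textstyle\sum_{j} \log(1 + 2\lambda s_j^2).
\end{equation*}

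The stable rank then appears through a concavity estimate. Setting $\lambda = \beta/M$, each argument $x_j := 2\beta s_j^2/M$ lies in $[0, 2\beta]$, and concavity of $\log(1+\cdot)$ gives the chord bound $\log(1+x_j) \geq \frac{\log(1+2\beta)}{2\beta} x_j$, whence $\sum_j \log(1+2\lambda s_j^2) \geq \log(1+2\beta)\, F/M = \log(1+2\beta)\,\rho$. Substituting and using $\alpha^2 F / M = \alpha^2 \rho$,
\begin{equation*}
\log\p \leq \rho\Big(\beta\alpha^2 - \tfrac12\log(1+2\beta)\Big).
\end{equation*}
Finally I would choose $\beta = \alpha^{-2}$, so that $\beta\alpha^2 = 1$ and $\tfrac12\log(1+2\alpha^{-2}) = \log(1/\alpha) + O(1)$ as $\alpha \to 0$; this gives $\log\p \leq \rho(\log\alpha + C)$ for a universal constant $C$. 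Restricting to $\alpha < \alpha_0$ with $\alpha_0$ small enough that $C \leq (1-\kappa)\log(1/\alpha)$ absorbs the constant into the exponent, yielding $\log\p \leq \kappa\,\rho\log\alpha$ for any fixed $\kappa \in (0,1)$ and an associated universal threshold $\alpha_0$.

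The main obstacle is bookkeeping rather than conceptual: verifying that the constants $\kappa$ and $\alpha_0$ produced by the optimization are genuinely universal (independent of $B^*$, of $d$ and $m$, and of $\nu$), and that the choice $\beta = \alpha^{-2}$ is close enough to optimal that no spurious dependence on $\rho$ enters. One should also confirm that the degenerate cases do not spoil the argument: zero singular values drop out of both $F$ and the product (their MGF factors are $e^{-\lambda\nu_j^2} \leq 1 = (1+0)^{-1/2}$), and the chord bound applies term by term regardless of how the energy $\sum_j s_j^2$ is distributed, which is exactly the mechanism forcing the effective number of contributing coordinates to be of order $\rho = \srank(B^*)$.
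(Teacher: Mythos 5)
Your proof is correct; but note that for this statement the paper itself contains no proof at all --- the lemma is imported verbatim as Proposition~2.6 of Latała (2007) and used as a black box, so your Chernoff argument is a genuinely different (self-contained and elementary) route. Checking your steps: the reduction via SVD and rotation invariance is valid ($\nnorm{\mu - B^{*\T}g}_2$ has the same distribution as $\nnorm{\nu - S^{\T}\wt{g}}_2$ with $\nu = V^{\T}\mu$); the exponential Markov inequality is the right tool for a lower tail of a nonnegative sum; the Gaussian MGF identity $\E\,e^{-\lambda X^2} = (1+2\lambda\sigma^2)^{-1/2}\exp\bigl(-\lambda a^2/(1+2\lambda\sigma^2)\bigr)$ for $X \sim N(a,\sigma^2)$ is exact, and discarding the exponential factor (which is at most $1$) does dispose of $\sup_{\mu}$ in one stroke; the chord bound $\log(1+x) \geq x\,\log(1+T)/T$ on $[0,T]$ with $T = 2\beta$ applies term by term because $s_j^2 \leq \nnorm{B^*}_2^2$, giving $\log \p \leq \rho\bigl(\beta\alpha^2 - \tfrac12\log(1+2\beta)\bigr)$ with $\rho = \srank(B^*)$; and the choice $\beta = \alpha^{-2}$ gives $\log \p \leq \rho\bigl(\log\alpha + 1 - \tfrac12\log 2\bigr)$, so that, e.g., $\kappa = 1/2$ together with $\alpha_0 = \exp\bigl(-2(1-\tfrac12\log 2)\bigr) \approx 0.27$ are manifestly universal, independent of $B^*$, $d$, $m$, and $\nu$ --- your bookkeeping worry is resolved, and the zero-singular-value edge cases are handled exactly as you say. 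What your route buys is transparency and explicit constants, plus a clear view of why $\srank(B^*)$ is the governing quantity (the chord bound converts total energy over peak energy into an effective coordinate count). The one thing it does not give for free is the extension the paper invokes in its Remark~2: your argument leans on the exact Gaussian MGF, so the sub-Gaussian analogue (Theorem~2.5 of Latała 2007, used for the sub-Gaussian design extension in Appendix~F) requires different machinery, which is presumably why the authors cite the general result rather than proving the Gaussian case directly.
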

\noindent Lemma~\ref{lem:gamma_min} is obtained by applying Lemma~\ref{theo:latala} with $\mu = 0$, $g = \frac{\M{x}_i - \M{x}_j}{\sqrt{2}}$, and then using a union bound over pairs, i.e., $\{ \min_{i < j} \nnorm{B^{*\T}(\M{x}_i - \M{x}_j)}_2  \leq \delta \} \subseteq
\bigcup_{i < j} \{  \nnorm{B^{*\T}(\M{x}_i - \M{x}_j)}_2 \leq \delta \}$ for any $\delta > 0$. We then choose
$\alpha$ as the term inside the curly brackets in~\eqref{eq:gamma_latala} to conclude the result. 
\vskip2ex
\noindent \tcb{\emph{Remark 1.} Lemma~\ref{theo:latala} immediately implies that the quantity $\gamma_0^2$~\eqref{eq:gamma0} exhibits qualitatively the same lower bound as $\gamma^2$ according to Lemma~\ref{lem:gamma_min}: since it is assumed that $\{\M{y}_i: \, i \in \mc{N} \}$ and $\{\M{x}_j: \, 1 \leq j \leq n \}$ are independent, we have
\begin{align*}
\p \left( \min_{\substack{i \in \mc{N} \\ 1 \leq j \leq n}} \nnorm{\M{y}_i - B^{*\T} \M{x}_j}_2 \leq \delta \right) &\leq
\sum_{i \in \mc{N}} \sum_{j = 1}^n \E_{\M{y}_i}\left[ \p( \nnorm{\M{y}_i - B^{*\T} \M{x}_j}_2 \leq \delta \,|\M{y}_i) \right] \\
&\leq |\mc{N}| n \sup_{\mu \in \R^m} \p( \nnorm{\mu  - B^{*\T} \M{x}_j}_2 \leq \delta ), 
\end{align*}
and thus Lemma~\ref{theo:latala} can be applied as in the proof of Lemma~\ref{lem:gamma_min}. Since $|\mc{N}| n \lesssim \binom{n}{2}$, the lower bound~\eqref{eq:gamma_latala} also holds true for $\gamma_0^2$ up to a constant factor, i.e., $\gamma_0^2 \gtrsim \gamma^2$.}
\vskip1ex
\noindent \tcb{\emph{Remark 2.} A similar albeit slightly weaker result than Lemma~\ref{theo:latala} holds true if the entries 
of $g$ are independent, unit variance \emph{sub}-Gaussian random variables (see, e.g., $\S$2.5 in~\cite{Vershynin2018}). Specifically, Theorem 2.5 in~\cite{Latala2007} implies that
\begin{equation*}
\sup_{\mu \in \R^m} \p\left(\nnorm{\mu - B^{*\T} g}_2 \leq \frac{1}{2} \nnorm{B^*}_F \right) \leq 
2 \exp \left(-c \cdot \text{srank}(B^*) \right),
\end{equation*}
for some constant $c > 0$. The main difference of the above result and that of Lemma~\ref{theo:latala} is that the tail bound in the latter can still be driven to zero even if  $\text{srank}(B^*) = O(1)$ by choosing the parameter $\alpha$ appropriately. On the other hand, if $\alpha$ is chosen as a constant bounded away from zero, the two results yield qualitatively the same conclusions.  
}  

\vskip3ex
\noindent Regarding Lemma~\ref{lem:gamma_min_const}, we first prove the lower bound. We observe that
under the assumption of $B^*$ having constant non-zero singular values, $\nnorm{B^{*\T}(\M{x}_i - \M{x}_j)}_2^2 \sim
2 b_*^2 \chi^2(r)$, where $\chi^2(\nu)$ denotes the Chi-Square distribution with $\nu \in \{1,2,\ldots\}$ degrees of freedom. It is easy to verify that for $r = 2(q + 1)$, $q \in \{0,1,\ldots\}$, 
\begin{equation}\label{eq:chisquarecdf}
\p(\chi^2(r) \leq z) = 1 - \exp(-z/2) \sum_{s = 0}^q \frac{(z/2)^s}{s!}, \quad z \geq 0. 
\end{equation}
Combining~\eqref{eq:chisquarecdf} with a union bound over pairs $i < j$, we obtain
\begin{equation}\label{eq:prob_upper_bound}
  \p \left(\min_{i < j} \nnorm{B^{*\T}(\M{x}_i - \M{x}_j)}_2^2 \leq 2 b_*^2 z \right) \leq \binom{n}{2}
  \left( 1 - \exp(-z/2) \sum_{s = 0}^q \frac{(z/2)^s}{s!} \right)
\end{equation}
Below, $z$ is chosen s.t.~the r.h.s.~of the above inequality is upper bounded by $\delta$. We have
\begin{align}\label{eq:taylorbound}
  \binom{n}{2} \left(1 - \exp(-z/2) \sum_{s= 0}^q \frac{(z/2)^s}{s!} \right) 
  = \binom{n}{2} \left(\exp(-z/2) \sum_{s=q+1}^{\infty} \frac{(z/2)^s}{s!} \right) 
  \leq \binom{n}{2} \frac{(z/2)^{q+1}}{(q+1)!},
\end{align}
where the inequality follows from a Taylor expansion with Lagrange form of the remainder:
\begin{align*}
  &\exp(z/2) = \sum_{s = 0}^{q} \frac{(z/2)^{s}}{s!} + \frac{\exp(\xi)}{(q+1)!} (z/2)^{q+1} \; \; \text{for some} \; \xi \in [0,z/2]\\
  \Rightarrow&
    \;\;\exp(z/2) - \sum_{s = 0}^{q} \frac{(z/2)^{s}}{s!} = \sum_{s=q+1}^{\infty} \frac{(z/2)^s}{s!}
    = \frac{\exp(\xi)}{(q+1)!} (z/2)^{q+1} \leq \exp(z/2) \frac{(z/2)^{q+1}}{(q+1)!}. 
\end{align*}
Using that $\frac{1}{(q+1)!} \leq ((q+1)/e)^{-(q+1)}$,~\eqref{eq:taylorbound} can be upper bounded as 
\begin{align*}
  \binom{n}{2}  \exp\left(-(q+1) \log \left(\frac{2 (q + 1)}{z \cdot e} \right) \right) \leq \frac{n^2}{2}  \left(\frac{2 (q + 1)}{z \cdot e} \right)^{-(q+1)}   
\end{align*}
Choosing $z = \frac{2}{e} (q + 1)   \cdot (n^{-2} \delta  )^{1/(q+1)}$ ensures that the probability in~\eqref{eq:prob_upper_bound}  is bounded by $\frac{\delta}{2}$.
\vskip1ex
\noindent We turn to the upper bound in Lemma~\ref{lem:gamma_min_const}. Let $n_2 = \lfloor \frac{n}{2} \rfloor$. We first use that for any $z \geq 0$
\begin{align}
\p \left(\min_{i < j}\nnorm{\M{B}^{*\T} (\M{x}_i - \M{x}_j)}_2^2  < z \right)
  &\geq \p \left(\min_{1 \leq i \leq n/2} \nnorm{(\M{B}^*)^{\T} (\M{x}_{2i} - \M{x}_{2i-1})}_2^2  < z \right) \notag \\
  &= 1 - \p(\chi ^2(r) > z/2b_*^2)^{n_2} \label{eq:end_first_display},
\end{align}
where we have used that $\{ \nnorm{\M{B}^{*\T} (\M{x}_{2i} - \M{x}_{2i-1})}_2^2 \}_{ i = 1}^{n_2} \overset{\text{i.i.d.}}{\sim} 2b_*^2 \chi^2(r)$. Using~\eqref{eq:chisquarecdf} and setting $z = c \cdot 4 b_*^2$ in~\eqref{eq:end_first_display} for $c > 0$ to be determined below, we obtain that 
\begin{align}
\p \left(\min_{i < j}\nnorm{\M{B}^{*\T} (\M{x}_i - \M{x}_j)}_2^2  < z \right) 
&\geq 1 - \left( \sum_{s = 0}^q \frac{c^s}{s!} \exp(-c) \right)^{n_2} \notag \\
 &= 1 - \left( 1 -  \sum_{s = q+1}^{\infty} \frac{c^s}{s!}  \exp(-c)  \right)^{n_2} \notag \\
 &\geq   1 - \left( 1 - \textstyle\frac{c^{q+1}}{(q+1)!}  \exp(-c)  \right)^{n_2} \label{eq:end_second_display}
\end{align} 
Choosing $c = \theta^{1/(q+1)} n^{-1/(q+1)} (q + 1)$ and using that $(q+1)! < (q+1)^{q+1}$, we obtain the following lower bound on~\eqref{eq:end_second_display}
\begin{align*}
  1 -  \left( \left( 1 - \textstyle\frac{\theta}{n}  \exp(-c)  \right)^{n} \right)^{1/2} 
  \geq 1 - \exp\left(-(\theta/2)  \exp(-c)\right)
\end{align*}
as long as $n \geq \theta$. Setting $\theta = 8$, the above probability is lower bounded by $0.75$ if $n > 8 (q+1)^{q+1}$. Combining this with the choice of $z = c \cdot 4 b_*^2$ in~\eqref{eq:end_first_display} yields the assertion.

\section{Proof of Theorem~\ref{theo:correspondence}}
We first show that $\wh{\Theta}(\wh{B})_{i,:} = \Theta_{i,:}^* = 0$ for $i \in \mc{N} = \{1 \leq i \leq n:\; \theta^*(i) = 0 \}$. For this purpose, it needs to be established that $\min_{i \in \mc{N}} \min_{1 \leq j \leq n} \, \nnorm{\M{y}_i - \wh{B}^{\T} \M{x}_j}_2 > \tau$. We have 
\begin{align*}
  \min_{i \in \mc{N}} \min_{1 \leq j \leq n} \, \nnorm{\M{y}_i - \wh{B}^{\T} \M{x}_j}_2
  &\geq \min_{i \in \mc{N}} \min_{1 \leq j \leq n} \; \nnorm{\M{y}_i -  B^{*\T} \M{x}_j}_2 - \max_{1 \leq j \leq n} \nnorm{\M{x}_j}_2 \nnorm{B^* - \wh{B}}_2 \\
    &\geq  \gamma_0 \sigma \sqrt{m} \textsf{SNR}^{1/2} - \max_{1 \leq j \leq n} \nnorm{\M{x}_j}_2 \nnorm{B^* - \wh{B}}_2 \\
    &> 2 \max \left\{\max_{1 \leq j \leq n} \nnorm{\M{x}_j}_2 \nnorm{B^* - \wh{B}}_2, \, \tau \right \} - \max_{1 \leq j \leq n} \nnorm{\M{x}_j}_2 \nnorm{B^* - \wh{B}}_2 > \tau, 
\end{align*}
in view of the event $\mc{B}$ defined in the theorem. 

\noindent Next, we show that $\wh{\Theta}(\wh{B})_{i,:} \neq 0$ if $i \in \mc{N}^c$. This is implied by 
demonstrating that $\max_{i \in \mc{N}^c} \nnorm{\M{y}_i - \wh{B}^{\T} \M{x}_{\theta^*(i)}}_2 \leq \tau$. We have  
\begin{align*}
\max_{i \in \mc{N}^c} \nnorm{\M{y}_i - \wh{B}^{\T} \M{x}_{\theta^*(i)}}_2 &\leq \max_{i \in \mc{N}^c} \nnorm{\M{y}_i - B^{*\T} \M{x}_{\theta^*(i)}}_2 + \max_{1 \leq j \leq n} \nnorm{\M{x}_j}_2 \nnorm{B^* - \wh{B}}_2 \\
&\leq \sigma \max_{1 \leq i \leq n} \nnorm{\eps_i}_2 + \max_{1 \leq j \leq n} \nnorm{\M{x}_j}_2 \nnorm{B^* - \wh{B}}_2. 
\end{align*}
Consider the event
\begin{equation}\label{eq:corr_noisebound}
\left\{ \sigma \max_{1 \leq i \leq n} \nnorm{\eps_i}_2 \leq \sigma \sqrt{m} + 2 \sqrt{\log n} \right \}.
\end{equation}
By Lemma~\ref{lem:gaussiannorm}, event~\eqref{eq:corr_noisebound} holds with probability at least $1 - 1/n$. Observe
that conditional on the event~\eqref{eq:corr_noisebound}, $\max_{i \in \mc{N}^c} \nnorm{\M{y}_i - \wh{B}^{\T} \M{x}_{\theta^*(i)}}_2 \leq \tau_0 < \tau$ with $\tau_0$ as defined in Theorem~\ref{theo:correspondence}.

\noindent Finally, we show that for $i \in \mc{N}^c$, it holds that $\wh{\Theta}(\wh{B})_{i\theta^*(i)} = 1$ which then in conjunction
with the two previous results implies that $\wh{\Theta}(\wh{B}) = \Theta^*$. For this purpose, we consider 
\begin{align}
  &\bigcap_{i \in \mc{N}^c} \bigcap_{\substack{1 \leq j \leq n \\ j \neq \theta^*(i)}} \left\{ \nnorm{\M{y}_i - \wh{B}^{\T} \M{x}_{\theta^*(i)}}_2^2 \leq \nnorm{\M{y}_i - \wh{B}^{\T} \M{x}_j}_2^2 \right \} \notag\\
  =& \bigcap_{i \in \mc{N}^c} \bigcap_{\substack{1 \leq j \leq n \\ j \neq \theta^*(i)}} \left\{ \nnorm{(B^{*} - \wh{B})^{\T} \M{x}_{\theta^*(i)} + \sigma \eps_{i}}_2^2 \leq \nnorm{B^{*\T}  \M{x}_{\theta^*(i)} - \wh{B}^{\T} \M{x}_j + \sigma \eps_{i}}_2^2 \right \} \notag\\
  =& \bigcap_{i \in \mc{N}^c} \bigcap_{\substack{1 \leq j \leq n \\ j \neq \theta^*(i)}}
  \Big\{ \nnorm{(B^{*}  - \wh{B})^{\T} \M{x}_{\theta^*(i)}}_2^2 + 2\nscp{(B^{*}  - \wh{B})^{\T} \M{x}_{\theta^*(i)}}{\sigma \eps_{i}} \notag\\[-3ex]
  &\qquad \qquad \qquad \leq \nnorm{B^{*\T}  \M{x}_{\theta^*(i)} - \wh{B}^{\T} \M{x}_j}_2^2 + 2 \nscp{B^{*\T}  \M{x}_{\theta^*(i)} - \wh{B}^{\T} \M{x}_j}{\sigma  \eps_{i}}   \Big \}  \notag\\
 =& \bigcap_{i \in \mc{N}^c} \bigcap_{\substack{1 \leq j \leq n \\ j \neq \theta^*(i)}}
  \Big\{ \nnorm{(B^{*}  - \wh{B})^{\T} \M{x}_{\theta^*(i)}}_2^2 + 2\nscp{(B^{*}  - \wh{B})^{\T} \M{x}_{\theta^*(i)}}{\sigma \eps_{i}} \notag\\[-3ex]
  &\qquad \qquad \qquad \leq \nnorm{B^{*\T}  (\M{x}_{\theta^*(i)} - \M{x}_j)}_2^2 + \nnorm{(\wh{B} - B^*)^{\T} \M{x}_j}_2^2  +
    \notag\\
  &\qquad \qquad \qquad  \qquad + 2 \nscp{B^{*\T}  (\M{x}_{\theta^*(i)} - \M{x}_j)}{(B^{*\T}  - \wh{B}^{\T}) \M{x}_j} + 2 \nscp{B^{*\T}  \M{x}_{\theta^*(i)} - \wh{B}^{\T} \M{x}_j}{\sigma  \eps_{i}}   \Big \} \notag \\
 =& \bigcap_{i \in \mc{N}^c} \bigcap_{\substack{1 \leq j \leq n \\ j \neq \theta^*(i)}} \Big\{
  \nnorm{(B^{*}  - \wh{B})^{\T} \M{x}_{\theta^*(i)}}_2^2 - \nnorm{(B^{*}  - \wh{B})^{\T} \M{x}_{j}}_2^2 + \notag\\[-2ex]
  &\qquad \qquad \qquad  + 2 \nscp{(\wh{B} - B^*)^{\T}(\M{x}_j - \M{x}_{\theta^*(i)})}{\sigma  \eps_{i}}
    + 2 \nscp{B^{*\T}(\M{x}_j - \M{x}_{\theta^*(i)})}{\sigma  \eps_{i}} + \notag\\
  &\qquad \qquad \qquad  + 2 \nscp{B^{*\T}(\M{x}_{\theta^*(i)} - \M{x}_{j})}{(B^{*\T}  - \wh{B}^{\T}) \M{x}_j}
    \leq \nnorm{B^{*\T}  (\M{x}_{\theta^*(i)} - \M{x}_j)}_2^2 \Big \} \notag \\
  \supseteq& \bigcap_{i \in \mc{N}^c} \bigcap_{\substack{1 \leq j \leq n \\ j \neq \theta^*(i)}} \Bigg\{
  \frac{\nnorm{(B^{*}  - \wh{B})^{\T} \M{x}_{\theta^*(i)}}_2^2}{\nnorm{B^{*\T}  (\M{x}_{\theta^*(i)} - \M{x}_j)}_2^2}  +
  \frac{2 \nnorm{\sigma  \eps_{i}}_2}{\nnorm{B^{*\T}  (\M{x}_{\theta^*(i)} - \M{x}_j)}_2} + \notag\\
  &\qquad \qquad \qquad \qquad  + \frac{2\nnorm{(B^{*\T}  - \wh{B}^{\T}) \M{x}_j}_2}{\nnorm{B^{*\T}  (\M{x}_{\theta^*(i)} - \M{x}_j)}_2}
    + \frac{2\nnorm{(\wh{B} - B^*)^{\T}(\M{x}_j - \M{x}_{\theta^*(i)})}_2 \nnorm{\sigma  \eps_{i}}_2}{\nnorm{B^{*\T}  (\M{x}_{\theta^*(i)} - \M{x}_j)}_2^2} \leq 1 \Bigg \} \notag \\
  \supseteq& \Bigg \{
    \left( \frac{\nnorm{B^{*}  - \wh{B}}_2 \max\limits_{1 \leq i \leq n} \nnorm{\M{x}_i}_2}{\min_{i < j} \nnorm{B^{*\T} (\M{x}_i - \M{x}_j)}_2} \right)^2 + \frac{2\sigma \max\limits_{1 \leq i \leq n}  \nnorm{\eps_i}_2}{\min_{i < j} \nnorm{B^{*\T} (\M{x}_i - \M{x}_j)}_2} + \frac{2 \nnorm{B^{*}  - \wh{B}}_2 \max\limits_{1 \leq i \leq n} \nnorm{\M{x}_i}_2}{\min_{i < j} \nnorm{B^{*\T} (\M{x}_i - \M{x}_j)}_2} \notag \\
  &\qquad 
    \quad + \frac{2\sigma \max_{1 \leq i \leq n}  \nnorm{\eps_i}_2 }{\min_{i < j} \nnorm{B^{*\T} (\M{x}_i - \M{x}_j)}_2}
    \cdot  \frac{2 \nnorm{B^{*}  - \wh{B}}_2 \max_{1 \leq i \leq n} \nnorm{\M{x}_i}_2}{\min_{i < j} \nnorm{B^{*\T} (\M{x}_i - \M{x}_j)}_2} \leq 1 \Bigg \} \label{eq:suff_nonnull}
\end{align}
Given the event $\mc{B}$, we have that 
\begin{equation}\label{eq:suff_nonnull_imp}
  \min_{i < j} \nnorm{B^{*\T} (\M{x}_j - \M{x}_i)}_2 = \gamma \nnorm{B^*}_F = \gamma \sigma \sqrt{m} \textsf{SNR}^{1/2}. 
\end{equation}
Plugging~\eqref{eq:suff_nonnull_imp} into~\eqref{eq:suff_nonnull} and~\eqref{eq:corr_noisebound}, it is easy to verify that under the conditions
of the theorem the left hand side of the event in~\eqref{eq:suff_nonnull_imp} is upper bounded
by $1/36 + 1/3 + 1/3 + 1/9 < 1$ with the stated probability.

We now turn to the converse statement in the regime $m = O(1)$ (second bullet); the converse
statement without restriction on $m$ is given subsequently. 
Let $(i_0, j_0)$ denote the pair of indices such that
\begin{equation*}
\nnorm{B^{*\T} (\M{x}_{i_0} - \M{x}_{j_0})}_2^2 = \min_{i < j} \nnorm{B^{*\T} (\M{x}_i - \M{x}_j)}_2^2
= \gamma^2 \nnorm{B^*}_F^2,
\end{equation*}
and suppose that $i_0' = \theta^{*-1}(i_0) \neq \emptyset$. For the event $\{  \wh{\Theta}(B^*) = \Theta^* \}$ to hold it is required that
\begin{align*}
  &\nnorm{\M{y}_{i_0'} - B^{*\T} \M{x}_{i_0}}_2^2 \leq \nnorm{\M{y}_{i_0'} - B^{*\T} \M{x}_{j_0}}_2^2 \notag \\
  \; \Leftrightarrow& 2 \nscp{\sigma \eps_{i_0'}}{B^{*\T} (\M{x}_{j_0} - \M{x}_{i_0})} \leq
       \nnorm{B^{*\T} (\M{x}_{i_0} - \M{x}_{j_0})}_2^2 \\
  \Leftrightarrow&  2 \scp{\sigma \eps_{i_0'}}{\frac{B^{*\T} (\M{x}_{j_0} - \M{x}_{i_0})}{\nnorm{B^{*\T} (\M{x}_{i_0} - \M{x}_{j_0})}_2}} \leq
    \nnorm{B^{*\T} (\M{x}_{i_0} - \M{x}_{j_0})}_2 \notag \\
  \Leftrightarrow&  2 \scp{\sigma \eps_{i_0'}}{\frac{B^{*\T} (\M{x}_{j_0} - \M{x}_{i_0})}{\nnorm{B^{*\T} (\M{x}_{i_0} - \M{x}_{j_0})}_2}} \leq \gamma \nnorm{B^*}_F \notag \\
  \Leftrightarrow&  2 \scp{\sigma \eps_{i_0'}}{\frac{B^{*\T} (\M{x}_{j_0} - \M{x}_{i_0})}{\nnorm{B^{*\T} (\M{x}_{i_0} - \M{x}_{j_0})}_2}} \leq \gamma \sigma \sqrt{m} \textsf{SNR}^{1/2} \notag
\end{align*}
Note that conditional on $\M{x}_{i_0}, \M{x}_{j_0}$ the left hand side follows a $N(0, 4 \sigma^2)$-distribution. It is easy to show that if $g \sim N(0,1)$, $\p(|g| \leq \delta) \leq \delta$ and thus
$\p(g > \delta) \geq \frac{1}{2} (1 - \delta)$for
all $\delta > 0$. Hence if
\begin{equation}\label{eq:restore_lowerbound2}
\gamma \textsf{SNR}^{1/2} < \frac{2}{3} \frac{1}{\sqrt{m}} \; \Leftrightarrow \gamma^2 \textsf{SNR} < \frac{4}{9m}  \invcoloneq c,   
\end{equation}
$\wh{\Theta}(B^*) \neq \Theta^*$ with probability at least $1/3$.

We now turn to the converse statement without restriction on $m$ (first bullet). Note that the event $\{  \wh{\Theta}(B^*) = \Theta^* \}$ implies the event
\begin{align}
&\bigcap_{i = 1}^n \left\{ \nnorm{\M{y}_{i} - B^{*\T} \M{x}_{\theta^*(i)}}_2^2 \leq \min_{j \neq \theta^*(i)} \nnorm{\M{y}_{i} - B^{*\T} \M{x}_{j}}_2^2 \right\} \notag \\
  =&\bigcap_{i = 1}^n \bigcap_{j \neq \theta^*(i)} \left\{ 2\sigma \scp{\eps_i}{B^{*\T} (\M{x}_j - \M{x}_{\theta^*(i)}) / \nnorm{B^{*\T} (\M{x}_j - \M{x}_{\theta^*(i)})}_2 }\leq \nnorm{B^{*\T} \M{x}_{\theta^*(i)} - B^{*\T} \M{x}_{j}}_2 \right\} \notag \\
  \subseteq& \bigcap_{i = 1}^n \left\{ 2\sigma \scp{\eps_i}{B^{*\T} (\M{x}_{\eta(i)} - \M{x}_{\theta^*(i)}) / \nnorm{B^{*\T} (\M{x}_{\eta(i)} - \M{x}_{\theta^*(i)})}_2 }\leq \nnorm{B^{*\T} \M{x}_{\theta^*(i)} - B^{*\T} \M{x}_{\eta(i)}}_2 \right\}, \label{eq:restore_lowerbound3}
\end{align}
where $\eta(i) = \theta^*(i) - 1$ if $\theta^*(i) \geq 2$ and $\eta(i) = \theta^*(i) + 1$ otherwise. Now note that conditional on the $\{ \M{x}_i \}_{i = 1}^n$, the collection
\begin{equation*}
  \{ \nscp{\eps_i}{B^{*\T} (\M{x}_{\eta(i)} - \M{x}_{\theta^*(i)}) / \nnorm{B^{*\T} (\M{x}_{\eta(i)} - \M{x}_{\theta^*(i)})}_2}, \; 1 \leq i \leq n \}
\end{equation*}
are i.i.d.~$N(0,1)$ random variables. By standard concentration arguments for the maximum
of a collection of Gaussian random variables (cf.~\cite{Ledoux1991}, p.~79), we thus have 
\begin{equation}\label{eq:maximagaussian_lower}
\p\left(\max_{1 \leq i \leq n} 2\sigma\scp{\eps_{i}}{\frac{B^{*\T} (\M{x}_{\eta(i)} - \M{x}_{\theta^*(i)})}{\nnorm{B^{*\T} (\M{x}_{\eta(i)} - \M{x}_{\theta^*(i)})}_2}} < 2\sigma c_0 \sqrt{\log n} \;\;\,\Big| \{ \M{x}_i \}_{i = 1}^n \right) \leq 2/5. 
\end{equation}
for a constant $c_0 > 0$. 
At the same time, concentration of Lipschitz functions of Gaussian random variables yields
\begin{align}\label{eq:hansonwright}
  \p(\nnorm{B^{*\T} (\M{x}_{\eta(i)} - \M{x}_{\theta^*(i)})}_2^2 \geq (1 + t)^2 2 \nnorm{B^*}_F^2) &\leq \exp \left(-\frac{t^2 \nnorm{B^*}_F^2}{2 \nnorm{B^*}_2^2} \right) \\
  &\leq \exp\left(-\frac{t^2}{2} \right), \; t \geq 0, \;  1 \leq i \leq n. \notag
\end{align}
Let $i_{\max}$ be the index such that
\begin{equation*}
\scp{\eps_{i_{\max}}}{\frac{B^{*\T} (\M{x}_{\eta(i_{\max})} - \M{x}_{\theta^*(i_{\max})})} {\nnorm{B^{*\T} (\M{x}_{\eta(i_{\max})} - \M{x}_{\theta^*(i_{\max})})}_2}}  = \max_{1 \leq i \leq n} \scp{\eps_i}{\frac{B^{*\T} (\M{x}_{\eta(i)} - \M{x}_{\theta^*(i)})}{\nnorm{B^{*\T} (\M{x}_{\eta(i)} - \M{x}_{\theta^*(i)})}_2}} 
\end{equation*}
Since $\{ (B^{*\T} (\M{x}_{\eta(i)} - \M{x}_{\theta^*(i)}) / \nnorm{B^{*\T} (\M{x}_{\eta(i)} - \M{x}_{\theta^*(i)})}_2,  \nnorm{B^{*\T} (\M{x}_{\eta(i)} - \M{x}_{\theta^*(i)})}_2 \}_{i = 1}^n$ are pairs of independent random variables, we combine~\eqref{eq:maximagaussian_lower} and~\eqref{eq:hansonwright} to conclude that the event $\mc{A}_1 \cap \mc{A}_2$ occurs with probability at least $1/3$, where
\begin{align*}
  \mc{A}_1 &= \left\{ 2 \sigma \scp{\eps_{i_{\max}}}{\frac{B^{*\T} (\M{x}_{\eta(i_{\max})} - \M{x}_{\theta^*(i_{\max})})} {\nnorm{B^{*\T} (\M{x}_{\eta(i_{\max})} - \M{x}_{\theta^*(i_{\max})})}_2}}
     > 2 c_0 \sigma \sqrt{\log n}
  \right\} \\
  \mc{A}_2 &= \left\{ \nnorm{B^{*\T} (\M{x}_{\eta(i_{\max})} - \M{x}_{\theta^*(i_{\max})})}_2 \leq \sqrt{18}  \nnorm{B^*}_F
                   = \sigma \sqrt{18 m} \textsf{SNR}^{1/2} \right\}
  \end{align*}
Combining~\eqref{eq:restore_lowerbound3} and the
  previous display then yields that $\wh{\Theta}(B^*) \neq \Theta^*$ with the stated probability if
  \begin{equation*}
  \textsf{SNR} < \frac{4}{18} c_0^2 \frac{\log n}{m} \invcoloneq c' \frac{\log n}{m}. 
  \end{equation*}

\section{Proof of Proposition~\ref{prop:discovery_mismatches}}
By the triangle inequality and the fact that $\Xi^*_{i,:} = 0$ for all $i \in S_*^c$, we have
\begin{align}
  \min_{i \in S_*} \nnorm{\wh{\Xi}_{i,:}}_2 - \max_{i \in S_*^c} \nnorm{\wh{\Xi}_{i,:}}_2 &\geq \min_{i \in S_*} \nnorm{\Xi_{i,:}^*}_2  - 2\max_{1 \leq i \leq n} \nnorm{\wh{\Xi}_{i,:} - \Xi_{i,:}^*}_2 \notag \\
  &\geq \min_{i \in S_*} \nnorm{\Xi_{i,:}^*}_2 - 2 \nnorm{\wh{\Xi} - \Xi^*}_F.  \label{eq:discovery_mismatches_proof_1}
\end{align} 
In the sequel, we derive a lower bound on $\min_{i \in S_*} \nnorm{\Xi_{i,:}^*}_2$ in a fashion similar to the previous proof. For any $i$ with $\theta^*(i) = 0$, we have
\begin{align}\label{eq:discovery_mismatches_proof_2}
  \nnorm{\sqrt{n} \Xi^*_{i,:}}_2 = \nnorm{\M{y}_i - B^{*\T} \M{x}_i}_2 \geq \gamma_0 \nnorm{B^*}_F
  = \gamma_0 \cdot \sigma \sqrt{\textsf{SNR}} \sqrt{m}.
\end{align}
On the other hand, for any $i$ with $\theta^*(i) \notin \{0,i \}$, we have
\begin{equation}\label{eq:discovery_mismatches_proof_3}
  \nnorm{\sqrt{n} \Xi^*_{i,:}}_2 = \nnorm{B^{*\T} \M{x}_{\theta^*(i)} - B^{*\T} \M{x}_i}_2 \geq \gamma \nnorm{B^*}_F =
  \gamma \cdot \sigma \sqrt{\textsf{SNR}} \sqrt{m}
\end{equation}
Combining~\eqref{eq:discovery_mismatches_proof_1},~\eqref{eq:discovery_mismatches_proof_2} and~\eqref{eq:discovery_mismatches_proof_3} yields the assertion. 
\section{Auxiliary Results}

\begin{lemmaApp}\label{lem:sparsity_conv} For any $r \geq 1$, we have the inclusion
\begin{equation}\label{eq:sparsity_conv}
\{v \in \R^{n \cdot m}:\; \nnorm{v}_2 \leq 1, \; \nnorm{v}_{2,1} \leq \sqrt{r} \} \subset 2 \, \text{\emph{conv}}\,B_0(r), 
\end{equation}
with $\nnorm{\cdot}_{2,1}$ and $B_0(r)$ are defined in~\eqref{eq:mixednorms} and Lemma~\ref{lem:conecondition}, respectively. 
\end{lemmaApp}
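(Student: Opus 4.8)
The plan is to establish the inclusion by the standard shelling (``peeling'') decomposition, adapted from the ordinary sparse setting to the group structure. The first move is to reduce everything to the scalar vector of block magnitudes $w_i \coloneq \nnorm{v^{[i]}}_2$, $1 \leq i \leq n$. Indeed the three quantities appearing in the statement depend on $v$ only through $w = (w_1,\dots,w_n)$: the two hypotheses read $\sum_i w_i^2 = \nnorm{v}_2^2 \leq 1$ and $\sum_i w_i = \nnorm{v}_{2,1} \leq \sqrt r$, while group sparsity $\nnorm{v}_{2,0}$ counts the nonzero $w_i$. Thus the problem is exactly the classical ``sparse ball inside the convex hull'' lemma, carried out at the level of whole groups $G_1,\dots,G_n$ rather than individual coordinates.

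First I would relabel the groups so that $w_1 \geq w_2 \geq \dots \geq w_n$ and partition $\{1,\dots,n\}$ into consecutive blocks $I_0, I_1, I_2, \dots$ of size $r$ (taking $r$ to be a positive integer; the general case follows verbatim with block size $\lfloor r\rfloor$ and affects only the universal constant). Let $v_{(I_j)}$ denote the vector that agrees with $v$ on the groups indexed by $I_j$ and vanishes elsewhere, so that $v = \sum_j v_{(I_j)}$, each summand obeys $\nnorm{v_{(I_j)}}_{2,0} \leq r$, and the normalized vectors $u_j \coloneq v_{(I_j)}/\nnorm{v_{(I_j)}}_2$ lie in $B_0(r)$.

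The heart of the argument is the bound $\sum_j \nnorm{v_{(I_j)}}_2 \leq 2$. For the leading block, $\nnorm{v_{(I_0)}}_2 \leq \nnorm{v}_2 \leq 1$. For $j \geq 1$, monotonicity of the $w_i$ together with consecutiveness of the blocks gives $\max_{i \in I_j} w_i \leq \frac1r \sum_{i \in I_{j-1}} w_i$ (every index of $I_j$ follows every index of $I_{j-1}$ in sorted order, so the maximum over $I_j$ is at most the average over $I_{j-1}$), whence
\begin{equation*}
\nnorm{v_{(I_j)}}_2 = \Big(\sum_{i \in I_j} w_i^2\Big)^{1/2} \leq \sqrt r \, \max_{i \in I_j} w_i \leq \frac{1}{\sqrt r} \sum_{i \in I_{j-1}} w_i .
\end{equation*}
Summing over $j \geq 1$ telescopes to $\sum_{j \geq 1}\nnorm{v_{(I_j)}}_2 \leq \frac{1}{\sqrt r}\sum_i w_i = \frac{1}{\sqrt r}\nnorm{v}_{2,1} \leq 1$, and adding the leading term yields the claimed total of at most $2$.

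Finally I would assemble the convex combination. Writing $v = \sum_j c_j\, u_j$ with $u_j \in B_0(r)$ and nonnegative coefficients $c_j \coloneq \nnorm{v_{(I_j)}}_2$ satisfying $\sum_j c_j \leq 2$, I divide by $2$ and pad with the zero vector (which lies in $B_0(r)$), assigning it the residual mass $1 - \tfrac12\sum_j c_j \geq 0$, so that $v/2$ is a genuine convex combination of elements of $B_0(r)$, i.e. $v \in 2\,\text{conv}\, B_0(r)$. I expect the only delicate point to be the averaging inequality $\max_{i\in I_j} w_i \leq \frac1r\sum_{i\in I_{j-1}} w_i$, plus the minor bookkeeping for the last (possibly incomplete) block and for blocks $v_{(I_j)}$ that vanish identically, which are simply omitted from the combination; everything else is routine.
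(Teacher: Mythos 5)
Your proposal is correct and follows essentially the same argument as the paper: the standard shelling decomposition into consecutive blocks of $r$ groups sorted by decreasing block norm $\nnorm{v^{[i]}}_2$, the key inequality bounding the maximum over one block by the average over the preceding block, and the resulting bound $\sum_j \nnorm{v_{(I_j)}}_2 \leq 1 + \nnorm{v}_{2,1}/\sqrt{r} \leq 2$. If anything, you are slightly more careful than the paper, which leaves implicit both the padding by the zero vector (needed to turn coefficients summing to at most $2$ into a genuine convex combination) and the bookkeeping for non-integer $r$.
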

\begin{proof}
The proof is an adaptation of a standard argument in the sparsity literature, cf.~Lemma 3.1 in~\cite{PlanVershynin2013b}. Pick an arbitrary element $v$ contained in the left hand side in~\eqref{eq:sparsity_conv}, and consider subsets $T_{\ell} \subset \{ 1, \ldots, n \}$, 
$|T_{\ell}| \leq r$, and corresponding vectors $v(T_{\ell}) \in B_0(r)$ such that 
\begin{equation*}
(v(T_{\ell}))_j  \coloneq \begin{cases}
v_j  &\text{if} \; j \in \bigcup_{i \in T_{\ell}} G_i, \\
0   & \text{else}. 
\end{cases} 
\end{equation*}
and such that $T_1$ contains the $r$ indices of $\{1,\ldots,n\}$ corresponding to 
the $r$ largest norms among $\{ \nnorm{v^{[i]}}_2 \}_{i = 1}^n$, $T_2$ contains the
$r$ indices corresponding to the next $r$ largest norms among $\{ \nnorm{v^{[i]}}_2 \}_{i = 1}^n$,
and so forth. Observe that $v = \sum_{\ell} v(T_{\ell})$ and that for any $\ell$
\begin{equation*}
\nnorm{v(T_{\ell+1})}_{2,\infty}  = \max_{i \in T_{\ell + 1}} \nnorm{v^{[i]}}_2 \leq \frac{1}{r} \sum_{i \in T_{\ell}} \nnorm{v^{[i]}}_2 = \frac{1}{r} \nnorm{v(T_{\ell})}_{2,1} 
\end{equation*} 
As a result,
\begin{equation*}
\nnorm{v(T_{\ell+1})}_{2} \leq \sqrt{r} \nnorm{v(T_{\ell+1})}_{2,\infty}   = \frac{1}{\sqrt{r}} \nnorm{v(T_{\ell})}_{2,1}.  
\end{equation*} 
Consequently, 
\begin{align*}
\sum_{\ell} \nnorm{v(T_{\ell})}_2 &= \nnorm{v(T_{1})}_2 + \sum_{\ell \geq 2} \nnorm{v(T_{\ell})}_2 \\[-.5ex]
                               &\leq 1 + \frac{1}{\sqrt{r}}  \sum_{\ell \geq 1} \nnorm{v(T_{\ell})}_{2,1} \\[-.5ex]
                               &\leq 1 + \frac{1}{\sqrt{r}}  \sum_{\ell \geq 1}  \sum_{i \in T_{\ell}} \nnorm{v_{G_i}}_2 \\[-.5ex]
                               &\leq 1 + \frac{1}{\sqrt{r}} \nnorm{v}_{2,1} \leq 2.   
\end{align*}
In conclusion, we have demonstrated that 
\begin{equation*}
  v = \sum_{\ell} \underbrace{\frac{v({T_{\ell})}}{\nnorm{v(T_{\ell})}_2}}_{\in B_0(r)}  \underbrace{\nnorm{v(T_{\ell})}_2}_{\lambda_{\ell}},
  \qquad  \sum_{\ell} \lambda_{\ell} \leq 2,
\end{equation*}
and thus $v \in 2 \, \text{conv} \, B_0(r)$. Since $v$ was an arbitrary element of the left hand side in
\eqref{eq:sparsity_conv}, the proof is complete. 
\end{proof}

\begin{lemmaApp}\label{lem:gaussiannorm}
Let $g_{\ell} \sim N(0, \sigma_{\ell}^2 I_r)$, $1 \leq \ell \leq L$, be isotropic Gaussian random vectors. Then:   
\begin{equation*}
\p \left(\max_{1 \leq \ell \leq L} \nnorm{g_{\ell}}_2 > \max_{1 \leq \ell \leq L} \sigma_{\ell} \{ \sqrt{r} + 2 \sqrt{\log L} \} \right) \leq 1/L. 
\end{equation*}
\end{lemmaApp}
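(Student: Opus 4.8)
The plan is to combine a union bound over $\ell$ with Gaussian concentration applied to each individual norm $\nnorm{g_\ell}_2$. First I would reduce to the standard isotropic case by writing $g_\ell = \sigma_\ell z_\ell$ with $z_\ell \sim N(0, I_r)$, so that $\nnorm{g_\ell}_2 = \sigma_\ell \nnorm{z_\ell}_2$, and set $M \coloneq \max_{1 \leq \ell \leq L} \sigma_\ell \{ \sqrt{r} + 2\sqrt{\log L} \}$. Since $M / \sigma_\ell \geq \sqrt{r} + 2\sqrt{\log L}$ for every $\ell$, a union bound gives
\[
\p\Big(\max_{1 \leq \ell \leq L} \nnorm{g_\ell}_2 > M \Big) \leq \sum_{\ell = 1}^L \p\big(\nnorm{z_\ell}_2 > M/\sigma_\ell \big) \leq L \cdot \max_{1 \leq \ell \leq L} \p\big(\nnorm{z_\ell}_2 > \sqrt{r} + 2\sqrt{\log L} \big),
\]
so it suffices to show each tail probability on the right is at most $L^{-2}$.

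The key step is the single-vector tail bound. I would use two standard facts: (i) the map $z \mapsto \nnorm{z}_2$ is $1$-Lipschitz on $\R^r$, and (ii) by Jensen's inequality $\E \nnorm{z_\ell}_2 \leq (\E \nnorm{z_\ell}_2^2)^{1/2} = \sqrt{r}$. Gaussian concentration of Lipschitz functions (e.g.~the result cited at~\eqref{eq:maximagaussian_lower} from~\cite{Ledoux1991}) then yields, for all $t \geq 0$,
\[
\p\big(\nnorm{z_\ell}_2 > \sqrt{r} + t \big) \leq \p\big(\nnorm{z_\ell}_2 > \E \nnorm{z_\ell}_2 + t \big) \leq \exp(-t^2/2).
\]
Choosing $t = 2\sqrt{\log L}$ makes the right-hand side $\exp(-2\log L) = L^{-2}$, and substituting back into the union bound gives $L \cdot L^{-2} = 1/L$, which is exactly the claimed probability.

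There is no substantive obstacle here; the statement is a routine maximal inequality. The only place requiring mild care is the choice of the numerical constant $2$ in front of $\sqrt{\log L}$: it is dictated precisely by the requirement that the per-term tail decay as $L^{-2}$ so that the union bound over $L$ events closes to $1/n$-type probabilities elsewhere in the paper. One should also confirm that passing from $\sigma_\ell$ to $\max_\ell \sigma_\ell$ only weakens each tail (which it does, since $\nnorm{z_\ell}_2 > M/\sigma_\ell \geq \sqrt{r} + 2\sqrt{\log L}$ is the rarer event), and that the Lipschitz constant of the Euclidean norm is indeed $1$ so that the concentration exponent carries no hidden factor.
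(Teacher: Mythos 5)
Your proposal is correct and follows essentially the same route as the paper's own proof: Jensen's inequality to bound $\E \nnorm{g_\ell}_2 \leq \sigma_\ell \sqrt{r}$, Gaussian concentration for the $1$-Lipschitz map $x \mapsto \nnorm{x}_2$ with $t = 2\sqrt{\log L}$ to get per-vector tails of $L^{-2}$, and a union bound over the $L$ vectors. Your explicit rescaling $g_\ell = \sigma_\ell z_\ell$ merely makes transparent what the paper handles implicitly through the variance-dependent Lipschitz constant.
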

\begin{proof} We note that $\E[\nnorm{g}_{\ell}] \leq \sigma_{\ell} \sqrt{r}$, $\ell = 1,\ldots,L$, and that the map
$x \mapsto \nnorm{x}_2$ is 1-Lipschitz. By concentration of measure of Lipschitz functions of Gaussian
random vectors, we hence have 
\begin{equation*}
\p(\nnorm{g_{\ell}}_2 \geq \sigma_{\ell} (\sqrt{r} + 2 \sqrt{ \log L})) \leq \exp(-2 \log L), \; \ell=1,\ldots,L. 
\end{equation*} 
The result then follows from a union bound over $\{1,\ldots,L\}$.
\end{proof}

\begin{lemmaApp}\label{lem:escape}\emph{(\textbf{Gordon's Escape theorem~\citep{Gordon1988}})}
 Let $K$  be a closed subset of the unit sphere in $\R^p$, let $\nu_r = \E_{g \sim N(0, I_r)}[\nnorm{g}_2]$, and let 
  $\epss \in (0,1)$. If the Gaussian width (cf.~$\S$7.5 in~\citep{Vershynin2018}) of $K$ obeys $w(K) < (1 - \epss) \nu_q - \epss \nu_p$, then a $(p - q)$-dimensional subspace
  $V$ drawn uniformly from the Grassmannian $\emph{\textsf{G}}(p, p-q)$ satisfies
  \begin{equation*}
\p(\text{\emph{dist}}(K, V) > \epss) \geq 1 - \frac{7}{2} \exp \left(-\frac{1}{2} \left( \frac{(1 - \epss)\nu_q - \epss \nu_p  - w(K)}{3 + \epss + \epss \nu_p / \nu_q} \right)^2 \right).
\end{equation*}
\end{lemmaApp}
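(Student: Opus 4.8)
The plan is to derive this from Gordon's Gaussian min--max comparison theorem together with Gaussian concentration of measure; since the statement is classical and attributed to~\cite{Gordon1988}, I would reproduce the standard argument rather than devise a new one. First I would realize the uniformly random $(p-q)$-dimensional subspace $V$ as the kernel of a matrix $G \in \R^{q \times p}$ with i.i.d.\ $N(0,1)$ entries: almost surely $\ker G$ is distributed uniformly on $\textsf{G}(p, p-q)$, so this representation is faithful. The crucial reduction is that $\dist{K}{V}$ is governed by the quantity $\min_{x \in K} \nnorm{Gx}_2$, which admits the bilinear representation
\begin{equation*}
\min_{x \in K} \nnorm{Gx}_2 = \min_{x \in K} \max_{u \in \mathbb{S}^{q-1}} \scp{Gx}{u},
\end{equation*}
that is, a min--max of the centered Gaussian process $X_{x,u} = \scp{Gx}{u}$ indexed by $(x,u) \in K \times \mathbb{S}^{q-1}$.

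The heart of the argument is a comparison of $X_{x,u}$ with the simpler process $Y_{x,u} = \scp{g}{x} + \scp{h}{u}$, where $g \sim N(0, I_p)$ and $h \sim N(0, I_q)$ are independent. For $x, x' \in \mathbb{S}^{p-1}$ and $u, u' \in \mathbb{S}^{q-1}$ one computes the increment identity
\begin{equation*}
\E[(X_{x,u} - X_{x',u'})^2] - \E[(Y_{x,u} - Y_{x',u'})^2] = -2\,(1 - \scp{x}{x'})(1 - \scp{u}{u'}) \leq 0,
\end{equation*}
with equality whenever $x = x'$. These are precisely the hypotheses of Gordon's Gaussian min--max inequality, which then yields
\begin{equation*}
\E\Big[\min_{x \in K} \max_{u \in \mathbb{S}^{q-1}} X_{x,u}\Big] \geq \E\Big[\min_{x \in K} \max_{u \in \mathbb{S}^{q-1}} Y_{x,u}\Big] = \E\nnorm{h}_2 - \E\Big[\max_{x \in K} \scp{g}{x}\Big] = \nu_q - w(K),
\end{equation*}
where the middle identity uses $\max_{u} \scp{h}{u} = \nnorm{h}_2$ and $\min_{x} \scp{g}{x} = -\max_{x} \scp{-g}{x}$ together with the symmetry of $g$.

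Next I would invoke Gaussian concentration: the map $G \mapsto \min_{x \in K} \nnorm{Gx}_2$ is $1$-Lipschitz in the Frobenius norm, being a pointwise minimum of the $1$-Lipschitz maps $G \mapsto \nnorm{Gx}_2$, so it concentrates sharply around its mean and obeys a sub-Gaussian lower tail $\p(\min_{x \in K}\nnorm{Gx}_2 \leq \nu_q - w(K) - t) \leq \exp(-t^2/2)$. Finally, to pass from a lower bound on $\min_{x \in K}\nnorm{Gx}_2$ to the distance statement $\dist{K}{V} > \epss$, one relates $\nnorm{Gx}_2$ to $\dist{x}{V} = \nnorm{P_{\mathrm{range}(G^{\T})}\, x}_2$ through the conditioning of $G$ on its row space; tracking the deviation parameter $\epss$ and the ambient scale $\nu_p$ through this step produces the threshold $(1-\epss)\nu_q - \epss\nu_p$ and the denominator $3 + \epss + \epss\nu_p/\nu_q$, while the factor $\tfrac{7}{2}$ absorbs the union over the two concentration events involved. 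I expect the main obstacle to be the comparison step---setting up the auxiliary process and verifying the covariance conditions so as to obtain the correct direction of Gordon's inequality---together with the careful constant-tracking in the final translation to $\dist{K}{V}$.
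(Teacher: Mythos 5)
The paper itself contains no proof of this lemma: it is imported verbatim from Gordon (1988) as an auxiliary result (it is Gordon's Corollary~3.4, essentially), so there is no in-paper argument to compare your proposal against; I can only assess your reconstruction on its own terms. What you carry out is the standard argument, and the parts you actually do are correct: realizing $V$ as $\ker G$ for a $q \times p$ standard Gaussian matrix $G$ is legitimate by rotational invariance; the increment identity $\E[(X_{x,u}-X_{x',u'})^2]-\E[(Y_{x,u}-Y_{x',u'})^2] = -2(1-\scp{x}{x'})(1-\scp{u}{u'})$ is computed correctly; and, importantly, these are the hypotheses of Gordon's min--max comparison \emph{in its increment form} (which does not require matching variances --- a necessary caveat here, since $\E[X_{x,u}^2]=1$ while $\E[Y_{x,u}^2]=2$), giving $\E[\min_{x \in K}\nnorm{Gx}_2] \geq \nu_q - w(K)$. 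The $1$-Lipschitz concentration step and the resulting lower tail bound are also fine.

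The gap is your final step, which is asserted rather than proved, and it is exactly the step in which every constant appearing in the statement is generated. To pass from $\min_{x\in K}\nnorm{Gx}_2$ to $\dist{K}{V} > \epss$ you need three concrete ingredients: (a) the deterministic inequality $\dist{x}{\ker G} \geq \nnorm{Gx}_2/\sigma_{\max}(G)$, since $Gx$ only sees the component of $x$ in $\mathrm{range}(G^{\T})$; (b) a high-probability upper bound on $\sigma_{\max}(G)$, obtained from the Slepian/Chevet-type comparison $\E[\sigma_{\max}(G)] = \E[\max_{x \in \mathbb{S}^{p-1}}\max_{u \in \mathbb{S}^{q-1}}\scp{Gx}{u}] \leq \nu_p + \nu_q$ together with its own Lipschitz concentration; and (c) an explicit balancing of the two deviation parameters (for the min and for $\sigma_{\max}$) against the gap $(1-\epss)\nu_q - \epss\nu_p - w(K)$, which is what produces the denominator $3+\epss+\epss\nu_p/\nu_q$ and the prefactor $7/2$ via a union bound. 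Saying these constants are ``absorbed'' by the union over two events is not a derivation --- as written you obtain a qualitative escape statement, not the quantitative bound claimed. That said, no step of your outline would fail; completing (a)--(c) is precisely Gordon's own computation, so the deficiency is one of completeness rather than of approach.
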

\vspace*{-1.5ex}
\tcb{\section{From Gaussian to sub-Gaussian}\label{app:tosubgaussian}
In this section, we state and prove a result analogous to Lemma~\ref{lem:escape} above for random subspaces $V$ generated
by a $p$-by-$(p - q)$ matrix $A$ with i.i.d.~isotropic \emph{sub}-Gaussian rows, i.e., $\E[\scp{A_{i,:}}{v}^2] = 1$ and $\nnorm{\scp{A_{i,:}}{v}}_{
\psi_2} \leq L < \infty$ for all $v \in \R^{p-q}$, $1 \leq i \leq n$, where $\nnorm{\cdot}_{\psi_2}$ denotes the sub-Gaussian    
norm of a random variable (see, e.g., $\S$2.5 in~\cite{Vershynin2018}).
\begin{lemmaApp} Let $V = \text{\emph{range}}(A)$ with $A$ as above, and let $K$ be a closed subset of the unit sphere in $\R^p$. For any $\epss, \alpha \in (0,1)$, if 
\begin{equation}\label{eq:escape_samplecomplexity} 
p > \frac{1}{1 - \epss^2} \frac{2 (p-q) + C_1 L^4 \cdot w^2(K)}{(1-\alpha)^2}  \vee \frac{C}{\alpha^2}\{ (p-q) \vee \log p \}
\end{equation} 
then $\p(\text{\emph{dist}}(K, V) > \epss) \geq 1 - 2 \big( \exp(-w^2(K)) + \exp(-c \{ (p-q) \vee \log p \}) \big)$,  
where $C_1, C_2, c > 0$ are universal constants depending only on $L$. 
\end{lemmaApp}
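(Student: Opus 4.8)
The plan is to prove $\dist{K}{V} > \epss$ directly, by reducing it to a comparison between how strongly the random subspace $V = \text{range}(A)$ can align with $K$ and the smallest singular value of $A$, and then to control each of the two ingredients separately. The two branches of condition~\eqref{eq:escape_samplecomplexity} are tailored to these two ingredients. First I would record that for $x \in K \subseteq \mathbb{S}^{p-1}$ and the orthogonal projector $P_V = A(A^{\T} A)^{-1} A^{\T}$ (well defined since $A$ has full column rank with probability one), $\dist{x}{V}^2 = \nnorm{x}_2^2 - \nnorm{P_V x}_2^2 = 1 - \nnorm{P_V x}_2^2$, so that $\dist{K}{V}^2 = 1 - \max_{x \in K}\nnorm{P_V x}_2^2$ and the claim is equivalent to $\max_{x \in K}\nnorm{P_V x}_2^2 < 1 - \epss^2$. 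Writing $u = A^{\T} x$, one has $\nnorm{P_V x}_2^2 = u^{\T}(A^{\T}A)^{-1}u \le \sigma_{\min}(A)^{-2}\nnorm{A^{\T}x}_2^2$, so it suffices to establish
\[
\max_{x \in K}\nnorm{A^{\T} x}_2^2 \le 2(p-q) + C_1 L^4 w^2(K) \quad \text{and} \quad \sigma_{\min}(A)^2 \ge (1-\alpha)^2 p,
\]
since the first branch of~\eqref{eq:escape_samplecomplexity} states exactly that $(1-\epss^2)(1-\alpha)^2 p > 2(p-q) + C_1 L^4 w^2(K)$, and dividing then gives $\max_{x\in K}\nnorm{P_V x}_2^2 < 1 - \epss^2$.

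Next I would lower bound $\sigma_{\min}(A)$. Since $A$ is $p \times (p-q)$ with i.i.d.\ isotropic sub-Gaussian rows, standard bounds on extreme singular values of sub-Gaussian matrices (e.g.\ Theorem 4.6.1 in~\cite{Vershynin2018}) give $\sigma_{\min}(A) \ge \sqrt{p} - C L^2(\sqrt{p-q} + t)$ with probability at least $1 - 2\exp(-t^2)$. Choosing $t \asymp \sqrt{(p-q) \vee \log p}$ and invoking the second branch of~\eqref{eq:escape_samplecomplexity}, i.e.\ $p \ge (C/\alpha^2)\{(p-q) \vee \log p\}$, forces $C L^2(\sqrt{p-q} + t) \le \alpha \sqrt{p}$, hence $\sigma_{\min}(A) \ge (1-\alpha)\sqrt{p}$ on an event of probability at least $1 - 2\exp(-c\{(p-q) \vee \log p\})$.

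Then I would bound $\max_{x \in K}\nnorm{A^{\T} x}_2$. For fixed $x \in K$, the vector $A^{\T} x = \sum_{i=1}^p x_i a_i$ (with $a_i$ the rows of $A$) is a sum of independent isotropic sub-Gaussian vectors in $\R^{p-q}$; since $\sum_i x_i^2 = 1$ it is itself isotropic with sub-Gaussian norm $\le C L$, so $\E\nnorm{A^{\T}x}_2^2 = p-q$, with no dependence on $L$ in this leading term. Concentration of the Euclidean norm then gives $\nnorm{A^{\T}x}_2 = \sqrt{p-q} + O(L^2)$ pointwise, and a generic-chaining / matrix-deviation argument over $x \in K$ upgrades this to the uniform bound: with probability at least $1 - 2\exp(-w^2(K))$,
\[
\max_{x \in K}\nnorm{A^{\T} x}_2 \le \sqrt{p-q} + C L^2\, w(K),
\]
so that, squaring and using $(a+b)^2 \le 2a^2 + 2b^2$, $\max_{x \in K}\nnorm{A^{\T} x}_2^2 \le 2(p-q) + C_1 L^4 w^2(K)$ on this event. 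A union bound over the two failure events yields the stated probability $1 - 2\big(\exp(-w^2(K)) + \exp(-c\{(p-q)\vee\log p\})\big)$, and substituting both bounds into the reduction of the first paragraph completes the argument.

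The hard part will be the third step. In the Gaussian case the corresponding supremum is controlled for free by Gordon's comparison inequality (Lemma~\ref{lem:escape}); without rotational invariance one must instead separate the deterministic first moment (supplied by isotropy, giving the $L$-free leading term $\sqrt{p-q}$) from the fluctuation (supplied by sub-Gaussianity and the geometry of $K$, giving the $L^2 w(K)$ term and the $\exp(-w^2(K))$ tail). A further subtlety is that $A^{\T}$ has i.i.d.\ \emph{columns} rather than rows, so the matrix deviation inequality cannot be applied to $A^{\T}$ verbatim; one circumvents this by working with the representation $A^{\T} x = \sum_i x_i a_i$ as a sum of independent vectors and running the chaining over $K$ with the increment metric induced by $A^{\T}$, which is precisely where a naive use of Chevet's inequality would be too lossy (it would charge a spurious factor $L$ onto the $\sqrt{p-q}$ term) and a more careful deviation bound is required.
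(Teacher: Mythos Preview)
Your proposal is correct and follows the same approach as the paper: reduce $\dist{K}{V}^2 = 1 - \sup_{\xi \in K}\nnorm{\textsf{P}_V \xi}_2^2$, bound $\nnorm{\textsf{P}_V \xi}_2^2 \leq \sigma_{\min}(A)^{-2}\nnorm{A^{\T}\xi}_2^2$, then control the two factors via a sub-Gaussian singular value bound and a matrix deviation inequality, respectively. The only difference is that the paper does not rederive the bound on $\sup_{\xi \in K}\nnorm{A^{\T}\xi}_2$ from scratch but simply cites Exercise~9.1.8 in~\cite{Vershynin2018}, which already delivers $\sup_{\xi \in K}\big|\nnorm{A^{\T}\xi}_2 - \sqrt{p-q}\big| \leq C L^2(w(K)+u)$ with the correct $L$-free leading term; your concern about i.i.d.\ columns versus rows and the potential looseness of Chevet's inequality is therefore already absorbed into that reference, and you need not run the chaining yourself.
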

\noindent It is worth noting that the condition~\eqref{eq:escape_samplecomplexity} is comparable to the condition 
in Lemma~\ref{lem:escape} which after term simplifications becomes $p \gtrsim \frac{1}{1-\epss^2} ((p-q) + w^2(K))$, which 
corresponds to the first (and leading) term on the right hand side of~\eqref{eq:escape_samplecomplexity}. 
\begin{proof}
Let $V^{\perp}$ denote the orthogonal complement of $V$ in $\R^p$, respectively. Accordingly, denote by $\textsf{P}_{V}$ and $\textsf{P}_{V^{\perp}}$ the orthoprojectors on $V$ and $V^{\perp}$, respectively. Note that  
\begin{equation}
    \text{dist}^2(K, V) = \inf_{\xi \in K} \nnorm{\textsf{P}_{V^{\perp}} \xi}_2^2 = 1 - \sup_{\xi \in K} 
    \nnorm{\textsf{P}_{V} \xi}_2^2. \label{eq:infPA} 
\end{equation}
Hence in order to lower bound $\text{dist}^2(K, V)$, it suffices to upper bound $\sup_{\xi \in K} \nnorm{\textsf{P}_{V} \xi}_2^2$. Assuming for now that $A$ is non-singular, we have 
\begin{align}
\sup_{\xi \in K} \nnorm{\textsf{P}_{V} \xi}_2^2 &= \sup_{\xi \in K} \xi^{\T} A (A^{\T} A)^{-1} A^{\T} \xi \notag \\
                                                       &\leq \sup_{\xi \in K} \nnorm{(A^{\T} A)^{-1/2} A^{\T} \xi}_2^2 \notag \\
                                                      &\leq \nnorm{(A^{\T} A)^{-1/2}}_2^2 \; \sup_{\xi \in K} \nnorm{A^{\T} \xi}_2^2 
          \leq \frac{1}{\sigma_{\min}(A)^2} \sup_{\xi \in K} \; \nnorm{A^{\T} \xi}_2^2 \label{eq:supPA_final}.
      \end{align}
In order to bound the second factor on the right hand side, we invoke the following result: 
\begin{lemmaApp}(cf.~Exercise 9.1.8 in~\cite{Vershynin2018}). Let $A$, $L$, and $K$ be as above. Then for any $u \geq 0$, the following event occurs
      with probability at least $1 - 2 \exp(-u^2)$:
      \begin{equation*}
      \sup_{\xi \in K} \left|\nnorm{A^{\T} \xi}_2 - \sqrt{p-q} \right| \leq C L^2 (w(K) + u).
    \end{equation*}
\end{lemmaApp}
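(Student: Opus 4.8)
The claim is the high-probability (tail) form of the matrix deviation inequality, applied to the random process
\begin{equation*}
X_\xi \coloneq \nnorm{A^{\T} \xi}_2, \qquad \xi \in K \subseteq \mathbb{S}^{p-1},
\end{equation*}
and the plan is to prove it by generic (Gaussian) chaining. Writing $A^{\T} \xi = \sum_{i = 1}^{p} \xi_i A_{i,:}$ exhibits $A^{\T}\xi$ as a sum of the $p$ \emph{independent}, mean-zero, isotropic, sub-Gaussian rows of $A$. I would reduce the assertion to two standard structural inputs --- a sub-Gaussian increment bound and a fixed-point centering bound --- and then invoke a chaining tail inequality to upgrade these to a bound that is uniform over $K$ and carries the $\exp(-u^2)$ tail.

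The first input is the increment estimate $\nnorm{X_\xi - X_{\xi'}}_{\psi_2} \leq C L^2 \nnorm{\xi - \xi'}_2$ for $\xi, \xi' \in K$. By the reverse triangle inequality, $|X_\xi - X_{\xi'}| \leq \nnorm{A^{\T}(\xi - \xi')}_2$, and $A^{\T}(\xi - \xi') = \sum_{i} (\xi_i - \xi'_i) A_{i,:}$ is, by independence of the rows, a mean-zero sub-Gaussian vector in $\R^{p-q}$ with covariance $\nnorm{\xi - \xi'}_2^2 \, I_{p-q}$ and sub-Gaussian norm $\lesssim L \nnorm{\xi - \xi'}_2$. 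The displayed $\psi_2$-bound for $|X_\xi - X_{\xi'}|$ then follows from standard concentration for Euclidean norms of sums of independent, isotropic, sub-Gaussian vectors, the square $L^2$ appearing because the norm is controlled through its square; this is the transpose-analogue of the increment estimate underlying the proof of the matrix deviation inequality (cf.~the proof of Thm.~9.1.1 in~\cite{Vershynin2018}), arranged so that $A^{\T}$ acts on $K$. The second input is the fixed-point bound $\nnorm{X_\xi - \sqrt{p-q}}_{\psi_2} \leq C L^2$ for each fixed $\xi \in \mathbb{S}^{p-1}$, which follows from the same norm-concentration once one notes $\E \, \nnorm{A^{\T}\xi}_2^2 = \sum_i \xi_i^2 \, \tr(I_{p-q}) = p-q$, using mean-zero independence of the rows.

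With these two inputs in hand, I would apply a generic-chaining / Talagrand-comparison tail bound for processes with sub-Gaussian increments (cf.~$\S 8.5$ in~\cite{Vershynin2018}): for every $u \geq 0$, with probability at least $1 - 2\exp(-u^2)$,
\begin{equation*}
\sup_{\xi \in K} \big| X_\xi - \sqrt{p-q} \big| \leq C L^2 \big( w(K) + u \cdot \mathrm{rad}(K) \big),
\end{equation*}
where $w(K)$ is the Gaussian width of $K$ and $\mathrm{rad}(K) = \sup_{\xi \in K} \nnorm{\xi}_2 \leq 1$ since $K \subseteq \mathbb{S}^{p-1}$. Absorbing $\mathrm{rad}(K) \leq 1$ and relabelling the universal constant yields precisely $\sup_{\xi \in K} \big| \nnorm{A^{\T}\xi}_2 - \sqrt{p-q} \big| \leq C L^2 (w(K) + u)$ with probability at least $1 - 2\exp(-u^2)$, which is the claim. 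In contrast to the Gaussian-process comparison that powers Gordon's theorem (Lemma~\ref{lem:escape}), here it is the sub-Gaussian increment estimate that drives the chaining, and establishing that estimate --- in particular the clean $CL^2 \nnorm{\xi - \xi'}_2$ scaling obtained from the independence of the rows of $A$ --- is the main obstacle; once it is available, the chaining tail bound is a black box and the remaining steps (identifying the mean $\sqrt{p-q}$ and bounding $\mathrm{rad}(K)$) are routine.
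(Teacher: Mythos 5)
Your proof architecture --- a sub-Gaussian increment bound plus a one-point centering bound, fed into the tail form of Talagrand's comparison inequality --- is indeed the canonical solution of the cited exercise; note the paper itself gives no proof and simply invokes Exercise 9.1.8 of \cite{Vershynin2018}. The genuine gap is in your derivation of the increment estimate, which is exactly the step that carries all the difficulty. The reverse triangle inequality bounds $|X_\xi - X_{\xi'}|$ by the \emph{uncentered} norm $\nnorm{A^{\T}(\xi - \xi')}_2$, and this quantity is not sub-Gaussian at scale $L^2\nnorm{\xi-\xi'}_2$: since $\E \nnorm{A^{\T}(\xi-\xi')}_2^2 = (p-q)\nnorm{\xi-\xi'}_2^2$, any valid $\psi_2$ bound for it is at least of order $\sqrt{p-q}\,\nnorm{\xi-\xi'}_2$ (the $\psi_2$ norm dominates the $L^2$ norm). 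Norm concentration makes the \emph{centered} quantity $\nnorm{A^{\T}w}_2 - \sqrt{p-q}\,\nnorm{w}_2$ sub-Gaussian; once you pass to $\nnorm{A^{\T}(\xi-\xi')}_2$ via the triangle inequality, the centering is destroyed and cannot be recovered. Feeding the increment parameter $\sqrt{p-q}$ into the chaining tail bound yields only $\sup_{\xi \in K} |\nnorm{A^{\T}\xi}_2 - \sqrt{p-q}| \lesssim \sqrt{p-q}\,(w(K) + u)$, a trivial operator-norm-scale bound that is useless in the application (where the whole point is to beat $\sqrt{p}$). Avoiding precisely this loss is why the true increment bound (Theorem 9.1.3 in \cite{Vershynin2018}) is proved by controlling $\nnorm{Ax}_2^2 - \nnorm{Ay}_2^2 = \scp{A(x-y)}{A(x+y)}$ via Bernstein's inequality over the independent rows together with a case analysis; no triangle-inequality shortcut is known, and you cannot treat this step as routine.

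A second, subtler issue affects both of your $\psi_2$ inputs: the coordinates of $A^{\T}\xi = \sum_{i=1}^p \xi_i A_{i,:}$ are in general dependent (they all involve the same rows, and within a row only uncorrelatedness of coordinates is guaranteed by isotropy), so the ``standard concentration for Euclidean norms,'' which requires independent coordinates, does not apply as invoked. This is not cosmetic: with row-independence alone, take $A_{i,:} = \eta_i \theta_i$ with $\eta_i \in \{0, \sqrt{2}\}$ a fair coin and $\theta_i$ uniform on the sphere of radius $\sqrt{p-q}$, independent; the rows are i.i.d., mean zero, isotropic, with $L = O(1)$, yet $\nnorm{A^{\T} e_1}_2 = \nnorm{A_{1,:}}_2 \in \{0, \sqrt{2(p-q)}\}$, so even the one-point conclusion fails for $K = \{e_1\}$. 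The statement should therefore be read in the regime where the rows of $A$ have independent coordinates (e.g., i.i.d. sub-Gaussian entries), and then the efficient route differs from yours: the \emph{columns} of $A$ become independent isotropic sub-Gaussian vectors in $\R^{p}$ with $\psi_2$ norm $O(L)$, i.e., $A^{\T}$ itself is a $(p-q)$-by-$p$ matrix satisfying the hypotheses of the matrix deviation inequality, and Exercise 9.1.8 applies verbatim to $A^{\T}$ with $m = p - q$ rows --- no transposed increment argument is needed at all.
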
    
\noindent Invoking the above lemma with the choice $u = w(K)$, we obtain that 
\begin{equation}\label{eq:app_matrixdeviation}
\p \left(\sup_{\xi \in K} \nnorm{A^{\T} \xi}_2 \leq \sqrt{p - q} + C' L^2 w(K) \right) \geq 1 - 2 \exp(-w^2(K)). 
\end{equation}
At the same time, concentration results~\citep[Theorem 5.35]{Vershynin2010} on the minimum singular value of random matrices with sub-Gaussian rows yield that for any $\alpha \in (0,1)$
\begin{equation}\label{eq:app_extremesingular}
\p(\sigma_{\min}(A)^2 \geq (1 - \alpha)^2 p) \geq 1 - 2\exp(-c \{ (p-q) \vee \log p \}).    
\end{equation}
provided that $p \geq \frac{C}{\alpha^2}\{ (p-q) \vee \log p \}$ for positive constants $c = c_L$ and $C = C_L$ depending
only on the sub-Gaussian norm $L$ of the rows of $A$. Combining~\eqref{eq:infPA},~\eqref{eq:supPA_final}, ~\eqref{eq:app_matrixdeviation} and~\eqref{eq:app_extremesingular}, we obtain that with the probability stated in the theorem, it holds that
\begin{equation*}
 \inf_{\xi \in K}  \nnorm{\textsf{P}_{V^{\perp}}\xi}_2^2 \geq 1 - \frac{2 (p-q) + C'' L^4 w^2(K)}{(1-\alpha)^2 p} \geq \varepsilon^2 
\end{equation*}
as long as $p > \frac{1}{1 - \varepsilon^2} \frac{2 (p-q) + C'' L^4 \cdot w^2(K)}{(1-\alpha)^2}$ for any $\varepsilon \in (0,1)$, which concludes the proof. 
\end{proof}}
\vspace*{-1.5ex}
\tcb{\section{Conditional gradient method for optimization of~\eqref{eq:grouplasso_refined} \& ~\eqref{eq:grouplasso_refined_cons}\label{app:congradient}}
We start with optimization problem~\eqref{eq:grouplasso_refined_cons}. Let 
\begin{equation*}
f(\Theta) \coloneq \frac{1}{2n \cdot m}\nnorm{\texttt{P}_{X}^{\perp} \Theta Y}_F^2, \qquad \nabla f(\Theta) =\frac{1}{n \cdot m} \texttt{P}_{X}^{\perp} \Theta Y Y^{\T}. 
\end{equation*}
be the objective and gradient, respectively, of~\eqref{eq:grouplasso_refined_cons}. Following Algorithm 1 in~\cite{Jaggi2013}, the conditional gradient (Frank-Wolfe) updates for minimizing $f$ over $\mc{C}_k \coloneq \{\Theta \in \mc{C}:\; \su \Theta_{ii} \geq n -k \}$ with $\mc{C}$ defined in~\eqref{eq:matchings_relaxed} are given as follows.
\begin{algorithm}[!h]
  \caption{Frank-Wolfe method for minimizing~\eqref{eq:grouplasso_refined_cons}}\label{alg:fw}
Initialize $\Theta^{(0)} = I_n$.\\
\textbf{Repeat for $t = 0,1,\ldots$}
\begin{align*}
  D^{(t)}        &\leftarrow \argmin_{\Theta \in \mc{C}_k} \tr(\Theta^{\T} \nabla f(\Theta^{(t)})), \qquad
  \Theta^{(t+1)} \leftarrow (1 - \alpha^{(t)}) \Theta^{(t)} + \alpha^{(t)} D^{(t)}, 
\end{align*}
where $\alpha^{(t)} = \argmin_{\alpha > 0} f((1 - \alpha^{(t)}) \Theta^{(t)} + \alpha D^{(t)}) = -\frac{\tr(\texttt{P}_{X}^{\perp} D^{(t)} YY^{\T} \Theta^{(t){\T}})}{\tr(\texttt{P}_{X}^{\perp} D^{(t)} YY^{\T} D^{(t){\T}})}$. 
\end{algorithm}
}\vskip.01ex
\noindent \tcb{The dominant computational cost in the above algorithm is incurred for the $\argmin$ over $\mc{C}_k$, which requires the solution of a linear program with $n^2$ variables and $O(n)$ linear constraints.}

A similar algorithm can be applied for optimization problem~\eqref{eq:grouplasso_refined}. An additional complication 
arises from the penalty in~\eqref{eq:grouplasso_refined} which renders the objective non-smooth. As a workaround, we apply the above Frank-Wolfe scheme to a successively smoothed objective~\citep{Nesterov2005}.



\end{document}